\pdfoutput=1
\documentclass{duke_preprint}

\usepackage{amsmath,amsfonts,bm}

\def\eqref#1{equation~\ref{#1}}

\def\1{\bm{1}}

\DeclareMathAlphabet{\mathsfit}{\encodingdefault}{\sfdefault}{m}{sl}
\SetMathAlphabet{\mathsfit}{bold}{\encodingdefault}{\sfdefault}{bx}{n}

\DeclareMathOperator{\spec}{spec}

\usepackage{hyperref}
\usepackage{url}

\usepackage{amssymb,amsthm,mathtools}
\usepackage{booktabs}
\usepackage{graphicx}
\AddToHook{env/figure/begin}{\setlength{\abovecaptionskip}{3pt}\setlength{\belowcaptionskip}{0pt}}
\usepackage{float}
\renewcommand{\bottomfraction}{0.9}
\renewcommand{\floatpagefraction}{0.75}

\newif\ifappendixcontents
\let\linkedaddcontentsline\addcontentsline
\renewcommand{\addcontentsline}[3]{%
    \ifappendixcontents\linkedaddcontentsline{#1}{#2}{#3}\fi}
\renewcommand{\eqref}[1]{\textup{(\ref{#1})}}

\theoremstyle{plain}
\newtheorem{theorem}{Theorem}
\newtheorem{lemma}[theorem]{Lemma}

\newtheorem{corollary}[theorem]{Corollary}
\newtheorem*{gmrfstatement}{Theorem~\ref{thm:gmrf-locality}}
\newtheorem*{radiusstatement}{Corollary~\ref{crl:bounded-range}}
\newtheorem{definition}{Definition}
\newtheorem{assumption}{Assumption}

\usepackage[breakable,skins]{tcolorbox}
\tcbset{statementbox/.style={
    enhanced jigsaw, breakable, lines before break=6,
    colback=white, colframe=black!35,
    boxrule=0.6pt, sharp corners,
    boxsep=0pt, left=5pt, right=5pt, top=3pt, bottom=3pt,
    before skip=6pt, after skip=6pt,
    toprule at break=0pt, bottomrule at break=0pt, pad at break=3pt,
    colbacktitle=white, coltitle=black, fonttitle=\normalfont\bfseries,
    titlerule=0pt, toptitle=3pt, bottomtitle=2pt
}}
\tcolorboxenvironment{theorem}{statementbox,title after break={Theorem~\thetheorem\ (continued)}}
\tcolorboxenvironment{lemma}{statementbox,title after break={Lemma~\thelemma\ (continued)}}
\tcolorboxenvironment{proposition}{statementbox,title after break={Proposition~\theproposition\ (continued)}}
\tcolorboxenvironment{corollary}{statementbox,title after break={Corollary~\thecorollary\ (continued)}}
\tcolorboxenvironment{gmrfstatement}{statementbox,title after break={Theorem~\ref{thm:gmrf-locality}\ (continued)}}
\tcolorboxenvironment{radiusstatement}{statementbox,title after break={Corollary~\ref{crl:bounded-range}\ (continued)}}
\tcolorboxenvironment{definition}{statementbox,title after break={Definition~\thedefinition\ (continued)}}
\tcolorboxenvironment{assumption}{statementbox,title after break={Assumption~\theassumption\ (continued)}}
\tcolorboxenvironment{remark}{statementbox,title after break={Remark~\theremark\ (continued)}}

\newcommand{\mbb}[1]{\mathbb{#1}}
\newcommand{\mcl}[1]{\mathcal{#1}}

\newcommand{\mrm}[1]{\mathrm{#1}}

\newcommand{\opn}[1]{\operatorname{#1}}

\newcommand{\N}{\mathcal{N}}

\title{How Do DiTs Denoise? An Aggregator--Predictor View of Attention and MLPs}

\author[1]{Gyeonghun Kang}
\author[2]{William Xingxu Chen}
\author[3]{Xiang Cheng}
\affiliation[1]{Department of Statistical Science, Duke University\quad\email{gk149@duke.edu}}
\affiliation[2]{Department of Computer Science, Duke University\quad\email{wc187@duke.edu}}
\affiliation[3]{Department of Electrical and Computer Engineering, Duke University\quad\email{xiang.cheng@duke.edu}}
\date{\today}
\metadata[Code]{\url{https://github.com/hun-learning94/dit-denoise}}
\hypersetup{
    pdftitle={How Do DiTs Denoise? An Aggregator--Predictor View of Attention and MLPs},
    pdfauthor={Gyeonghun Kang, William Xingxu Chen, Xiang Cheng},
    pdfsubject={Diffusion transformers and local denoising}
}

\abstract{
How do diffusion transformers (DiTs) represent image content and learn rules for reusing it to generalize?
We develop an aggregator--predictor view of DiTs as \emph{input-adaptive local denoisers}, in which context aggregation and patch prediction provide functional roles for interpreting attention and MLPs.
We first establish that, for Gaussian Markov random field (GMRF) priors, the global Bayes denoiser can be approximated using only a subset of context patches, with error decaying exponentially in context radius.
We further show that patchwise linear denoisers, often invoked to explain diffusion generalization, recover the empirical local denoiser with sufficiently informative aggregation.
Within this view, AdaLN-modulated attention naturally parameterizes an input- and noise-dependent aggregator, while the MLP admits a learned key--value representation for tokenwise patch prediction.
Our experiments support this view.
DiTs trained on real images exhibit noise-dependent denoising contexts resembling those in the analytic GMRF setting, while attention mass exhibits a qualitatively corresponding broadening pattern.
In a controlled CelebA64 study, shared-memory patch predictors used in place of DiT MLPs achieve competitive denoising and generative performance.
Finally, cross-dataset component transplantation reveals compatibility patterns consistent with the proposed roles of QK in spatial weighting, VO in feature transport, and MLPs in patch prediction.
Together, these results provide a component-level account of DiT denoising and motivate architecture-specific fine-tuning and controlled intervention.
}

\begin{document}

\maketitle

\begingroup
\setlength{\intextsep}{4pt}
\begin{figure}[t]
    \setlength{\abovecaptionskip}{3pt}
    \centering
    \includegraphics[width=\textwidth]{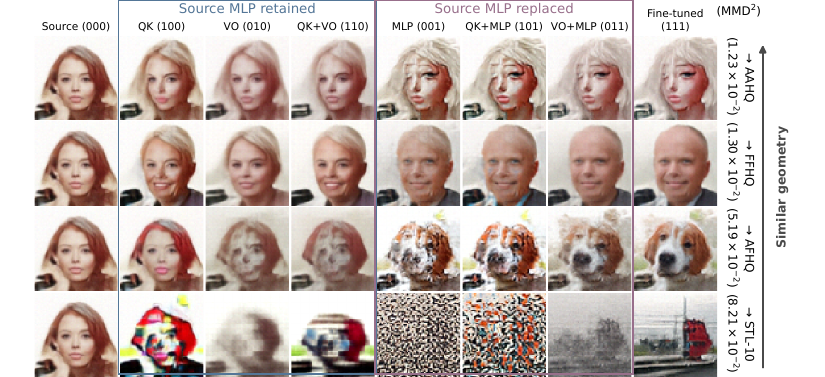}
    \vspace{-0.5cm}
    \caption{In each row, a CelebA source checkpoint is fine-tuned on the AAHQ, FFHQ, AFHQ, or STL-10 dataset.
    The source and fine-tuned generated images are shown in columns (000) and (111).
    In the middle columns, attention QK/VO and MLP weights of the source checkpoint are replaced with the corresponding fine-tuned weights; e.g., (001) means only MLP is replaced, while QK/VO are retained.
    \textbf{For face-to-face fine-tuning, replacing only QK generates source-like faces, while replacing MLP brings visual changes: cartoon faces, bald heads, or dogs.}
    Among the tested CelebA$\to$STL-10 hybrids, coherent target-like structure appears primarily in the joint endpoint (111); reverting VO (101) leaves fragmented texture.
    See Sec.~\ref{sec:component-roles-experiment} and Appendix~\ref{app:exp3}.}
    \label{fig:transfer-flagship}
\end{figure}
\endgroup

\section{Introduction}
Diffusion and flow-matching models are widely used for image generation \citep{song2021score,lipman2023flow}, with Diffusion Transformers (DiTs) becoming a standard backbone
\citep{peebles2023scalablediffusionmodelstransformers,esser2024scaling,xie2025sana,LiHe2026BackToBasics}.
Growing evidence connects their generalization to \emph{representations of images} and \emph{rules for reusing them}.
Diffusion features encode semantic and geometric information \citep{baranchuk2022label,tang2023emergent}, and aligning them with pretrained representations improves training \citep{yu2025representation}.
Models also generate combinations absent from the training data, as shown by analyses of synthetic hierarchical data \citep{okawa2023compositional,favero2025compositional}.
How components of DiTs realize these capabilities remains unclear.

Existing explanations connect diffusion-model generalization to geometry-adaptive harmonic or balanced representations \citep{kadkhodaie2024generalization, zhang2026generalization}, and inductive biases toward spatial locality and translation equivariance \citep{kamb2025analytic,niedoba2025towards}.
These accounts leave open how DiT components divide the denoising computation.
\citet{rosu2025softmaxscore} show that attention can implement exact score-based denoising, but they consider whole-image tokens, leaving interactions among spatial patches unresolved.
\citet{an2025inductive} associate DiT generalization with attention mimicking CNN-like spatial locality, but attention can also select input-dependent, spatially non-contiguous context (Sec.~\ref{sec:real-image-locality}).
We ask: \emph{How do attention, MLPs, and adaptive layer normalization (AdaLN) divide the work of representing image content and learning context-dependent rules for generalization?}

Understanding these component roles matters for intervention: domain adaptation, concept editing, and controlled generation require deciding which computations to modify and which to preserve.
In language models, interpreting MLPs as key--value memories has motivated methods for editing stored associations \citep{geva2021transformer,meng2022locating,meng2023mass}, with related analyses in multimodal transformers \citep{pan2024finding,basu2024understanding}.
A functional account of DiTs could similarly guide component-wise adaptation across domains and interventions that change image content while preserving desired spatial relationships.
Related work is reviewed in Appendix~\ref{sec:literature}.

We address this question by proposing an aggregator--predictor view of DiTs as \emph{input-adaptive local denoisers}, supported by a statistical foundation and a set of architectural and empirical observations:

\textbf{(1) A statistical foundation for local denoising.}
    We justify local denoiser, in which each patch is denoised using only a subset of context patches, by identifying its statistical target and establishing when it approximates the global Bayes denoiser.
    Specifically, for Gaussian Markov random field (GMRF) image priors, the global denoiser can be approximated by a radius-$r$ local denoiser with error decaying exponentially in $r$ (Thm.~\ref{thm:gmrf-locality}).
    We further connect existing patchwise retrieval-based explanations of generalization to this target by showing that, with sufficiently informative aggregation, a patchwise linear denoiser exactly recovers the empirical local denoiser (Thm.~\ref{thm:pld-is-local}).

\textbf{(2) An aggregator--predictor interpretation of DiT components.}
    We decompose the local denoising into \emph{context aggregation} followed by \emph{patch prediction} (Def.~\ref{dfn:agg-pre}).
    Under this view, the algebraic structure of AdaLN-modulated attention implements an input- and noise-dependent aggregation rule: QK parameterizes spatial compatibility weights, while VO transforms the corresponding features (Sec.~\ref{sec:attention-aggregator}).
    In turn, MLP admits a key--value memory interpretation and operates on the resulting contextualized token (Sec.~\ref{sec:patch-predictors}).
    These algebraic correspondences are exact for the isolated branches considered in Sec.~\ref{sec:dit-decomposition}; in a multilayer residual stack, the proposed roles interact and need not remain functionally separable (Appendix~\ref{app:residual-stack}).
    This view extends existing fixed spatial-locality accounts by allowing the relevant context to depend on the image and noise level and to include spatially non-contiguous patches.
    It further suggests the possibility of replacing DiT MLPs with more interpretable memory-based components, as supported by our experiments (Sec.~\ref{sec:shared-predictor-experiment}).

\textbf{(3) Empirical evidence consistent with the proposed component roles.}
    Across real-image datasets, DiT sensitivity fields exhibit noise-dependent context broadening resembling the GMRF setting, while attention mass shows a related qualitative trend (Sec.~\ref{sec:real-image-locality}).
    We further construct explicit shared-memory patch predictors that retain competitive denoising and generation performance in a controlled CelebA64 study (Sec.~\ref{sec:shared-predictor-experiment}).
    Finally, cross-domain component transplantation reveals systematic component-compatibility patterns associated with source--target spatial similarity. These patterns are consistent with, but do not uniquely identify, the proposed roles in context aggregation and patch prediction (Sec.~\ref{sec:component-roles-experiment}).

\section{Background and Problem Setup}
\label{sec:setup}
Let $X_0=(X_{0,1},\ldots,X_{0,M})\in\mbb{R}^{Md_p}$ be a clean image with $M$ patches, where $X_{0,m}\in\mbb{R}^{d_p}$ is patch $m$.
At time $t$, consider $X_t=a_tX_0+b_t\varepsilon$, with $a_t,b_t>0$ and $\varepsilon\sim\N(0,I_{Md_p})$ independent of $X_0$.
This formulation includes variance-exploding (VE) and variance-preserving (VP) diffusion \citep{song2021score}, as well as linear Gaussian interpolation paths used in flow matching \citep{lipman2023flow}.
We define the normalized observation $Y_t:=X_t/a_t=X_0+\sigma_t\varepsilon$ and the effective noise level $\sigma_t:=b_t/a_t$.
Under standard squared-loss objectives, population-optimal clean-image, noise, and velocity predictions are related by known affine transformations to the denoiser \citep{LiHe2026BackToBasics}
\begin{align}
    D_{m,t}^\ast(y):=\mbb{E}[X_{0,m}\mid Y_t=y,t],
    \qquad m\in\{1,\ldots,M\}.
    \label{eq:population-patch-denoiser}
\end{align}
Throughout the paper, unless otherwise specified, expectations involving $Y_t$ are over $X_0\sim p_0$ and the independent Gaussian corruption.
We call \eqref{eq:population-patch-denoiser} a population \emph{global denoiser} because the prediction conditions on all noisy patches.

For training images $\{x^{(i)}\}_{i=1}^n$ and the empirical prior $\hat p_0=n^{-1}\sum_{i=1}^n\delta_{x^{(i)}}$, the empirical version of the global denoiser \eqref{eq:population-patch-denoiser} is a weighted sum of whole images:
\begin{align}
    \hat D_{m,t}(y)&=\sum_{i=1}^n\opn{sm}_i\left\{-\frac{1}{2\sigma_t^2}
    \left\| y-x^{(j)} \right\|^2
    \right\}_{j\in[n]}x_m^{(i)},
    \label{eq:empirical-image-denoiser}
\end{align}
where the softmax $\opn{sm}$ is taken over the $n$ training images.
This creates a tension between training-loss minimization and generalization in diffusion models: under exact reverse dynamics and the corresponding terminal initialization, the empirical denoiser reproduces training images at the clean endpoint \citep{biroli2024dynamical,kamb2025analytic}.
Even when these conditions are not met, the shared mixture weights across patches keep the full-image prediction in the convex hull of whole training images, restricting its ability to recombine their local content independently \citep{kamb2025analytic,niedoba2025towards}.

This raises the question of why trained models often generalize beyond memorizing training images.
Several works attribute this to compositional generation, in which retrieval weights vary across patch locations \citep{niedoba2025towards,kamb2025analytic,lukoianov2025locality}.
We formalize the estimator underlying the compositional accounts as a \emph{patchwise linear denoiser (PLD)}.
PLD depends on $y$ only through the context patches $y_{\mcl{B}_{m,t}}$ and allows different output patches to combine content from different training images.
It is linear in how the context patches are aggregated.

\begin{definition}[Patchwise linear denoiser (PLD)]\label{def:pld}
    Fix a context set $\mcl{B}_{m,t}\subseteq\{1,\ldots,M\}$ and a matrix $S_{m,t}\in\mbb{R}^{d_h\times Md_p}$ with full row rank whose $l$th block of columns, $[S_{m,t}]_l\in\mbb{R}^{d_h\times d_p}$, is zero for $l\notin\mcl{B}_{m,t}$.
    Given a symmetric positive definite $\Gamma_{m,t}\in\mbb{R}^{d_h\times d_h}$, define the patchwise linear denoiser (PLD) by
    \begin{align}
        \hat D_{m,t}^{\mrm{pld}}(y)
        =\sum_{i=1}^n\opn{sm}_i\left\{-\frac{1}{2}
        \left\| S_{m,t}(y-x^{(j)}) \right\|^2_{\Gamma_{m,t}}
        \right\}_{j\in[n]}x_m^{(i)},
        \label{eq:patchwise-local-denoiser}
    \end{align}
    where $\|u\|_{\Gamma_{m,t}}^2:=u^\top\Gamma_{m,t}u$ and $x_m^{(i)}$ is patch $m$ of training image $i$.
\end{definition}

However, it is not clear when PLD achieves low population denoising risk or how a neural architecture can implement it.
In the following sections, we address three central questions: what statistical target does the PLD approximate, when is that target close to the population denoiser, and how can a DiT implement such an estimator?

\section{Local denoiser can approximate global denoiser}

We first show when global denoising can be accurately approximated using only a local subset of patches.
Suppose that the clean patches jointly follow a Markov random field (MRF) with respect to a graph $G_0=([M], E_0)$, and let $B_r(m)\subseteq [M]$ be a \emph{graph-radius context set}, containing all patches within graph distance $r$ of patch $m$.
This subset need not be spatially contiguous (Fig.~\ref{fig:exp1-1-synthetic}(a)).
Define
\begin{align*}
    D_{m,t}^\ast (y):= \mbb{E}\!\left[X_{0,m} \mid Y_t=y,t\right],\qquad 
    D_{m,t}^{\ast,r} (y):= \mbb{E}\!\left[X_{0,m} \mid Y_{t,B_r(m)}=y_{B_r(m)},t\right]
\end{align*}
as the \emph{global denoiser} and \emph{$B_r(m)$-local denoiser}, respectively.
We define their mean-squared discrepancy $\mathbb{E}\!\left\|D_{m,t}^\ast(Y_t) - D_{m,t}^{\ast,r}(Y_t)\right\|_2^2$ as the \emph{locality error} and say that $p_0$ is \emph{exponentially local} with respect to $G_0$ if the following holds.
\begin{assumption}[Exponential locality]\label{assm:exponential-locality}
  For every $t$, there exists a constant $c_t>0$ such that, for every patch $m$ and integer $r\geq 0$, $\mathbb{E}\left\| D_{m,t}^\ast(Y_t) - D_{m,t}^{\ast,r}(Y_t) \right\|_2^2 \lesssim_t \exp(-c_t r)$.
\end{assumption}

An explicit example is a Gaussian MRF image prior $X_0\sim\mcl{N}(0,Q^{-1})$ with positive definite block precision $Q\in\mbb{R}^{Md_p\times Md_p}$ supported on $G_0$.
Here, the global denoiser has the closed form $D_{m,t}^\ast(y)=[(I+\sigma_t^2Q)^{-1}y]_m$.
Although this denoiser generally depends on all noisy patches, analyticity of the inverse function and the graph sparsity of $Q$ allow approximation using only radius-$r$ context, with error decaying exponentially in $r$ \citep{benzi2007decay}.

\begin{figure}[t]
    \centering
    \includegraphics[width=0.9\textwidth]{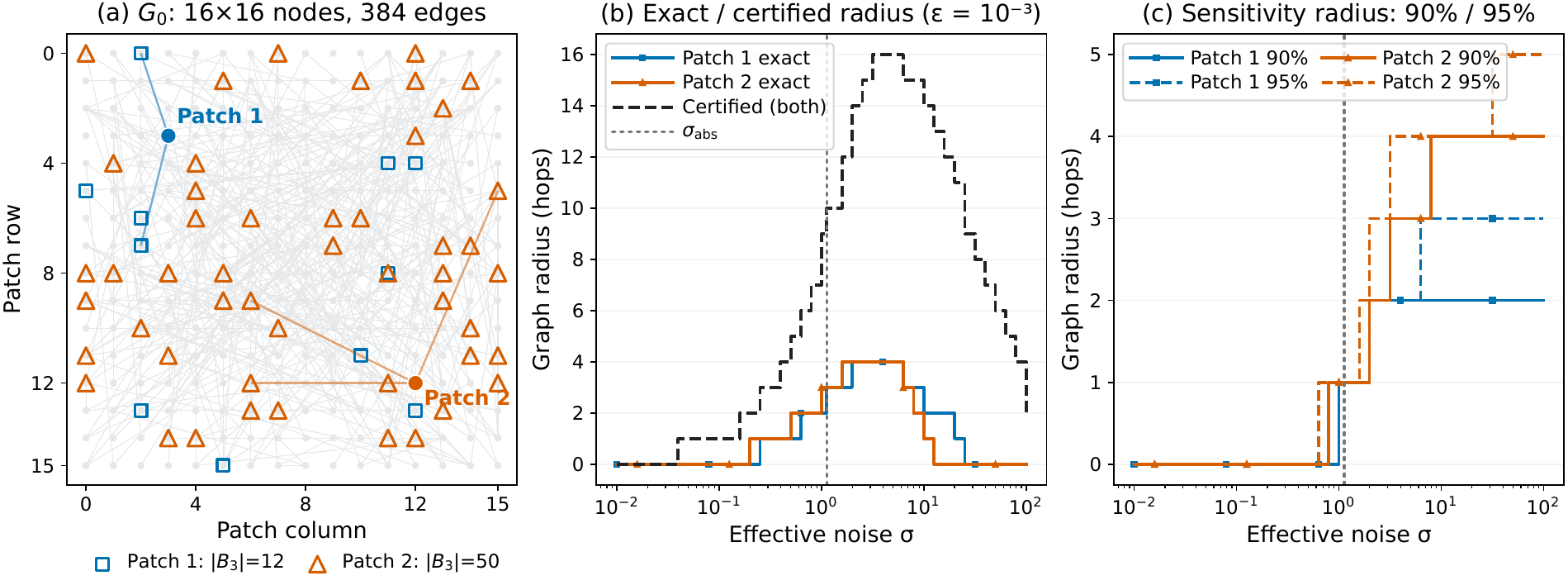}
    \vspace{-0.25cm}
    \caption{\textbf{Exact context set in a Gaussian MRF prior.}
    The known graph and precision matrix allow exact analysis of context set in this synthetic setting.
    (a) Radius-3 contexts $B_3(m)$ for two selected patches. Defined by graph distance, these sets need not be spatially contiguous.
    (b) Exact and certified radii at $\epsilon=10^{-3}$. Corollary~\ref{crl:bounded-range} guarantees that the certified radius is nondecreasing up to $\sigma_{\mrm{abs}}$ (dotted lines); the exact radii can be substantially smaller.
    (c) 90\%/95\% sensitivity radii (Sec.~\ref{sec:real-image-locality}), showing that relative influence spreads over a broader context as noise increases.
    Details in Appendix~\ref{app:exp1-1}.}
    \label{fig:exp1-1-synthetic}
\end{figure}

\begin{theorem}[Exponential locality of GMRF, informal]\label{thm:gmrf-locality}
A Gaussian MRF is exponentially local in the sense of Assump.~\ref{assm:exponential-locality}.
The full bound and conditions are in Appendix~\ref{app:gmrf-locality}.
\end{theorem}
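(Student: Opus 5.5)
Write $\Sigma=Q^{-1}$ and $B=B_r(m)$, with complement $B^c$.
Since $(X_0,Y_t)$ is jointly Gaussian with mean zero, both denoisers are linear in $y$. The global one is $D^\ast_{m,t}(y)=[A y]_m$ with $A=(I+\sigma_t^2Q)^{-1}$. The local one is $D^{\ast,r}_{m,t}(y)=\Sigma_{m,B}(\Sigma_{BB}+\sigma_t^2 I)^{-1}y_B$.

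The plan rests on the tower property. Because $D^{\ast,r}_{m,t}(Y_t)$ is the $L^2$-projection of $D^\ast_{m,t}(Y_t)$ onto $\sigma(Y_{t,B})$, the locality error is bounded by $\mathbb{E}\|D^\ast_{m,t}(Y_t)-g(Y_{t,B})\|^2$ for any measurable $g$. So we never analyse $(\Sigma_{BB}+\sigma_t^2I)^{-1}$ directly. It suffices to exhibit one good function of $y_B$, and we take the truncation $g(y_B)=\sum_{l\in B}A_{ml}y_l$. This gives
\[
\mathbb{E}\|D^\ast_{m,t}-D^{\ast,r}_{m,t}\|^2\le \mathbb{E}\Big\|\sum_{l\notin B}A_{ml}Y_{t,l}\Big\|^2=\operatorname{tr}\big(A_{m,B^c}(\Sigma+\sigma_t^2I)_{B^cB^c}A_{m,B^c}^\top\big)\le \lambda_{\max}(\Sigma+\sigma_t^2 I)\sum_{l\notin B}\|A_{ml}\|_F^2 .
\]
The prefactor is $\lambda_{\min}(Q)^{-1}+\sigma_t^2$, which depends only on $t$ under a uniform lower bound on the spectrum of $Q$.

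The main step is the off-diagonal decay of $A=f(Q)$ with $f(x)=(1+\sigma_t^2x)^{-1}$, following \citet{benzi2007decay}. Assume $\operatorname{spec}(Q)\subseteq[\lambda_-,\lambda_+]$ with $\lambda_->0$. The function $f$ is analytic on a Bernstein ellipse around this interval, avoiding its pole at $-1/\sigma_t^2<0$. Chebyshev approximation therefore gives polynomials $p_k$ of degree $k$ with $\sup_{[\lambda_-,\lambda_+]}|f-p_k|\le C_t\rho_t^{-k}$ for some $\rho_t>1$. A simpler alternative is the Neumann-series argument: write $I+\sigma_t^2Q=c(I-R)$ with $\|R\|_2\le q_t<1$, where $q_t=\frac{\sigma_t^2(\lambda_+-\lambda_-)}{2+\sigma_t^2(\lambda_++\lambda_-)}$. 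Either way, $p_k(Q)$ is supported on pairs at graph distance at most $k$, because $Q$ is block-supported on $G_0$. Taking $k=r$, every block $A_{ml}$ with $l\notin B_r(m)$ equals the corresponding block of $(f(Q)-p_r(Q))$.

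It remains to sum over the far blocks:
\[
\sum_{l\notin B}\|A_{ml}\|_F^2=\sum_{l\notin B}\|[(f(Q)-p_r(Q))]_{ml}\|_F^2\le d_p\,\|f(Q)-p_r(Q)\|_2^2\le d_p C_t^2\rho_t^{-2r},
\]
since the block row $m$ of a matrix has squared Frobenius norm at most $d_p$ times the squared spectral norm. Combining gives the bound $\lesssim_t e^{-c_t r}$ with $c_t=2\log\rho_t$, uniformly in $m$ and $M$, which is Assumption~\ref{assm:exponential-locality}.

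The main obstacle is the uniformity of the constants. The stated assumption needs $c_t$ and the implicit constant independent of $m$ and $r$, and meaningfully of $M$. This forces a spectral condition $0<\lambda_-\le\lambda_{\min}(Q)\le\lambda_{\max}(Q)\le\lambda_+$ as a hypothesis of the full theorem, which the appendix version presumably states. Under it, $\rho_t$ depends only on $\sigma_t^2\lambda_\pm$. We should also note that $\rho_t\to1$ as $\sigma_t\to\infty$, which is the noise-dependent broadening of context seen in Fig.~\ref{fig:exp1-1-synthetic}.
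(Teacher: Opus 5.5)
Your proof is correct and is essentially the paper's proof. It uses the projection property of $D^{\ast,r}_{m,t}$ and a degree-$r$ Chebyshev approximant of $x\mapsto(1+\sigma_t^2x)^{-1}$, whose matrix polynomial is supported within graph distance $r$. It arrives at the same bound $(q_-^{-1}+\sigma_t^2)\,d_p\,\|H_t^{-1}-\psi_{r,t}(H_t)\|_2^2$ with $H_t=I+\sigma_t^2Q$. The only difference is that you compare against the truncated row $\sum_{l\in B}A_{ml}y_l$ instead of the approximant $[\psi_{r,t}(H_t)y]_m$ itself, which yields the same final bound; the paper also makes the Chebyshev constant explicit.

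Your closing remark is wrong, though. As $\sigma_t\to\infty$ the condition number $(1+\sigma_t^2\lambda_+)/(1+\sigma_t^2\lambda_-)$ tends to $\lambda_+/\lambda_-$, so your $\rho_t$ tends to $(\sqrt{\lambda_+/\lambda_-}+1)/(\sqrt{\lambda_+/\lambda_-}-1)>1$, not to $1$. The decay rate therefore saturates rather than degenerating, and the paper certifies radius growth only on the bounded noise range of Cor.~\ref{crl:bounded-range}.
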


Intuitively, noisier patches provide weaker evidence individually, making broader context useful for denoising.
The following guarantee formalizes this intuition for the bound; the optimal local denoiser may achieve the same tolerance with less context (Fig.~\ref{fig:exp1-1-synthetic}(b)).

\begin{corollary}[Bounded-range monotonicity of the locality radius, informal]\label{crl:bounded-range}
For any error tolerance $\epsilon>0$, the smallest radius $r_\epsilon(\sigma_t)$ certified by the explicit exponential bound is nondecreasing in noise on $0\leq\sigma_t\leq\sigma_{\mrm{abs}}$, where $\sigma_{\mrm{abs}}>0$ depends on $Q$.
The full statement, radius formula, and noise threshold are in Appendix~\ref{app:bounded-range}.
\end{corollary}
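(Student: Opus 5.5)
We first need the explicit exponential bound behind Theorem~\ref{thm:gmrf-locality}, so we reconstruct it.
The local denoiser is $D^{\ast,r}_{m,t}(y)=[\Sigma_{mB}(\Sigma_{BB}+\sigma_t^2 I)^{-1}y_B]$ with $\Sigma=Q^{-1}$ and $B=B_r(m)$.
The locality error is then a quadratic form in the covariance of $Y_t$.
We bound it through the off-block entries of $A_t:=(I+\sigma_t^2Q)^{-1}$.
Write $A_t=(\sigma_t^2(Q+\sigma_t^{-2}I))^{-1}$; its spectrum lies in $[1,1+\sigma_t^2\lambda_{\max}(Q)]$ after the appropriate scaling.
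Since $Q$ is block-banded on $G_0$, the Benzi--Golub/Demko--Moss--Smith argument applies: approximate $x\mapsto(1+\sigma_t^2x)^{-1}$ on $[\lambda_{\min},\lambda_{\max}]$ by Chebyshev polynomials of degree $k$.
The entries of $p_k(Q)$ vanish beyond graph distance $k$, which gives $\|[A_t]_{m\ell}\|\le C_t\,\rho_t^{\,d(m,\ell)}$ with
\[
\rho_t=\frac{\sqrt{\kappa_t}-1}{\sqrt{\kappa_t}+1},\qquad \kappa_t=\frac{1+\sigma_t^2\lambda_{\max}(Q)}{1+\sigma_t^2\lambda_{\min}(Q)}.
\]
The locality error is at most the error of the best radius-$r$ linear predictor.
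In turn, that is at most the error of the truncated predictor $y\mapsto\sum_{\ell\in B_r(m)}[A_t]_{m\ell}y_\ell$.
So the certified bound has the form $\epsilon_r(\sigma_t)=K(\sigma_t)\,\rho_t^{2(r+1)}$.
Here $K$ collects $C_t^2$, the operator norm of $\mathrm{Cov}(Y_t)=Q^{-1}+\sigma_t^2I$, and a volume-growth factor of the graph balls.
The certified radius is
\[
r_\epsilon(\sigma_t)=\min\{r\ge0:\epsilon_r(\sigma_t)\le\epsilon\}=\max\Bigl\{0,\Bigl\lceil \tfrac{\log K(\sigma_t)-\log\epsilon}{2\log(1/\rho_t)}\Bigr\rceil-1\Bigr\}.
\]

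The proof of the corollary is then a monotonicity argument on this formula.
Because $\lceil\cdot\rceil$ and $\max\{0,\cdot\}$ are nondecreasing, it suffices to show that $g(\sigma):=(\log K(\sigma)-\log\epsilon)/(2\log(1/\rho(\sigma)))$ is nondecreasing on $[0,\sigma_{\mathrm{abs}}]$.

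\textbf{Denominator.} $\kappa$ is increasing in $\sigma^2$ whenever $\lambda_{\max}>\lambda_{\min}$.
So $\rho$ increases and $\log(1/\rho)$ is positive and decreasing.

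\textbf{Numerator.} We need $\log K(\sigma)-\log\epsilon$ to be nondecreasing wherever it is positive.
At points where it is nonpositive, the radius is already $0$ and monotonicity is trivial there.
Since the set where $r_\epsilon>0$ must also be handled, we argue on the sign region.

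The main obstacle is the constant $K(\sigma)$.
The Chebyshev constant $C_t$ contains factors such as $\max\{1,(1+\sqrt{\kappa})^2/(2\lambda_{\min})\}$ and the prefactor $\sigma_t^2$-scaled norms.
These need not be monotone for all $\sigma$: as $\sigma\to\infty$, $A_t\to0$, so $K$ may eventually decrease.
This is exactly why the statement is restricted to $\sigma_t\le\sigma_{\mathrm{abs}}$.
Our plan is to compute $\frac{d}{d\sigma^2}\log K$ explicitly.
On $[0,\sigma_{\mathrm{abs}}]$, the decrease from the shrinking norm of $A_t$ (a factor like $(1+\sigma^2\lambda_{\min})^{-2}$) is dominated by the growth of $\mathrm{Cov}(Y_t)$ and of the $\kappa$-dependent factor.
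We then define $\sigma_{\mathrm{abs}}$ as the first point at which this derivative vanishes.
This point depends only on $\lambda_{\min}(Q)$ and $\lambda_{\max}(Q)$, and it is strictly positive by continuity because the derivative at $\sigma=0$ is positive.

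If the direct derivative computation is messy, there is a fallback.
We bound $K$ above by a simpler monotone envelope $\tilde K(\sigma)\ge K(\sigma)$ that is nondecreasing on $[0,\sigma_{\mathrm{abs}}]$, and we take the certified radius to be defined via $\tilde K$.
Then $\tilde g$ is a ratio of a nonnegative nondecreasing numerator and a positive nonincreasing denominator, hence nondecreasing, which concludes the proof.
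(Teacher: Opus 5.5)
Your reduction to showing that $g(\sigma)=(\log K-\log\epsilon)/(2\log(1/\rho_t))$ is nondecreasing is sound. The step it rests on is false, however. In the normalization $K(\sigma_t)\rho_t^{2(r+1)}$, the prefactor $K$ is decreasing near zero noise, not increasing. The norm of $(I+\sigma_t^2Q)^{-1}$ shrinks from the outset. All of the growth of the bound at small $\sigma_t$ is carried by $\rho_t^2\approx\sigma_t^4(q_+-q_-)^2/16$, which you have placed inside the $\rho_t^{2(r+1)}$ factor.

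Concretely, the theorem's bound applies the Chebyshev polynomial to the whole matrix $H_t=I+\sigma_t^2Q$ and uses the operator norm, with no volume factor. It equals $d_p(q_-^{-1}+\sigma_t^2)\,4\rho_t^{2r+2}/\bigl(h_{-,t}h_{+,t}(1-\rho_t)^2\bigr)$ with $h_{\pm,t}=1+\sigma_t^2q_\pm$, so
\[
K(\sigma_t)=\frac{d_p}{q_-}\,\frac{(1+\chi_t^{-1/2})^2}{1+\sigma_t^2q_-}.
\]
This is a product of two strictly decreasing functions of $\sigma_t$, and $\log K$ has slope $-(q_++q_-)/2$ in $\sigma_t^2$ at zero. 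A Demko--Moss--Smith constant times $\|Q^{-1}+\sigma_t^2I\|_2$ behaves the same way, with slope $-q_+$. Three consequences follow:
\begin{itemize}
\item there is no positive ``first zero of the derivative'' to serve as $\sigma_{\mathrm{abs}}$;
\item for this $K$, the set where your numerator is positive is an initial segment rather than a final one;
\item the ``nonnegative nondecreasing numerator over positive nonincreasing denominator'' argument never applies.
\end{itemize}
The fallback does not rescue this. An envelope $\tilde K\ge K$ certifies a different, larger radius. Since $K$ is decreasing, you could take $\tilde K\equiv K(0)$ and obtain monotonicity on all of $[0,\infty)$. That shows the fallback says nothing about the radius certified by the theorem's bound, which returns to zero at large noise.

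The missing idea is to keep one factor of $\rho_t^2$ in the prefactor. Write the certificate as $P(\sigma_t)\rho_t^{2r}$, which is exactly the form in the theorem, with
\[
P(\sigma_t)=K(\sigma_t)\rho_t^2=\frac{d_p}{q_-}\,\frac{(1-\chi_t^{-1/2})^2}{1+\sigma_t^2q_-}.
\]
Then $P(0)=0$, and for $\sigma_t>0$
\[
\frac{d}{d\sigma_t}\log P=\frac{2\sigma_t}{1+\sigma_t^2q_-}\Bigl[\frac{q_+-q_-}{(1+\sigma_t^2q_+)(\sqrt{\chi_t}-1)}-q_-\Bigr].
\]
Using $q_+-q_-\chi_t=(q_+-q_-)/(1+\sigma_t^2q_-)$, the bracket is nonnegative iff $q_+\ge q_-\chi_t^{3/2}$. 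This holds iff $\sigma_t^2\le(q_-^{1/3}+q_+^{1/3})/(q_-^{2/3}q_+^{2/3})=\sigma_{\mathrm{abs}}^2$. Since $\rho_t$ is also nondecreasing, $P(\sigma_t)\rho_t^{2r}$ is nondecreasing in $\sigma_t$ for every fixed $r$ on $(0,\sigma_{\mathrm{abs}}]$. Hence every radius certified at $\sigma_2$ is certified at any $\sigma_1\le\sigma_2$, giving $r_\epsilon(\sigma_1)\le r_\epsilon(\sigma_2)$.

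This set-inclusion argument is the paper's. It needs neither the ceiling formula nor your unfinished sign-region analysis. Your ratio argument also goes through once it is run on $P$, since $g-1=(\log P-\log\epsilon)/(2\log(1/\rho_t))$. Note finally that the corollary refers to this specific operator-norm certificate. An entrywise-decay bound with a volume-growth factor is a different certificate, and its monotonicity range would have to be worked out separately.
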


This guarantee concerns a bounded noise range: at extreme noise, the posterior mean approaches the zero prior mean and the absolute bound vanishes.
Fig.~\ref{fig:exp1-1-synthetic} illustrates the context $B_r(m)$ defined by graph distance and the radius guarantee in Cor.~\ref{crl:bounded-range} using a synthetic GMRF prior.
Similar locality patterns appear in DiTs trained on real images (Sec.~\ref{sec:real-image-locality}).

\subsection{Aggregator--predictor decomposition of local denoiser}

We next interpret PLD and DiTs as estimators of the same local denoiser.
Without assuming that the image prior is an MRF, fix a patch $m$, a time $t$, and a \emph{context set} $\mcl{B}_{m,t}$.
We define the following population target as the \emph{$\mcl{B}_{m,t}$-local denoiser}
\begin{align}\label{eqn:local-target}
    D_{m,t}^{\ast,\mcl{B}}(y):= \mbb{E}[X_{0,m}\mid Y_{t,\mcl{B}_{m,t}}=y_{\mcl{B}_{m,t}},t].
\end{align}
We decompose its estimation in two stages: \emph{local aggregation of the relevant context}, followed by \emph{patch prediction from the aggregated statistic}.

Specifically, a measurable map $\mcl{S}_{m,t}:\mbb{R}^{Md_p}\to\mbb{R}^{d_h}$ is a \emph{$\mcl{B}_{m,t}$-local aggregator} if $y_{\mcl{B}_{m,t}}=y'_{\mcl{B}_{m,t}}$ implies $\mcl{S}_{m,t}(y)=\mcl{S}_{m,t}(y')$ for all $y,y'$.
It depends only on the context patches and maps them, possibly nonlinearly, to a statistic in $\mbb{R}^{d_h}$.
A measurable map $g_{m,t}:\mbb{R}^{d_h}\to\mbb{R}^{d_p}$, called a \emph{patch predictor}, then predicts the clean patch from this statistic.
For a given local aggregator, a predictor recovering the local conditional mean exists if and only if $\opn{sf}(D_{m,t}^{\ast,\mcl{B}}(Y_t))\subseteq\opn{sf}(\mcl{S}_{m,t}(Y_t))$; that is, local aggregation preserves all information needed to recover the local conditional mean.
The aggregator may discard details that are unnecessary for determining this mean.
In that case, the tower property shows that such predictor can be chosen as $g_{m,t}^\ast(s)=\mbb{E}[X_{0,m}\mid\mcl{S}_{m,t}(Y_t)=s,t]$.

\begin{definition}[Aggregator--predictor decomposition]\label{dfn:agg-pre}
    A pair of measurable maps $(\mcl{S}_{m,t}^\ast,g_{m,t}^\ast)$ is an \emph{aggregator--predictor decomposition} of $D_{m,t}^{\ast,\mcl{B}}$ if $\mcl{S}_{m,t}^\ast$ is $\mcl{B}_{m,t}$-local and, for $P_{Y_t}$-almost every $y$, $D_{m,t}^{\ast,\mcl{B}}(y) = (g_{m,t}^\ast\circ\mcl{S}_{m,t}^\ast)(y)$.
\end{definition}

This decomposition is generally nonunique but provides a framework for separately estimating the aggregator and predictor of the local denoiser \eqref{eqn:local-target}.
The PLD is one instance; for a patch $m$ and time $t$ with $\sigma_t>0$, setting $\Gamma_{m,t}=(\sigma_t^2 S_{m,t}S_{m,t}^\top)^{-1}$ in Def.~\ref{def:pld}, the PLD is decomposed into $\mcl{S}_{m,t}^{\mrm{lin}}$, the \emph{linear aggregator}, and $g_{m,t}^{\mrm{rbf}}$, the \emph{radial basis function (RBF) predictor}:
\begin{align}\label{eqn:lin-local}
    \begin{split}
        \hat{D}_{m,t}^{\mrm{pld}}(y) &= \left(g_{m,t}^{\mrm{rbf}}\circ \mcl{S}^{\mrm{lin}}_{m,t}\right)(y),\\
    \mcl{S}_{m,t}^{\mrm{lin}}(y) &= S_{m,t}y,
    \qquad
    g_{m,t}^{\mrm{rbf}}(s) = \sum_{i=1}^n
    \opn{sm}_i\left\{-\frac{1}{2}
    \left\|s-\mcl{S}_{m,t}^{\mrm{lin}}(x^{(j)})\right\|^2_{(\sigma_t^2 S_{m,t} S_{m,t}^\top)^{-1}}
    \right\}_{j\in[n]}x_m^{(i)}.
    \end{split}
\end{align}

Bayes' rule identifies the predictor $g_{m,t}^{\mrm{rbf}}(s)$ with $\mbb{E}_{\hat p_0}[X_{0,m}\mid\mcl{S}_{m,t}^{\mrm{lin}}(Y_t)=s,t]$, the conditional mean given the aggregated statistic $s$ under the empirical prior; see Appendix~\ref{app-sec:pld-is-local-proof}.
This correspondence has been noted in prior work \citep{niedoba2025towards, kamb2025analytic, lukoianov2025locality}.
In our framework, these approaches correspond to using cropping matrices, prescribed neighborhood masks, or thresholded Wiener-filter sensitivity masks for $S_{m,t}$, respectively.
We go a step further and show that, with a sufficiently expressive aggregator, $\hat{D}_{m,t}^{\mrm{pld}}$ exactly recovers the local denoiser under the empirical observation law.

\begin{theorem}[PLD recovers the empirical local denoiser]\label{thm:pld-is-local}
Under the setup above, let $\sigma_t>0$ and take $\Gamma_{m,t}=(\sigma_t^2 S_{m,t}S_{m,t}^\top)^{-1}$ in Def.~\ref{def:pld}. If $\mcl{S}_{m,t}^{\mrm{lin}}$ preserves the empirical local conditional mean, i.e., $\opn{sf}(\mbb{E}_{\hat{p}_0}[X_{0,m}\mid Y_{t,\mcl{B}_{m,t}},t])\subseteq \opn{sf}(\mcl{S}_{m,t}^{\mrm{lin}}(Y_t))$, then, almost surely under the empirical observation law, $\hat{D}_{m,t}^{\mrm{pld}}(y) = \mbb{E}_{\hat{p}_0}[X_{0,m}\mid Y_{t,\mcl{B}_{m,t}}=y_{\mcl{B}_{m,t}},t]$.
\end{theorem}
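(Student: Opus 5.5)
The plan is to combine two facts: the RBF predictor computes the empirical conditional mean of $X_{0,m}$ given the aggregated statistic, and the tower property lets us pass from the full local context to this statistic.

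\emph{Step 1 (Bayes identity for the RBF predictor).} Under the empirical prior, $X_0$ is uniform on $\{x^{(i)}\}$. Since $S_{m,t}$ has full row rank, $\mcl{S}^{\mrm{lin}}_{m,t}(Y_t)=S_{m,t}X_0+\sigma_t S_{m,t}\varepsilon$. Conditionally on $X_0=x^{(i)}$, this is Gaussian with mean $S_{m,t}x^{(i)}$ and nondegenerate covariance $\sigma_t^2S_{m,t}S_{m,t}^\top$. By Bayes' rule the posterior weight of image $i$ given $\mcl{S}^{\mrm{lin}}_{m,t}(Y_t)=s$ is proportional to
\[
\exp\bigl\{-\tfrac12\|s-S_{m,t}x^{(i)}\|^2_{(\sigma_t^2S_{m,t}S_{m,t}^\top)^{-1}}\bigr\}.
\]
The normalizing determinant is common to all $i$ and cancels. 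Hence $g^{\mrm{rbf}}_{m,t}(s)=\mbb{E}_{\hat p_0}[X_{0,m}\mid \mcl{S}^{\mrm{lin}}_{m,t}(Y_t)=s,t]$ for every $s$, as a genuine version of the conditional expectation defined everywhere. Composing with $s=S_{m,t}y$ and using $\Gamma_{m,t}=(\sigma_t^2S_{m,t}S_{m,t}^\top)^{-1}$ in Definition~\ref{def:pld} gives \eqref{eqn:lin-local}. Therefore $\hat D^{\mrm{pld}}_{m,t}(Y_t)=\mbb{E}_{\hat p_0}[X_{0,m}\mid \mcl{S}^{\mrm{lin}}_{m,t}(Y_t)]$.

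\emph{Step 2 (tower property).} Since $[S_{m,t}]_l=0$ for $l\notin\mcl{B}_{m,t}$, the statistic $\mcl{S}^{\mrm{lin}}_{m,t}(Y_t)$ is a function of $Y_{t,\mcl{B}_{m,t}}$. This gives $\opn{sf}(\mcl{S}^{\mrm{lin}}_{m,t}(Y_t))\subseteq\opn{sf}(Y_{t,\mcl{B}_{m,t}})$. Write $Z:=\mbb{E}_{\hat p_0}[X_{0,m}\mid Y_{t,\mcl{B}_{m,t}}]$. The tower property gives
\[
\mbb{E}[X_{0,m}\mid \mcl{S}^{\mrm{lin}}_{m,t}(Y_t)]=\mbb{E}\bigl[Z\mid \mcl{S}^{\mrm{lin}}_{m,t}(Y_t)\bigr].
\]
By hypothesis $Z$ is $\opn{sf}(\mcl{S}^{\mrm{lin}}_{m,t}(Y_t))$-measurable, so the right-hand side equals $Z$ almost surely. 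Combining this with Step~1 yields $\hat D^{\mrm{pld}}_{m,t}(Y_t)=Z$ almost surely under the empirical observation law.

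\emph{Step 3 (pointwise statement).} To pass to the statement in $y$, take $Z$ as the explicit softmax version of the local conditional mean, computed from the Gaussian density of $Y_{t,\mcl{B}_{m,t}}$ given $X_0=x^{(i)}$. Both sides are then continuous functions of $y$ that agree $P_{Y_t}$-a.s. Because the empirical observation law has a positive density on all of $\mbb{R}^{Md_p}$ (a Gaussian mixture), they in fact agree everywhere. Stating the result almost surely suffices, however.

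The main obstacle is Step~1. One must check that the linear-Gaussian likelihood of the projected statistic, whose covariance is $\sigma_t^2S_{m,t}S_{m,t}^\top$ rather than the identity, matches exactly the $\Gamma_{m,t}$-weighted softmax in Definition~\ref{def:pld}. This is where full row rank of $S_{m,t}$ and $\sigma_t>0$ are used. The measure-theoretic versions in Step~2 are routine.
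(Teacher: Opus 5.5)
Your proposal is correct and follows the paper's proof essentially step for step. Like the paper, you identify $g_{m,t}^{\mathrm{rbf}}$ with the empirical conditional mean given the aggregate via the Gaussian likelihood $\mathcal{N}(S_{m,t}x^{(i)},\sigma_t^2S_{m,t}S_{m,t}^\top)$ and Bayes' rule, and then use the locality of $S_{m,t}$, the tower property, and the assumed $\sigma$-field inclusion. Your Step~3, which upgrades almost-sure equality to equality everywhere using continuity of both softmax expressions and the everywhere-positive Gaussian-mixture density of $Y_t$, is a valid strengthening that the paper does not carry out.
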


The condition holds whenever the full local observation can be recovered from the aggregate, as with a patch-selection matrix.
A compressive linear aggregator can also suffice: when the training contexts restricted to $\mcl{B}_{m,t}$ lie in a lower-dimensional affine subspace, a fixed linear aggregator whose row space contains their difference subspace is sufficient; see \eqref{app-eqn:V-cond}.

Theorem~\ref{thm:pld-is-local} therefore connects the PLD, used to explain generalization by composition, to the population local denoiser \eqref{eqn:local-target}: with sufficient aggregation, retrieval recovers its empirical counterpart exactly.
However, requiring all training-context differences to lie in a single low-dimensional subspace is restrictive.
This motivates the nonlinear DiT aggregator considered next.

\section{DiTs as Adaptive Local Denoisers}
\label{sec:dit-decomposition}

We next relate DiT attention and MLPs to the aggregator and predictor roles of the local-denoising framework.
We consider a single layer without residual connections $\hat{D}_{m,t}^{\mrm{dit}}(y) = (g_{t}^{\mrm{mlp}}\circ \mcl{S}^{\mrm{attn}}_{m,t})(y)$, where the attention aggregator $\mcl{S}^{\mrm{attn}}_{m,t}$ and MLP predictor $g_{t}^{\mrm{mlp}}$ are defined in the following sections.
Appendix~\ref{app:attention-aggregator} gives the derivation and residual-stack extension; Fig.~\ref{fig:dit-component-map} summarizes the component mapping.
Note that in a residual stack, VO and MLP updates can alter later QK routing, and attention can affect later MLP updates, so the branch-level algebra does not necessarily imply a functionally separable decomposition of the full network.

\subsection{Attention as an input-adaptive aggregator}
\label{sec:attention-aggregator}
PLD's linear aggregator \eqref{eqn:lin-local} uses a fixed context rule across inputs.
A more flexible denoiser can adapt its context to the input, drawing on different patches as their relevance changes with the image and noise level.
We express this through an \emph{input-adaptive aggregator}
\begin{align}\label{eqn:adaptive-local}
  \mcl{S}_{m,t}(Y_t) = \sum_{l=1}^M S_{ml}(Y_t;t)\Phi(Y_{t,l};t,l).
\end{align}
Here $\Phi(Y_{t,l};t,l)\in\mbb{R}^{d_h}$ represents patch $l$ together with its position and noise level, while $S_{ml}(Y_t;t)\in\mbb{R}^{d_h\times d_h}$ determines how it contributes to the context for denoising patch $m$.

DiT attention, incorporating patch, position, and time embeddings together with AdaLN conditioning, provides a factorized parameterization of such an input-adaptive aggregator.
Let $E\in\mbb{R}^{d_h\times d_p}$ be the patch projection, $e_l\in\mbb{R}^{d_h}$ the positional embedding, and $\phi_{t,l}(Y_t):=\Phi^{\mrm{attn}}(EY_{t,l};e_l,t)$ the feature formed by patch embedding, position embedding, and AdaLN.
For each of $\tilde{H}$ equal-width heads, $\tilde{W}_h^Q$, $\tilde{W}_h^K$, and $\tilde{W}_h^V$ are the query (Q), key (K), and value (V) projections, and $\tilde{W}_h^O$ is the output (O) projection.
Writing the attention AdaLN residual gate as $\tilde{\alpha}_t\in\mbb{R}^{d_h}$, the attention branch constitutes an \emph{attention aggregator} defined as
\begin{align}\label{eqn:attention}
\begin{split}
    \mcl{S}_{m,t}^{\mrm{attn}}(Y_t)&=\sum_{l=1}^M S_{ml}^{\mrm{attn}}(Y_t;t)\phi_{t,l}(Y_t),\\
    S_{ml}^{\mrm{attn}}(Y_t;t)
    &=\sum_{h=1}^{\tilde{H}}
    \opn{sm}_{l}\left\{
    \frac{\langle \tilde{W}_h^Q\phi_{t,m}(Y_t),\tilde{W}_h^K\phi_{t,j}(Y_t)\rangle}{(d_h/\tilde{H})^{1/2}}
    \right\}_{j\in[M]}
    \opn{diag}(\tilde{\alpha}_t)\tilde{W}_h^O\tilde{W}_h^V.
\end{split}
\end{align}

\textbf{A single attention branch separates QK spatial weighting from VO value transformation and mixing}~\eqref{eqn:attention}.
The QK projections determine the compatibility scores of patches through $(\tilde{W}_h^Q)^\top \tilde{W}_h^K$, and softmax converts these scores into spatial weights.
By smoothly suppressing patches with low relative scores, attention performs soft thresholding and forms an approximately local aggregator when its weights concentrate on a suitable context.
In this branch-level sense, QK parameterizes which patch positions receive greater or smaller weight in the constructed context, depending on image content, position, and noise level.
Appendix~\ref{app:attention-aggregator-3} details the local approximation.

VO transforms and combines the selected features through the shared, time-dependent channel operators $\opn{diag}(\tilde{\alpha}_t)\tilde{W}_h^O\tilde{W}_h^V$.
Attention thus combines $\tilde{H}$ shared channel operators with input-dependent spatial weights, rather than learning an independent operator for every patch pair.
This decomposition motivates an aggregator interpretation: for fixed input features, QK directly controls relative spatial weights, while VO applies patch-shared channel operators to the weighted features.
The downstream impact of an attention weight also depends on the value features, channel operators, and residual interactions.
These roles interact through learned features and across layers, but the distinction motivates examining QK and VO separately in Sec.~\ref{sec:component-roles-experiment}.

\subsection{MLP and shared-memory patch predictors}
\label{sec:patch-predictors}
Within the single-layer model, the DiT MLP $g_t^{\mrm{mlp}}$ acts on the attention-aggregated context $s=\mcl{S}_{m,t}^{\mrm{attn}}(Y_t)$ in the patch-predictor role; in a residual stack, its output is a tokenwise feature update.
The framework also allows alternative patch predictors, such as the RBF predictor $g_{m,t}^{\mrm{rbf}}$ in \eqref{eqn:lin-local}, which explicitly retrieves from training patches.
In a multilayer residual stack (Appendix~\ref{app:residual-stack}), these predictors can share memory across layers, just as $g_{m,t}^{\mrm{rbf}}$ would consult the same training patches at each layer.
This motivates a sequence of \emph{shared-memory patch predictors} connecting explicit patch retrieval to an ordinary MLP.
We present simplified forms below; Appendix~\ref{app:predictor-implementation} gives the full parameterizations, including biases, output gates, compatibility projections, and multiple heads.

We first note that $g_t^{\mrm{mlp}}$ and $g_{m,t}^{\mrm{rbf}}$ admit similar key--value memory interpretations \citep{geva2021transformer}.
If coupled with the attention aggregator, $g_{m,t}^{\mrm{rbf}}$ retrieves from patch-specific values $\{x_m^{(i)}\}_{1:n}$ using contextualized training images $\mcl{S}_{m,t}^{\mrm{attn}}(x^{(i)})$ as keys.
A transformer MLP of hidden width $R$ likewise consists of learnable key--value pairs $\{k^{(r)},v^{(r)}\in\mbb{R}^{d_h}\}_{r=1}^R$.
In DiTs, AdaLN modulates the memory query to produce time-dependent retrieval weights:
\begin{align}\label{eqn:mlp-memory}
    g^{\mrm{mlp}}_{t}(s)
    &=\sum_{r=1}^{R}\opn{gelu}\!\left(\opn{AdaLN}_t(s)^\top k^{(r)}\right)v^{(r)}.
\end{align}
Compared with $g_{m,t}^{\mrm{rbf}}$ in \eqref{eqn:lin-local}, there are three differences: (i) the memory is learned rather than drawn from training patches, (ii) the same bank of memory is used across patches, and (iii) the retrieval weights are unnormalized and may have either sign.
These differences motivate three patch-predictor variants that start from $g_{m,t}^{\mrm{rbf}}$ and progressively resemble $g_t^{\mrm{mlp}}$: we first make the memory learnable, then share it across patches, and finally replace normalized retrieval weights with unnormalized, potentially signed coefficients.

All predictors act on the same contextualized patch $s=\mcl{S}_{m,t}^{\mrm{attn}}(Y_t)$.
Let $W^{Q},W^{K},W^{V}$ be layerwise projections, with softmax taken over the $R$ memory entries.
\begin{align}
    g^{\mrm{lpld}}_{m,t}(s)\label{eqn:predictor-lpld}
    &=\sum_{r=1}^{R}
    \opn{sm}_r\left\{\opn{AdaLN}_t(s)^\top(W^{Q})^\top W^{K}\mcl{S}_{m,t}^{\mrm{attn}}(z^{(j)})\right\}_{j\in[R]}
    W^{V}z_m^{(r)},\\
    g^{\mrm{sca}}_{t}(s)\label{eqn:predictor-sca}
    &=\sum_{r=1}^{R}
    \opn{sm}_r\left\{\opn{AdaLN}_t(s)^\top(W^{Q})^\top W^{K}k^{(j)}\right\}_{j\in[R]}
    W^{V}v^{(r)},\\
    g^{\mrm{smlp}}_{t}(s)\label{eqn:predictor-smlp}
    &=\sum_{r=1}^{R}
    \opn{gelu}\!\left(\opn{AdaLN}_t(s)^\top(W^{Q})^\top W^{K}k^{(r)}\right)W^{V}v^{(r)}.
\end{align}

Compared to PLD, the first variant \eqref{eqn:predictor-lpld} replaces the training images with a learned memory bank $\{z^{(r)}\in \mbb{R}^{Md_h'}\}_{r=1}^R$ shared across layers, and RBF retrieval with softmax.
Just as a training image $x^{(i)}$ consists of $M$ patches, each memory entry $z^{(r)}$ consists of $M$ learnable patches $z_m^{(r)}\in\mbb{R}^{d_h'}$.
The memory width $d_h'$ may equal the context width $d_h$ or differ from it; a learned compatibility projection shared across layers accommodates differing widths.
Together, these choices define the \textbf{learned PLD (LPLD)} predictor $g^{\mrm{lpld}}_{m,t}$.
LPLD uses the attention-aggregated memory $z^{(r)}$ to form keys, whereas the value is taken directly from its $m$th patch.

Storing a separate set of memory patches at each patch position can be redundant, especially given that patchification is an architectural choice.
In \eqref{eqn:predictor-sca} and \eqref{eqn:predictor-smlp}, the patch-specific memory bank in $g^{\mrm{lpld}}_{m,t}$ is replaced with an untied key--value bank $\{k^{(r)},v^{(r)} \in \mbb{R}^{d_h'}\}_{r=1}^R$ shared across patches and layers.
Coupled with the softmax retrieval, this gives a \textbf{shared cross-attention (SCA)} predictor $g^{\mrm{sca}}_{t}$.
Finally, we replace normalized softmax weights with unnormalized GELU coefficients for a more flexible retrieval rule.
This gives a \textbf{shared MLP (SMLP)} predictor $g^{\mrm{smlp}}_{t}$.
Absorbing the projection matrices into the keys and values allows SMLP to recover the MLP form \eqref{eqn:mlp-memory}.
For all three variants, sharing the memory bank across layers can reduce the parameter count, allowing us to allocate more capacity to predictor heads at matched parameter budgets.

\section{Experiments}\label{sec:experiments}
Our experiments examine whether trained DiT denoisers exhibit locality patterns resembling those of the oracle GMRF denoiser (Sec.~\ref{sec:real-image-locality}), and whether, as suggested by our aggregator--predictor framework, layerwise MLPs can be replaced with more interpretable, shared-memory patch predictors (Sec.~\ref{sec:shared-predictor-experiment}).
Finally, we use component transplantation to examine whether compatibility patterns among jointly adapted QK, VO, and MLP parameters are consistent with this framework (Sec.~\ref{sec:component-roles-experiment}).
Experimental details and additional results are in Appendix~\ref{app:experimental-details}.

\subsection{Real-image denoising echoes GMRF locality}
\label{sec:real-image-locality}

\begin{figure}[t]
    \centering
    \includegraphics[width=0.95\textwidth]{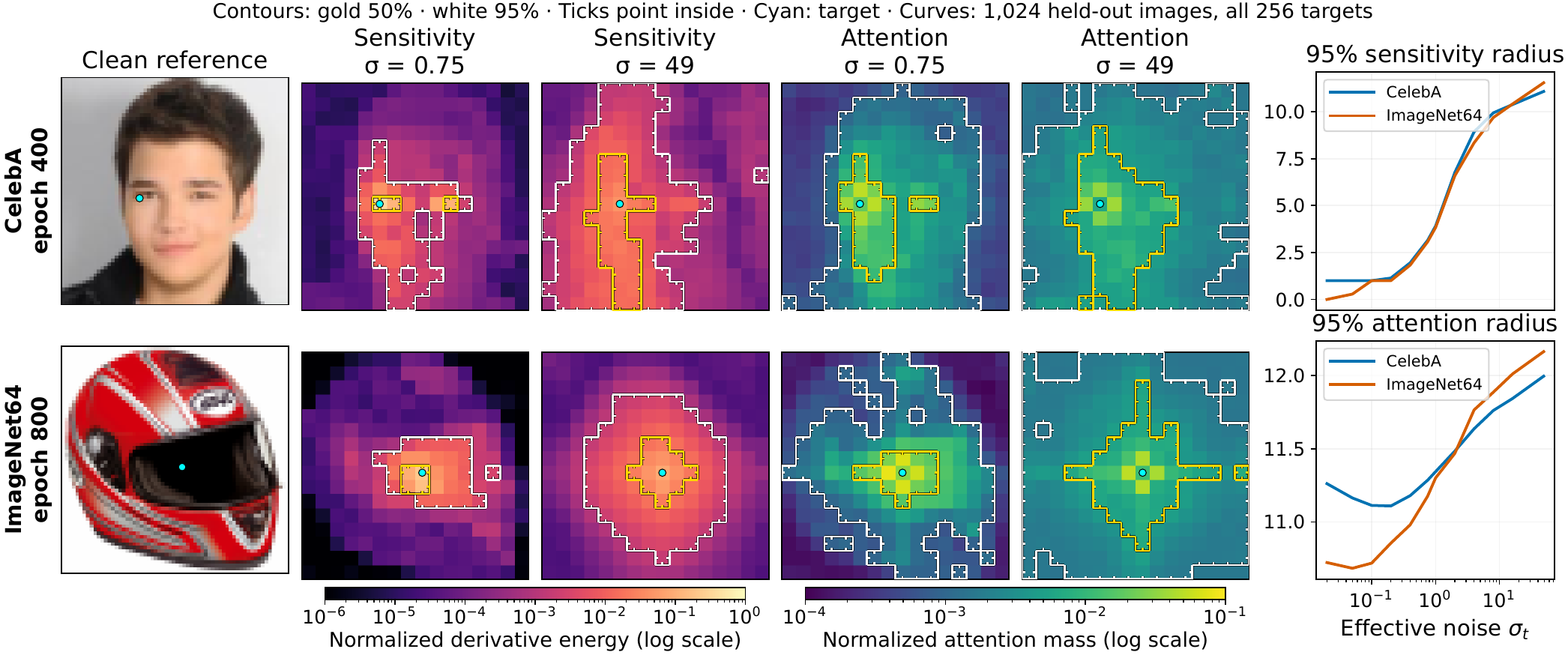}
    \vspace{-0.2cm}
    \caption{\textbf{Sensitivity-defined contexts broaden with noise, with attention mass showing a related trend.}
    Gold/white contours enclose 50\%/95\% of normalized squared sensitivity or attention mass; ticks point inward and cyan dots mark target patches.
    Right: 95\% radii averaged over target patches; Details in Appendix~\ref{app:exp1-2}.}
    \label{fig:exp1-2-real}
\end{figure}

An exact analysis as in Fig.~\ref{fig:exp1-1-synthetic} is unavailable for real images.
Instead, we identify context sets using \emph{sensitivity weights}, which quantify how the prediction of patch $m$ responds to perturbations of input patch $l$, based on a trained DiT's Jacobian, similar to \citet{niedoba2025towards}.
The population denoiser's Jacobian measures posterior dependence between clean patches given the noisy patches \citep{dytso2021general}.
Based on these weights, we define the \emph{$q$-sensitivity radius} as the smallest radius of a square centered on patch $m$ containing at least a fraction $q$ of the total weight.
The \emph{sensitivity context set} contains the fewest patches carrying at least that fraction.

\textbf{Denoising sensitivity exhibits GMRF-like broadening; attention shows a related trend.}
For the synthetic GMRF where the true Jacobian is available, Fig.~\ref{fig:exp1-1-synthetic}(c) shows that sensitivity spreads over broader contexts as noise increases.
Denoising real images echoes this pattern: Fig.~\ref{fig:exp1-2-real} shows concentrated sensitivity at low noise and broader, sometimes disconnected contexts at higher noise, together with increasing sensitivity radii.
Notably, replacing sensitivity weights with attention mass yields analogous attention contexts and radii, which also generally broaden with noise.
Together, these observations support the GMRF setting as a tractable reference model for denoising locality and the interpretation of attention as an input-adaptive context aggregator.
Sensitivity and attention mass measure different quantities, so the comparison concerns a shared qualitative trend rather than identical context sets.
Additional results appear in Appendix~\ref{app:exp1-2}.

\begin{figure}[!b]
    \centering
    \includegraphics[width=0.95\textwidth]{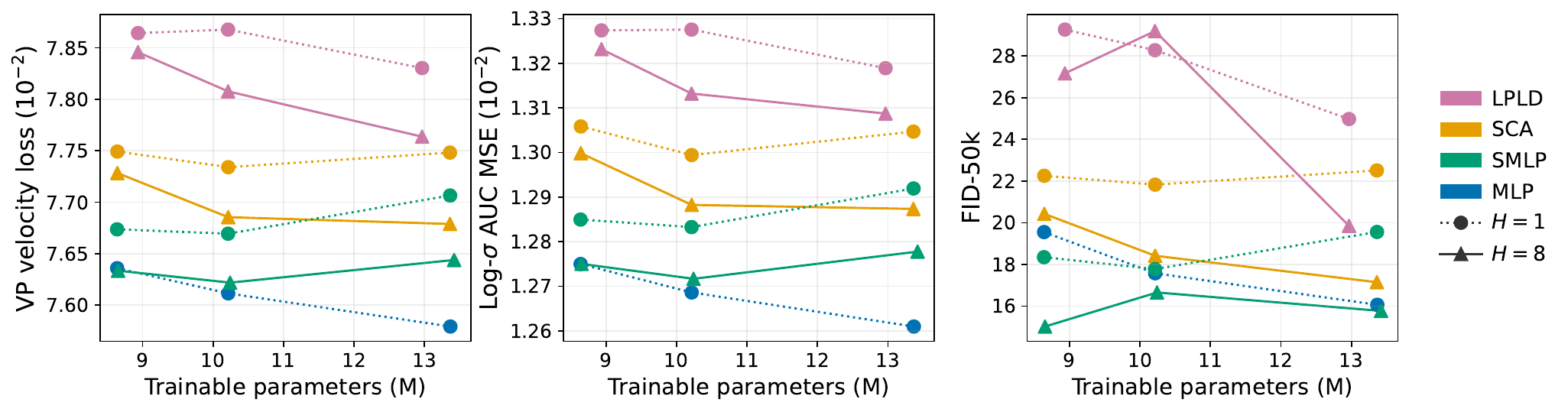}
    \vspace{-0.2cm}
    \caption{\textbf{Shared-memory patch predictors at comparable parameter budgets.}
    Models are trained on CelebA64 for 400 epochs; see Appendix~\ref{app:exp2} for details.}
    \label{fig:shared-predictor-efficiency}
\end{figure}

\subsection{Shared-memory predictors can replace MLPs in a controlled setting}
\label{sec:shared-predictor-experiment}
We replace DiT MLPs with LPLD, SCA, and SMLP on CelebA64.
We compare similar parameter budgets using a common backbone architecture, with modest models and one training seed per configuration for unconditional $64\times64$ face generation.
Figure~\ref{fig:shared-predictor-efficiency} reports held-out velocity loss, denoising error, and FID at epoch 400.
Three patterns appear; first, the patch-shared SCA and SMLP banks outperform the position-specific LPLD bank at comparable parameter budgets, showing that competitive prediction need not assign separate memory for every patch position.
Second, SMLP improves on SCA, supporting unnormalized GELU retrieval as an effective alternative to softmax.
Third, SMLP with $H=8$ achieves competitive denoising and lower FID than the ordinary MLP, despite sharing its base memory across layers.
Multiple predictor heads generally improve performance, suggesting that multiple projections of the shared bank can be useful.
Together, these results demonstrate that an explicit, layer-shared memory bank can support competitive patch prediction in this controlled setting. 
We note that these results by themselves do not establish that layerwise MLP parameters generally encode a single shared memory or that this design transfers unchanged to larger conditional DiTs.
Training details and limitations are in Appendix~\ref{app:exp2}.

\begin{figure}[!htb]
    \centering
    \includegraphics[width=0.95\textwidth]{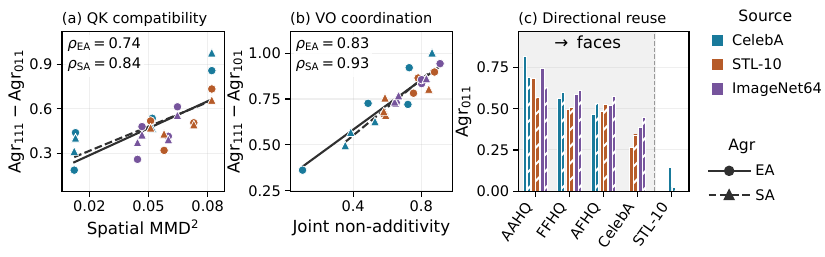}
    \vspace{-0.35cm}
    \caption{\textbf{QK, VO, and MLP component transplants.}
    (a) Source-QK compatibility is higher across spatially similar datasets.
    (b) VO reversion costs more as joint non-additivity increases.
    (c) Natural-image source QK retains partial agreement on face targets.
    See Appendix~\ref{app:exp3-endpoint-agreement}.}
    \label{fig:transfer-quantitative}
\end{figure}

\subsection{Component transplants support the aggregator--predictor view}
\label{sec:component-roles-experiment}

\textbf{Transplanting jointly adapted components probes compatibility and coadaptation.}
To examine compatibility patterns in relation to the proposed component roles, we fine-tune a pretrained source model on a target dataset.
We then transplant combinations of the adapted components into the source checkpoint. 
When the datasets have similar spatial structure, our framework suggests that source QK may remain compatible with adapted VO and MLP parameters.
Larger spatial shifts may be associated with stronger coadaptation.
We study twelve transfer directions between face and natural-image datasets.
For clarity, we jointly fine-tune only QK, VO, and MLP, leaving AdaLN and all other parameters fixed, assuming that the learned time conditioning transfers across domains. 
Each of the eight resulting hybrids is labeled by QK/VO/MLP bits: 0 for source weights and 1 for jointly adapted weights.
These transplants probe the compatibility and coadaptation of parameters learned jointly; they do not measure the performance obtainable by training QK-, VO-, or MLP-only models. 
Separately, under the same target-domain update budget, QK-only and QK+VO-only fine-tuning generally fall further behind joint adaptation on larger domain shifts (Appendix~\ref{app:exp3-trajectories}).

Following work on patch self-similarity and spatial normalization \citep{tumanyan2022splicing,singh2025spatialalignment}, we measure spatial distance between datasets using MMD$^2$ of patch self-similarity descriptors.
We compare each hybrid's outputs with the fully adapted model's using matched initial noise.
We measure this agreement using DINOv2 global features for endpoint agreement (EA) and patch self-similarity patterns for spatial agreement (SA). 
Both scores assign 0 to the source and 1 to the fully adapted model.
Full protocols and results are in Appendix~\ref{app:exp3}.

\textbf{QK compatibility is higher for spatially similar datasets, while VO reversion penalties track component coadaptation.}
With source QK retained (011), CelebA$\to$AAHQ/FFHQ/AFHQ produces recognizable faces. 
In contrast, among the tested CelebA$\to$STL-10 hybrids, coherent target-like structure emerges primarily only when all jointly adapted components are present (111; Figs.~\ref{fig:transfer-flagship} and~\ref{fig:exp3-extra-celeba64}).
Consistent with this pattern, the QK reversion penalty tends to increase with spatial distance (Fig.~\ref{fig:transfer-quantitative}(a)), which also accompanies stronger component coadaptation (Appendix~\ref{app:exp3-results}). 
Additionally, restoring source VO (101) yields fragmented textures in the natural-image-to-face transfers (Figs.~\ref{fig:transfer-stl10} and~\ref{fig:transfer-imagenet64}).
Its reversion penalty increases with joint non-additivity—the gain from joint adaptation beyond the sum of isolated component changes (Fig.~\ref{fig:transfer-quantitative}(b)).

\textbf{Source QK from broader natural-image domains retains partial compatibility on face targets.}
With source QK retained (011), STL-10 hybrids recover human and animal face structure, including a blurred face for STL-10$\to$CelebA, while joint adaptation produces clearer results (Fig.~\ref{fig:transfer-stl10}).
This contrasts with the poorly formed scene in the reverse CelebA$\to$STL-10 direction (Fig.~\ref{fig:transfer-flagship}); additional examples in Appendix~\ref{app:exp3-generated-cubes}.
Figure~\ref{fig:transfer-quantitative}(c) supports this asymmetry: source QK from STL-10 retains greater endpoint agreement on CelebA than in the reverse direction, while QK from natural-image domains retains partial agreement on face targets at levels comparable to CelebA QK.

\section{Conclusion and limitations}
\label{sec:conclusion}
We presented an aggregator--predictor view of diffusion transformers as input-adaptive local denoisers.
Our theory justifies local denoising under GMRF priors and connects patchwise linear denoisers to the empirical local denoising target.
Within this framework, QK determines spatial context weights, VO transports features, and MLPs perform contextualized patch prediction.
Experiments provide complementary evidence through noise-dependent denoising contexts, competitive shared-memory predictors, and cross-domain component compatibility patterns.
Together, these findings offer a statistical and architectural perspective on DiT denoising and motivate further work on interpretable predictors and targeted model adaptation.

Our experiments focus on modest-sized, unconditional DiTs operating on $64\times64$ images across a limited collection of face and natural-image datasets.
In particular, the shared-memory predictor comparison uses CelebA64 with one training seed per configuration, leaving its robustness across datasets, model scales, and training seeds unresolved.
The transfer study examines twelve directions while keeping AdaLN and global parameters fixed; its component transplants probe compatibility among jointly adapted weights and do not uniquely identify the proposed functional roles.
Testing these patterns in larger conditional models, at higher resolutions, and under broader adaptation protocols is an important next step.

\subsection*{AI use statement}
Generative AI tools were used as research assistants for literature review, refining the presentation of the work, writing and cleaning up code, assisting with experiment execution, cleaning up proofs, providing feedback on the manuscript, and polishing the text.
All AI-assisted results were independently reviewed and verified by the authors.
The authors made the final scientific and methodological decisions and take full responsibility for the contents of the paper.

\subsection*{Reproducibility statement}
Assumptions and proofs are in Appendix~\ref{app:proofs}.
Appendix~\ref{app:experimental-details} documents the datasets, model configurations, training and evaluation protocols, and metric definitions for all three experiments.
We provide \href{https://github.com/hun-learning94/dit-denoise}{code and reproduction instructions} for dataset preparation, training, evaluation, and regeneration of the figures and tables, together with the small numerical summaries used to render the reported plots.

\bibliography{references}
\bibliographystyle{plainnat}

\appendix
\appendixcontentstrue
\renewcommand{\contentsname}{Appendix contents}
\tableofcontents

\section{Related Literature}
\label{sec:literature}

\paragraph{Representations and compositional reuse.}
Diffusion models learn internal features useful for semantic segmentation and discriminative probing \citep{baranchuk2022label,xiang2023denoising}, and semantic and geometric correspondence can emerge without correspondence supervision \citep{tang2023emergent}.
\citet{chen2025deconstructing} connect this ability to classical denoising autoencoders by simplifying diffusion models and studying which components support self-supervised representation learning.
\citet{han2025feature} provide theoretical and empirical evidence that denoising encourages representations covering multiple data features, including features that classifiers may neglect.
REPA connects representation quality to generative training by aligning hidden states with pretrained visual features in diffusion and flow-based transformers \citep{yu2025representation}.
\citet{singh2025spatialalignment} further identify the spatial structure of the alignment target as a strong predictor of generation performance.

Compositional generation provides evidence of reuse: \citet{okawa2023compositional} demonstrate unseen combinations in a synthetic task, and \citet{favero2025compositional} analyze the learning of composition rules in a hierarchical grammar, where longer-range dependencies require more data.
The extent of this ability depends on the task; \citet{han2025hidden} find that diffusion models can learn coarse relationships while failing finer dependencies, including reflection and lighting consistency.
These findings motivate the introduction's representation-and-reuse framing.
In our denoising account, a reuse rule has a specific meaning: it determines which context patches supply evidence for predicting a clean patch from a new noisy image.

\paragraph{Memorization and generalization.}
Under a finite empirical prior, the squared-error denoising objective is minimized by a weighted average of training images, with weights determined by whole-image comparisons as in \eqref{eq:empirical-image-denoiser}.
With the corresponding terminal initialization, exact reverse dynamics recover the empirical distribution \citep{pidstrigach2022score,yoon2023diffusion,gu2023memorization,biroli2024dynamical,kamb2025analytic}.
Several explanations address why learned denoisers nevertheless generalize.
Training dynamics can favor generalizing solutions before memorization, with larger datasets delaying memorization \citep{bonnaire2025implicit}; large learning rates can also steer training away from the memorizing solution \citep{wu2025taking}.
Other accounts characterize the learned estimator: \citet{kadkhodaie2024generalization} connect generalization to geometry-adaptive harmonic representations, while \citet{zhang2026generalization} prove a distinction between sample storage and balanced representations of local statistics for a two-layer ReLU denoising autoencoder, and investigate analogous structure in larger models.
Our framework concerns how DiT components implement denoising computations compatible with such statistical structure.

\paragraph{Local denoising and its statistical target.}
Local empirical denoisers explain novel generation through patch recombination.
\citet{niedoba2025towards} use cropped neighborhoods to approximate trained denoisers; \citet{kamb2025analytic} derive estimators from locality and translation equivariance; and \citet{lukoianov2025locality} choose context from the sensitivity of the optimal linear denoiser, emphasizing the role of data statistics.
These constructions motivate the \emph{patchwise linear denoiser (PLD)} in Section~\ref{sec:setup}.
In \eqref{eq:patchwise-local-denoiser}, a softmax comparison through $S_{m,t}$ determines how stored training patches $x_m^{(i)}$ contribute to a prediction.
Setting $S_{m,t}=I$ and $\Gamma_{m,t}=\sigma_t^{-2}I$ recovers whole-image retrieval; local contexts allow the weights to differ across patches and hence permit recombination.

Our statistical contribution identifies a population target for this construction.
Theorem~\ref{thm:gmrf-locality} bounds the discrepancy between population local and global denoisers under a GMRF prior.
Theorem~\ref{thm:pld-is-local} identifies when sufficient aggregation lets a PLD recover the empirical local conditional mean exactly; this identity alone is not a finite-sample bound on population risk.
Together, the results motivate learning both the context aggregator and the patch predictor.
The stored patches and context comparisons in the PLD provide an explicit example of content and reuse; Section~\ref{sec:dit-decomposition} develops their learned counterparts in a DiT.

\paragraph{Architectural accounts of denoising.}
\citet{mei2024unet} show how U-Net encoder--decoder paths and skip connections can implement belief-propagation denoising in hierarchical graphical models, providing a direct precedent for interpreting architecture through a statistical denoising algorithm.
For transformers, \citet{rosu2025softmaxscore} show that attention can implement exact score-based denoising; their image experiments use whole-image tokens, although an anisotropic extension permits patchwise pixel updates.
\citet{an2025inductive} associate DiT generalization with local attention in early layers and show benefits from restricting attention windows with limited training data.
\citet{gan2025unleashing} link concentrated large feature activations to AdaLN and use channel modulation to improve DiT features for correspondence.
These works explain architectural or conditioning mechanisms relevant to denoising and representation learning.
Our account connects attention, MLPs, and AdaLN through context aggregation followed by patch prediction, allowing context to vary with the input and noise level, including the non-contiguous patterns in Section~\ref{sec:real-image-locality}.

\paragraph{MLPs as learned memory.}
Language-model MLPs have been interpreted as key--value memories that store and retrieve learned associations \citep{petroni2019language,geva2021transformer,geva2022transformer}; \citet{sukhbaatar2019augmenting} express them as attention over persistent memory vectors.
This interpretation has motivated locating and editing facts through MLP weights \citep{dai2022knowledge,meng2022locating,meng2023mass}, with related analyses and interventions in multimodal models \citep{schwettmann2023multimodal,pan2024finding,basu2024understanding,pan2024unike}.
At the level of the generative process, \citet{hoover2023memory} survey connections between diffusion dynamics and retrieval in energy-based associative memories.
Our DiT account assigns a memory role specifically to the MLP's patch prediction from aggregated context (Section~\ref{sec:patch-predictors}).
Replacing layerwise MLPs with explicit memory shared across layers tests this interpretation at the level of the predictor component (Section~\ref{sec:shared-predictor-experiment}).
The representation may be distributed across learned memory entries; the interpretation does not require individual entries to correspond to named concepts.

\paragraph{Localization, editing, and component interventions.}
Several analyses of text-to-image diffusion localize visual attributes in text-encoder self-attention or U-Net cross-attention \citep{basu2024localizing,basu2024mechanistic}, including neurons associated with concepts or memorized training images \citep{liu2023cones,hintersdorf2024finding}.
For DiTs, \citet{zarei2025localizing} identify blocks associated with specific concepts and use localized updates for personalization and unlearning.
Their component ablations also intervene on self-attention, normalization, and MLP outputs, with cross-attention giving the strongest localization signal in the tested PixArt setting.
Thus, component intervention already has direct precedents in DiTs.

Related editing methods illustrate the practical value of these distinctions.
Custom Diffusion adapts selected parameters for concept customization \citep{kumari2023customdiffusion}; Plug-and-Play Diffusion Features injects spatial features and self-attention to preserve layout during translation \citep{tumanyan2023plugandplay}; and \citet{shin2025exploring} analyze text--image attention blocks to design editing interventions for multimodal DiTs.
\citet{turri2026few} further show that transferring high-magnitude hidden-state channels between prompt-conditioned trajectories can steer image semantics in DiT and related flow-based models.
These activation interventions complement our parameter transplants, which ask which QK, VO, and MLP computations remain compatible across image domains, guided by the aggregator--predictor account (Section~\ref{sec:component-roles-experiment}).
The transplants probe jointly adapted components with AdaLN fixed, so their outcomes measure reuse and coadaptation under that intervention protocol (Appendix~\ref{app:exp3}).

\paragraph{Gaussian and Markov image priors.}
The GMRF prior in Theorem~\ref{thm:gmrf-locality} combines joint Gaussian statistics with conditional independence between non-neighboring patches given their graph neighbors.
Markov image models developed from spatial statistics \citep{besag1974spatial} into methods for restoration, texture modeling, and segmentation \citep{geman1984stochastic,li2009markov}, with higher-order models extending interactions to larger neighborhoods and patches \citep{roth2005fields,weiss2007what,chen2014insights}.
Gaussian MRFs \citep{rue2005gaussian} have been used for texture modeling \citep{chellappa1985classification}, Bayesian restoration \citep{shouno2015bayesian,yasuda2018linear,monma2022hierarchical}, and learned conditional denoising \citep{tappen2007learning}.
The Gaussian optimal denoiser is the Wiener filter \citep{wiener1949extrapolation,andrews1977digital}; its tractability also supports exact analyses of diffusion sampling \citep{pierret2025diffusion} and approximations to learned scores at high noise \citep{wang2024unreasonable}.
We use this tractable prior with patches as variables on an arbitrary graph to establish a locality guarantee, and examine the corresponding qualitative patterns on real images separately.

\section{Proofs}
\label{app:proofs}

\subsection{Notation}
\label{app:notation}

For a positive integer $M$, let $[M]:=\{1,\ldots,M\}$. Vectors in $\mbb{R}^{Md_p}$ are partitioned into $M$ blocks of dimension $d_p$; $x_m$ denotes block $m$ of a vector $x$, and $A_{ml}$ denotes the $(m,l)$ block of a conformably partitioned matrix $A$. 
The identity matrix is denoted by $I$, with its dimension determined by context, and $A^\top$ denotes the transpose of $A$.

For a vector $x$, $\|x\|_2$ is its Euclidean norm. 
For a matrix $A$, $\|A\|_2:=\sup_{\|x\|_2=1}\|Ax\|_2$ is the induced operator norm and $\|A\|_F$ is the Frobenius norm. 
We write $\spec(A)$ for the spectrum of $A$, $\lambda_{\mrm{min}}(A)$ and $\lambda_{\mrm{max}}(A)$ for the smallest and largest eigenvalues of a real symmetric matrix, and $\opn{tr}(A)$ for its trace. 
For real symmetric matrices, $A\preceq B$ means that $B-A$ is positive semidefinite.

For the graph $G_0=([M],E_0)$, $d_{G_0}(m,l)$ is the length of a shortest path between vertices $m$ and $l$, and $B_r(m):=\{l\in[M]:d_{G_0}(m,l)\leq r\}$ is the closed graph ball of radius $r$ centered at $m$. 
For a block vector $x$, $x_{B_r(m)}$ collects the blocks indexed by $B_r(m)$. Finally, $\mbb{E}$ denotes expectation and $\N(\mu,\Sigma)$ denotes a Gaussian distribution with mean $\mu$ and
covariance $\Sigma$.

For the conditional-mean decompositions, work at fixed time $t$ on a probability space $(\Omega,\mcl{F},\mbb{P})$ carrying $X_0$ and the independent Gaussian noise in the corruption model, and assume $\mbb{E}\|X_0\|_2^2<\infty$.
All Euclidean spaces are equipped with their Borel $\sigma$-fields, and measurability of maps between them is understood in this sense.
For a random vector $Z$, write $\opn{sf}(Z)\subseteq\mcl{F}$ for the $\sigma$-field it generates, and let $P_{Y_t}$ denote the law of $Y_t$.
Inclusions between $\sigma$-fields are understood modulo $\mbb{P}$-null sets.

\subsection{Background facts}
\label{app:background}

\subsubsection{Chebyshev polynomials and the reciprocal expansion}
\label{app:chebyshev}
The Chebyshev polynomials are a family of orthogonal polynomials on $[-1,1]$ that are useful for constructing near-minimax polynomial approximations. 
We use the Chebyshev polynomials of the first kind, defined by the recurrence $T_0(s) = 1$, $T_1(s) = s$, and $T_{r+1}(s) = 2sT_r(s) - T_{r-1}(s)$ for $r\ge 1$.
Equivalently, $T_r(\cos \theta) = \cos(r\theta)$ for $\theta \in [0,\pi]$. 
$T_r$ has degree $r$ and $|T_r(s)|\le 1$ for $s\in [-1,1]$.
For general background, see \citet{wikipedia:chebyshev_polynomials}.
The ordinary generating function for $T_r$ is given by \citep[Eq.~18.12.8]{nistdlmf2026}
\begin{align}\label{app-eqn:chebyshev-generating}
    \sum_{r=0}^{\infty} T_r(s) z^r = \frac{1-sz}{1-2sz+z^2},\qquad |z|<1,\quad s\in[-1,1].
\end{align}

We now introduce a useful lemma on the Chebyshev approximation of the reciprocal function with exponentially decaying error in the degree $r$.
\begin{lemma}
    Let $0<h_-<h_+$ and $r\ge 0$.
    There exists a polynomial $\psi_r(x)$ of degree at most $r$ such that
    \begin{align}\label{app-eqn:chebyshev-reciprocal}
        \sup_{x\in[h_{-},h_+]} \left|\frac{1}{x} - \psi_r(x)\right| = \frac{2}{\sqrt{h_- h_+}(1-\rho)}\rho^{r+1},
    \end{align}
    where $\rho = (\sqrt{h_+/h_-}-1)/(\sqrt{h_+/h_-}+1)$ is the Chebyshev convergence factor.
\end{lemma}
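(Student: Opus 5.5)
We plan to reduce to the standard interval and then use the Chebyshev generating function \eqref{app-eqn:chebyshev-generating} to get an explicit Chebyshev series for the reciprocal. The polynomial $\psi_r$ will be its degree-$r$ truncation.

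\textbf{Change of variables.} Set $c=(h_++h_-)/2$ and $d=(h_+-h_-)/2$, so that $x=c+ds$ maps $s\in[-1,1]$ onto $[h_-,h_+]$. Note that $c^2-d^2=h_+h_-$. Let $\rho$ be the root in $(0,1)$ of $d\rho^2-2c\rho+d=0$, namely $\rho=(c-\sqrt{h_+h_-})/d$. A direct computation gives
\[
\rho=\frac{(\sqrt{h_+}-\sqrt{h_-})^2}{h_+-h_-}=\frac{\sqrt{h_+}-\sqrt{h_-}}{\sqrt{h_+}+\sqrt{h_-}},
\]
which is exactly the factor in the statement. It also gives the identity $1+\rho^2=2c\rho/d$.

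\textbf{Series for $1/(c+ds)$.} Rewriting \eqref{app-eqn:chebyshev-generating} as
\[
\tfrac12+\sum_{k\ge1}T_k(s)z^k=\frac{1-z^2}{2(1-2sz+z^2)},
\]
and substituting $z=-\rho$ (legitimate since $|\rho|<1$), we get $1+2s\rho+\rho^2=(2\rho/d)(c+ds)$. Hence
\[
\frac{1}{c+ds}=\frac{2}{\sqrt{h_+h_-}}\Bigl[\tfrac12+\sum_{k\ge1}(-\rho)^kT_k(s)\Bigr],
\]
after checking that $(1-\rho^2)d/(2\rho)=\sqrt{c^2-d^2}$. This check follows from the quadratic: $(1-\rho^2)d/(2\rho)=d/\rho-c$, and $d/\rho=c+\sqrt{c^2-d^2}$ is the other root.

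\textbf{Truncation.} Define $\psi_r(x)$ as the partial sum up to $k=r$, evaluated at $s=(x-c)/d$. It is a polynomial of degree at most $r$ in $x$. Using $|T_k|\le1$, the tail is bounded uniformly on $[h_-,h_+]$ by
\[
\frac{2}{\sqrt{h_+h_-}}\sum_{k>r}\rho^k=\frac{2\rho^{r+1}}{\sqrt{h_-h_+}(1-\rho)}.
\]

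\textbf{Equality and the main obstacle.} The statement claims equality, not just an upper bound, so we must show the bound is attained. At $s=-1$ (that is, $x=h_-$) we have $T_k(-1)=(-1)^k$. Every tail term then equals $\rho^k>0$, so the tail sums exactly to the bound and the supremum is attained.

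The only delicate step is the algebra matching the generating-function normalization to $2/\sqrt{h_+h_-}$, via the identities above. Once that holds, the truncation bound and its attainment at $x=h_-$ are immediate.
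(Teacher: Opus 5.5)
Your proposal is correct and follows the same route as the paper. The paper also maps $[h_-,h_+]$ to $[-1,1]$ and substitutes $z=-\rho$ into the generating function to get $1/x=\frac{1}{\sqrt{h_-h_+}}\bigl(1+2\sum_{k\ge1}(-\rho)^kT_k(s(x))\bigr)$, truncates at degree $r$, bounds the tail using $|T_k|\le 1$, and gets equality at $x=h_-$ from $T_k(-1)=(-1)^k$; your explicit check that $(1-\rho^2)d/(2\rho)=\sqrt{h_-h_+}$ fills in the ``simplifying'' step the paper leaves to the reader.
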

\begin{proof}
    Let $s(x)=(2x-h_+-h_-)/(h_+-h_-)$ be the linear transformation that maps $[h_-,h_+]$ to $[-1,1]$.
    Since $T_0(s)=1$ and the series converges absolutely and uniformly over $s\in [-1,1]$, from \eqref{app-eqn:chebyshev-generating}, we have
    \begin{align*}
        1+2\sum_{r=1}^\infty z^r T_r(s) = \frac{1-z^2}{1-2sz+z^2},\qquad |z|<1,\quad s\in[-1,1].
    \end{align*}
    Substituting $z=-\rho$ and $s=s(x)$ and simplifying gives
    \begin{align*}
        \frac{1}{x} = \frac{1}{\sqrt{h_-h_+}}\left(1 + 2\sum_{r=1}^\infty (-\rho)^r T_r(s(x))\right),\qquad x\in[h_-,h_+].
    \end{align*}
    Now, define the degree-$r$ truncated Chebyshev polynomial as
    \begin{align}\label{app-eqn:chebyshev-reciprocal-truncated}
        \psi_r(x) := \frac{1}{\sqrt{h_-h_+}}\left(1 + 2\sum_{k=1}^r (-\rho)^k T_k(s(x))\right).
    \end{align}
    Then, since $|T_k(s(x))|\leq 1$, we have, for $x\in[h_-,h_+]$,
    \begin{align*}
        \left|\frac{1}{x} - \psi_r(x)\right| = 
        \frac{2}{\sqrt{h_- h_+}}\left|\sum_{k=r+1}^\infty (-\rho)^k T_k(s(x))\right| \leq
        \frac{2}{\sqrt{h_- h_+}}\sum_{k=r+1}^\infty \rho^k =
        \frac{2}{\sqrt{h_- h_+}(1-\rho)}\rho^{r+1}.
    \end{align*}
    At $x=h_-$, we have $s(h_-)= -1$ and $T_k(-1)=(-1)^k$, so $(-\rho)^k T_k(s(h_-)) = \rho^k$ for all $k\ge 1$, and hence the upper bound is tight.
\end{proof}

\section{Main proofs}

\subsection{Full statement and proof of Theorem~\ref{thm:gmrf-locality}}\label{app:gmrf-locality}

\begin{gmrfstatement}[Exponential locality of GMRF; full statement]
  Suppose $X_0\sim\N(0,Q^{-1})$, where $Q\in\mbb{R}^{Md_p\times Md_p}$ is symmetric positive definite and its off-diagonal patch blocks vanish outside the edges of $G_0=([M],E_0)$.
  Let $X_t=a_tX_0+b_t\varepsilon$ with $a_t,b_t>0$ and independent $\varepsilon\sim\N(0,I_{Md_p})$, and set $Y_t=X_t/a_t$ and $\sigma_t=b_t/a_t$.
  For $B_r(m)=\{l:d_{G_0}(m,l)\leq r\}$, define
  \[
    D_{m,t}^\ast(y)=\mbb{E}[X_{0,m}\mid Y_t=y,t],\qquad
    D_{m,t}^{\ast,r}(y)=\mbb{E}[X_{0,m}\mid Y_{t,B_r(m)}=y_{B_r(m)},t].
  \]
  Let $q_-:=\lambda_{\min}(Q)$ and $q_+:=\lambda_{\max}(Q)$ satisfy $0<q_- < q_+<\infty$.
  Then for every patch $m$ and every integer radius $r\geq 0$,
  \begin{align}\label{eqn:chebyshev-locality}
        \mathbb{E}\left\|D_{m,t}^\ast(Y_t) - D_{m,t}^{\ast,r}(Y_t)\right\|_2^2 \leq 
        \frac{d_p}{q_-}
        \left[
        \frac{\left(1-1/\sqrt{\chi_t}\right)^2 }{1+\sigma_t^2 q_-}
        \right]
        \rho_t^{2r},
    \end{align}
    where $\chi_t = (1+\sigma_t^2 q_+)/(1+\sigma_t^2 q_-)$ is the condition number of $I+\sigma_t^2Q$ and $\rho_t = (\sqrt{\chi_t}-1)/(\sqrt{\chi_t}+1)$ is the Chebyshev convergence factor.
    In the notation of Assump.~\ref{assm:exponential-locality}, the bound is $C_t\exp(-c_t r)$ with
    \[
        C_t=\frac{d_p(1-1/\sqrt{\chi_t})^2}{q_-(1+\sigma_t^2q_-)},
        \qquad c_t=2\log(1/\rho_t)>0.
    \]
    If $q_-=q_+$, then $Q=q_-I$ and radius zero is exact.
\end{gmrfstatement}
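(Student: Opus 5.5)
The plan is to use the fact that a conditional expectation is an $L^2$ projection. Fix $m$, $t$, $r$ and write $A:=I+\sigma_t^2Q$, so that $D_{m,t}^\ast(y)=[A^{-1}y]_m$. Since $D_{m,t}^{\ast,r}(Y_t)$ is the orthogonal projection of $X_{0,m}$ onto $L^2(\opn{sf}(Y_{t,B_r(m)}))$, and $\opn{sf}(Y_{t,B_r(m)})\subseteq\opn{sf}(Y_t)$, the tower property gives the following. For every square-integrable $f$ that is measurable with respect to $\opn{sf}(Y_{t,B_r(m)})$,
\[
\mbb{E}\|D_{m,t}^\ast(Y_t)-D_{m,t}^{\ast,r}(Y_t)\|_2^2\le \mbb{E}\|D_{m,t}^\ast(Y_t)-f\|_2^2 .
\]
This holds because $D^{\ast,r}$ is also the projection of $D^\ast$ onto that subspace. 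The locality error is therefore bounded by the error of any competitor that uses only radius-$r$ context. I will choose a \emph{linear} competitor $f=[\psi_r(A)Y_t]_m$, where $\psi_r$ is the truncated Chebyshev polynomial \eqref{app-eqn:chebyshev-reciprocal-truncated} on $[h_-,h_+]=[1+\sigma_t^2q_-,\,1+\sigma_t^2q_+]=[\lambda_{\min}(A),\lambda_{\max}(A)]$.

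The locality of this competitor comes from graph sparsity. $A$ has the same block sparsity pattern as $Q$, with nonzero off-diagonal blocks only on edges of $G_0$. By induction on $k$, the $(m,l)$ block of $A^k$ vanishes whenever $d_{G_0}(m,l)>k$. Since $\deg\psi_r\le r$, the $(m,l)$ block of $\psi_r(A)$ vanishes for $l\notin B_r(m)$. Hence $[\psi_r(A)y]_m$ depends only on $y_{B_r(m)}$, as required.

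Next I compute the competitor's error exactly. Let $E_m$ denote the block-row selector and set $F:=A^{-1}-\psi_r(A)$. Then $D^\ast-f=E_mFY_t$, and $\opn{Cov}(Y_t)=Q^{-1}+\sigma_t^2I=Q^{-1}A$. All of these matrices are functions of $Q$, so they commute, and
\[
\mbb{E}\|E_mFY_t\|_2^2=\opn{tr}\!\big(E_m F^2Q^{-1}A E_m^\top\big)\le d_p\,\big\|F^2Q^{-1}A\big\|_2=d_p\max_{q\in\spec(Q)}\frac{x_q\,\big(x_q^{-1}-\psi_r(x_q)\big)^2}{q},
\]
where $x_q=1+\sigma_t^2 q$. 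Bounding $1/q\le 1/q_-$ leaves the one-dimensional quantity $\sup_{x\in[h_-,h_+]} x\,(1/x-\psi_r(x))^2$. Here $\rho_t$ is exactly the Chebyshev factor of the earlier Lemma with $h_+/h_-=\chi_t$.

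I expect this last sup to be the main obstacle. The crude route of combining the Lemma's uniform bound with $x\le h_+$ yields
\[
(\sqrt{\chi_t}-1)^2\rho_t^{2r}/h_- .
\]
This is off by a factor $\chi_t$ from \eqref{eqn:chebyshev-locality}. The target constant equals precisely the value of $x(1/x-\psi_r)^2$ at $x=h_-$, where the Lemma showed the error is tight, namely $4\rho_t^{2r+2}/(h_+(1-\rho_t)^2)=(1-1/\sqrt{\chi_t})^2\rho_t^{2r}/h_-$. To close the gap I would first try to obtain the tail in closed form via the generating function \eqref{app-eqn:chebyshev-generating}. Writing $s=\cos\theta$, the tail $\sum_{k>r}(-\rho)^kT_k(s)$ sums to $(-\rho)^{r+1}\big(T_{r+1}(s)+\rho T_r(s)\big)/(1+2\rho s+\rho^2)$. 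Combined with the identity $1/x=(1-\rho^2)/(\sqrt{h_-h_+}(1+2\rho s+\rho^2))$, the weighted error becomes
\[
x\,(1/x-\psi_r)^2 \;\propto\; \frac{\big(T_{r+1}+\rho T_r\big)^2}{1+2\rho s+\rho^2},
\]
and I would then show that this is maximised at $s=-1$. If that monotonicity argument resists, my fallback is to use the weighted residual $1-x\psi_r(x)$ directly.

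Finally, substituting the sup back, and using $1/q_-$ together with $1/h_-=1/(1+\sigma_t^2q_-)$, produces \eqref{eqn:chebyshev-locality} with $C_t$ and $c_t=2\log(1/\rho_t)$ as stated. In the case $q_-=q_+$ we have $Q=q_-I$, so $A^{-1}$ is block diagonal and radius zero is already exact.
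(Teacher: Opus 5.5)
Your overall approach is the paper's, and it is correct up to the final step: the projection inequality, the truncated Chebyshev competitor $[\psi_r(A)Y_t]_m$, the walk-counting sparsity argument, and the trace computation all hold. Your reduction to $d_p\max_{q\in\spec(Q)}(x_q/q)\,(x_q^{-1}-\psi_r(x_q))^2$ is also fine; it is at least as sharp as the paper's. The gap is in the last step, and you created it yourself. You bound $1/q\le 1/q_-$ and then $x\le h_+$ separately. The first bound is tight at the bottom of the spectrum and the second at the top, and that mismatch is exactly the lost factor $\chi_t$. Bound the weight jointly instead: $x_q/q=q^{-1}+\sigma_t^2\le q_-^{-1}+\sigma_t^2=h_-/q_-$. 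This is just $\|Q^{-1}+\sigma_t^2I\|_2$, which is what the paper uses. Multiply by the Lemma's uniform bound $4\rho_t^{2r+2}/(h_-h_+(1-\rho_t)^2)$ and use $2\rho_t/(\sqrt{h_-h_+}(1-\rho_t))=(1-1/\sqrt{\chi_t})/h_-$. This gives exactly $\frac{d_p}{q_-}\frac{(1-1/\sqrt{\chi_t})^2}{1+\sigma_t^2q_-}\rho_t^{2r}$, with no weighted supremum needed.

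Your proposed repair, as written, does not close the gap. It is true that $\sup_{[h_-,h_+]}x(1/x-\psi_r(x))^2$ is attained at $x=h_-$, but monotonicity cannot prove it. The ratio $(T_{r+1}(s)+\rho_tT_r(s))^2/(1+2\rho_ts+\rho_t^2)$ equals $1$ at both $s=-1$ and $s=1$, and it vanishes at interior roots of $T_{r+1}+\rho_tT_r$. What works is setting $s=\cos\theta$. Then $(\cos((r+1)\theta)+\rho_t\cos(r\theta))^2=(\Re[e^{ir\theta}(e^{i\theta}+\rho_t)])^2\le|e^{i\theta}+\rho_t|^2=1+2\rho_t\cos\theta+\rho_t^2$, so the ratio is at most $1$. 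The natural version of your fallback bounds $|1-x\psi_r(x)|$ uniformly and uses $1/x\le 1/h_-$. It fails for the same reason as before: $|1-x\psi_r(x)|$ peaks at $x=h_+$ while $1/x$ peaks at $x=h_-$, so it again loses $\chi_t$. Either the joint bound on $x_q/q$ (the paper's route) or the modulus inequality would complete your argument.
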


\begin{proof}
Since $Y_t=X_0+\sigma_t\varepsilon$, Bayes' rule gives posterior precision
$Q+\sigma_t^{-2}I=\sigma_t^{-2}(I+\sigma_t^2Q)$ and posterior mean
\begin{align*}
    D_{m,t}^\ast(y)=\left[(I+\sigma_t^2Q)^{-1}y\right]_m.
\end{align*}
Let $H_t:=I+\sigma_t^2Q$. The matrix $H_t$ is real symmetric positive definite, and
\begin{align*}
    \spec(H_t)\subseteq[1+\sigma_t^2q_-,1+\sigma_t^2q_+].
\end{align*}
Write its spectral decomposition as $H_t=U\Lambda_tU^\top$, where $U$ is orthogonal and $\Lambda_t$ is diagonal.
We first approximate the scalar inverse function with the degree-$r$ polynomial in \eqref{app-eqn:chebyshev-reciprocal-truncated}, and then apply that polynomial to $H_t$.

Take $h_{-,t}:=1+\sigma_t^2 q_-$ and $h_{+,t}:=1+\sigma_t^2 q_+$, so that $\chi_t = h_{+,t}/h_{-,t}$. 
Then by the Chebyshev approximation in \eqref{app-eqn:chebyshev-reciprocal}, the degree-$r$ Chebyshev polynomial $\psi_{r,t}(x)$ as given in \eqref{app-eqn:chebyshev-reciprocal-truncated} (with $h_-$ and $h_+$ replaced by $h_{-,t}$ and $h_{+,t}$) satisfies
\begin{align*}
\|H_t^{-1} - \psi_{r,t}(H_t)\|_2 &=
\|U \left(\Lambda_t^{-1} - \psi_{r,t}(\Lambda_t)\right) U^\top\|_2
\\&=
\max_{\lambda \in \spec(H_t)} \left|\frac{1}{\lambda} -\psi_{r,t}(\lambda)\right|\\
&\leq \frac{2\rho_t^{r+1}}{\sqrt{h_{-,t} h_{+,t}}(1-\rho_t)}.
\end{align*}
On the other hand, $H_t$ has the same off-diagonal block sparsity pattern as $Q$, so, for every integer $k\geq0$, $[H_t^k]_{ml}=0$ whenever $d_{G_0}(m,l)>k$.
Indeed, each nonzero term in the blockwise expansion of $[H_t^k]_{ml}$ corresponds to a length-$k$ walk from $m$ to $l$, with diagonal blocks allowing the walk to remain at its current vertex.
Because $\psi_{r,t}$ has degree at most $r$, the predictor
\begin{align*}
    \widetilde{D}_{m,t}^{r}(y):=[\psi_{r,t}(H_t)y]_m
\end{align*}
depends only on $y_{B_r(m)}$ and is therefore measurable with respect to $Y_{t,B_r(m)}$.
Graph connectivity determines which patches lie within each radius, while the conditioning of $I+\sigma_t^2Q$ controls the decay rate: worse conditioning slows the certified decay and can require broader context.

By the tower property,
$D_{m,t}^{\ast,r}(Y_t)=\mbb{E}[D_{m,t}^\ast(Y_t)\mid Y_{t,B_r(m)},t]$.
The projection property of conditional expectation therefore implies
\begin{align*}
    \mbb{E}\|D_{m,t}^\ast(Y_t)-D_{m,t}^{\ast,r}(Y_t)\|_2^2
    \leq
    \mbb{E}\|D_{m,t}^\ast(Y_t)-\widetilde{D}_{m,t}^{r}(Y_t)\|_2^2.
\end{align*}
Construct $P_m \in \mbb{R}^{d_p\times Md_p}$ so that it selects patch $m$ of a noisy image $y\in \mbb{R}^{Md_p}$, i.e., $P_m y = y_m$.
Then $D_{m,t}^\ast(Y_t)-\widetilde{D}_{m,t}^{r}(Y_t)=P_m\left(H_t^{-1}-\psi_{r,t}(H_t)\right)Y_t$.
Since $Y_t$ is zero-mean Gaussian with covariance $Q^{-1}+\sigma_t^2 I$, we have
\begin{align*}
    \mbb{E}\|D_{m,t}^\ast(Y_t)-\widetilde{D}_{m,t}^{r}(Y_t)\|_2^2 
    &= \opn{tr} \bigg(
        P_m
        \left(H_t^{-1}-\psi_{r,t}(H_t)\right)
        (Q^{-1}+\sigma_t^2 I)
        \left(H_t^{-1}-\psi_{r,t}(H_t)\right)
        P_m^\top
        \bigg)\\
    &= \left\| P_m \left(H_t^{-1}-\psi_{r,t}(H_t)\right) (Q^{-1}+\sigma_t^2 I)^{1/2} \right\|_F^2\\
    &\leq \|Q^{-1}+\sigma_t^2 I\|_2 
    \left\| P_m \left(H_t^{-1}-\psi_{r,t}(H_t)\right)\right\|_F^2.
\end{align*}
Since $P_m(H_t^{-1}-\psi_{r,t}(H_t)) \in \mbb{R}^{d_p\times Md_p}$ has $d_p$ rows, the Frobenius--operator norm inequality and the Chebyshev approximation bound give
\begin{align*}
    \left\| P_m \left(H_t^{-1}-\psi_{r,t}(H_t)\right)\right\|_F^2
    &\leq d_p \left\| P_m \left(H_t^{-1}-\psi_{r,t}(H_t)\right)\right\|_2^2\\
    &\leq d_p \left\| H_t^{-1}-\psi_{r,t}(H_t)\right\|_2^2\\
    &\leq d_p \frac{4\rho_t^{2r+2}}{h_{-,t} h_{+,t}(1-\rho_t)^2}.
\end{align*}
Also, since $Q^{-1}+\sigma_t^2 I$ is real symmetric positive definite,
\begin{align*}
    \|Q^{-1}+\sigma_t^2 I\|_2
    =\lambda_{\mrm{max}}(Q^{-1}+\sigma_t^2 I)
    =q_-^{-1}+\sigma_t^2.
\end{align*}
Combining the above bounds, we have
\begin{align*}
    \mbb{E}\|D_{m,t}^\ast(Y_t)-\widetilde{D}_{m,t}^{r}(Y_t)\|_2^2
    &\leq d_p(q_-^{-1}+\sigma_t^2)
    \frac{4\rho_t^{2r+2}}{h_{-,t}h_{+,t}(1-\rho_t)^2}.
\end{align*}
Finally, the identities
\begin{align*}
q_-^{-1}+\sigma_t^2=\frac{h_{-,t}}{q_-},
\qquad
\frac{2\rho_t}{\sqrt{h_{-,t}h_{+,t}}(1-\rho_t)}
=\frac{1-1/\sqrt{\chi_t}}{h_{-,t}}
\end{align*}
reduce the preceding display to \eqref{eqn:chebyshev-locality}.
\end{proof}

\clearpage
\subsection{Full statement and proof of Corollary~\ref{crl:bounded-range}}\label{app:bounded-range}

\begin{radiusstatement}[Bounded-range monotonicity of the locality radius; full statement]
Under the hypotheses of the full Thm.~\ref{thm:gmrf-locality} above, with $0<q_-<q_+$,
define $\sigma_{\mrm{abs}}^2:=(q_-^{1/3}+q_+^{1/3})/(q_-^{2/3}q_+^{2/3})$.
For $\epsilon>0$ and $0<\sigma_t \leq \sigma_{\mrm{abs}}$, the smallest radius $r_\epsilon(\sigma_t)$ such that the upper bound in \eqref{eqn:chebyshev-locality} is at most $\epsilon$ is the least nonnegative integer satisfying
\begin{align}\label{eqn:chebyshev-locality-radius}
    r_\epsilon(\sigma_t) \ge \frac{1}{2}\log 
    \left(\frac{d_p \left(1-1/\sqrt{\chi_t}\right)^2}{\epsilon q_-(1+\sigma_t^2 q_-)}\right)
    \bigg /
    \log \left(\frac{1}{\rho_t}\right)
\end{align}
The resulting $r_\epsilon(\sigma_t)$ is nondecreasing in $\sigma_t$ on the same range. At $\sigma_t=0$, radius zero is exact.
\end{radiusstatement}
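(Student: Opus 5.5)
The plan is to read off the radius formula by inverting the bound in \eqref{eqn:chebyshev-locality}, and then to get monotonicity from two separate facts: the prefactor $C_t$ is nondecreasing in $\sigma_t$ on $[0,\sigma_{\mrm{abs}}]$, and the rate $c_t=2\log(1/\rho_t)$ is nonincreasing in $\sigma_t$ everywhere.

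\textbf{Radius formula.} For $\sigma_t>0$ and $q_-<q_+$ we have $\chi_t>1$, so $\rho_t\in(0,1)$ and $\log(1/\rho_t)>0$. The bound $C_t\rho_t^{2r}\le\epsilon$ is then equivalent to $r\ge \tfrac12\log(C_t/\epsilon)/\log(1/\rho_t)$. Hence $r_\epsilon(\sigma_t)=\max\{0,\lceil R(\sigma_t)\rceil\}$, where $R$ denotes the right-hand side of \eqref{eqn:chebyshev-locality-radius}. At $\sigma_t=0$ we have $Y_t=X_0$, so $D^\ast_{m,t}(y)=y_m$ is already $B_0(m)$-measurable and radius zero is exact (also $C_t=0$ there).

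\textbf{Monotonicity of the prefactor.} Write $s=\sigma_t^2$, $a=1+sq_-$ and $b=1+sq_+$. Then
\[
\frac{(1-1/\sqrt{\chi_t})^2}{1+\sigma_t^2q_-}=\frac{(\sqrt b-\sqrt a)^2}{ab}=g(s)^2,
\qquad g(s):=a^{-1/2}-b^{-1/2}\ge 0 .
\]
It therefore suffices to show that $g$ is nondecreasing on $[0,\sigma_{\mrm{abs}}^2]$. Differentiating gives $g'(s)=\tfrac12\bigl(q_+b^{-3/2}-q_-a^{-3/2}\bigr)$. So $g'\ge0$ if and only if $q_+a^{3/2}\ge q_-b^{3/2}$, which is equivalent to $q_+^{2/3}a\ge q_-^{2/3}b$. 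Expanding $a$ and $b$, this becomes
\[
q_+^{2/3}-q_-^{2/3}\;\ge\; s\,q_-^{2/3}q_+^{2/3}\bigl(q_+^{1/3}-q_-^{1/3}\bigr).
\]
Factoring the difference of squares on the left and dividing by $q_+^{1/3}-q_-^{1/3}>0$ gives exactly $s\le \sigma_{\mrm{abs}}^2$. This is the step where the threshold is determined. I expect it to be the main (though elementary) obstacle, since the $2/3$-power manipulation has to land precisely on the stated $\sigma_{\mrm{abs}}$.

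\textbf{Monotonicity of the rate, and conclusion.} $\chi_t=b/a$ has derivative proportional to $q_+-q_->0$, so $\chi_t$ increases in $s$. Since $\rho_t$ is increasing in $\chi_t$, $\log(1/\rho_t)$ is positive and nonincreasing. Now fix $0<\sigma\le\sigma'\le\sigma_{\mrm{abs}}$. If $\log(C/\epsilon)\le0$ at $\sigma$, then $r_\epsilon(\sigma)=0\le r_\epsilon(\sigma')$. Otherwise the numerator $\log(C_t/\epsilon)$ is positive and nondecreasing on $[\sigma,\sigma']$, because $C_t$ is nondecreasing there. The denominator is positive and nonincreasing, so $R(\sigma)\le R(\sigma')$. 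Since $x\mapsto\max\{0,\lceil x\rceil\}$ is monotone, $r_\epsilon$ is nondecreasing on $(0,\sigma_{\mrm{abs}}]$. The case $\sigma_t=0$ fits in trivially because $r_\epsilon(0)=0$.
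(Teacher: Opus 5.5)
Your proposal is correct and follows essentially the paper's proof: both show that the prefactor is nondecreasing on $(0,\sigma_{\mathrm{abs}}]$ and that $\rho_t$ is nondecreasing. Your rewriting of the prefactor as $g(s)^2$, with $g(s)=(1+sq_-)^{-1/2}-(1+sq_+)^{-1/2}$, reaches the same threshold condition $q_+(1+sq_-)^{3/2}\ge q_-(1+sq_+)^{3/2}$ (the paper's $q_+\ge q_-\chi_t^{3/2}$) with less algebra than the paper's log-derivative in $\sigma_t$. The only other difference is in the last step: the paper observes that, for each fixed $r$, the bound is nondecreasing in $\sigma_t$, so any radius certified at larger noise is certified at smaller noise; you instead argue through the explicit ratio formula with a case split on the sign of $\log(C_t/\epsilon)$. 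Both are valid.
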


\begin{proof}
    We show that the upper bound in \eqref{eqn:chebyshev-locality}
    is nondecreasing in $\sigma_t$ for $0<\sigma_t \leq \sigma_{\mrm{abs}}$.
    First, for $\sigma_t>0$,
    \begin{align*}
        \frac{1}{\chi_t}\frac{d\chi_t}{d\sigma_t} = \frac{2\sigma_t(q_+ - q_-)}{(1+\sigma_t^2 q_+)(1+\sigma_t^2 q_-)} \ge 0,
    \end{align*}
    and hence $d\chi_t/d\sigma_t\ge 0$.
    Next,
    \begin{align*}
        \frac{d}{d\sigma_t}\log \left[
        \frac{\left(1-1/\sqrt{\chi_t}\right)^2 }{(1+\sigma_t^2 q_-)}
        \right] &=
        \frac{1}{\sqrt{\chi_t}-1}\left(\frac{1}{\chi_t}\frac{d\chi_t}{d\sigma_t}\right) - \frac{2\sigma_t q_-}{1+\sigma_t^2 q_-} \\&=
        \frac{2\sigma_t}{1+\sigma_t^2 q_-}\left[
            \frac{q_+-q_-}{(1+\sigma_t^2 q_+)(\sqrt{\chi_t}-1)} - q_-
        \right]
    \end{align*}
    is nonnegative when $q_+-q_- \geq q_-(1+\sigma_t^2 q_+)(\sqrt{\chi_t}-1)$. Using
    \begin{align*}
        1+\sigma_t^2q_+
        =\chi_t\frac{q_+-q_-}{q_+-q_-\chi_t},
        \qquad
        q_+-q_-\chi_t=\frac{q_+-q_-}{1+\sigma_t^2q_-},
    \end{align*}
    this condition is equivalent to $q_+\geq q_-\chi_t^{3/2}$, which simplifes to $\sigma_t^2 \leq \sigma_{\mrm{abs}}^2$.
    
    For the rest of the term in the upper bound, it is straightforward to see that $d\rho_t/d\chi_t\ge 0$, and since $d\chi_t/d\sigma_t\ge 0$, we have $d\rho_t/d\sigma_t\ge 0$.
    Consequently, if $0<\sigma_1\leq \sigma_2 \leq \sigma_{\mrm{abs}}$, every radius whose upper bound is at most $\epsilon$ at $\sigma_2$ also has upper bound at most $\epsilon$ at $\sigma_1$. Hence $r_\epsilon(\sigma_2) \geq r_\epsilon(\sigma_1)$.
\end{proof}

\subsection{Proof of Theorem~\ref{thm:pld-is-local}}\label{app-sec:pld-is-local-proof}
\begin{proof}
Fix $m$ and $t$ with $\sigma_t>0$, and use the stacked-vector convention throughout.
Under the empirical prior, let $J$ be uniform on $[n]$ and set $X_0=x^{(J)}$.
Since $Y_t=X_0+\sigma_t\varepsilon$, the linear aggregate satisfies
$\mcl{S}_{m,t}^{\mrm{lin}}(Y_t)\mid J=i\sim
\mcl{N}\!\left(S_{m,t}x^{(i)},\sigma_t^2 S_{m,t}S_{m,t}^\top\right)$.
The covariance is positive definite because $\sigma_t>0$ and $S_{m,t}$ has full row rank.
With the prescribed precision $\Gamma_{m,t}=(\sigma_t^2 S_{m,t}S_{m,t}^\top)^{-1}$, the Gaussian log-likelihood is $-\tfrac12\|s-S_{m,t}x^{(i)}\|_{\Gamma_{m,t}}^2$ up to a term independent of $i$.
The uniform prior and common Gaussian normalizing factor cancel in Bayes' rule, giving
\begin{align*}
    \opn{Pr}_{\hat{p}_0} \left[ J=i \mid \mcl{S}_{m,t}^{\mrm{lin}}(Y_t)=s, t\right] =
    \opn{sm}_i\left\{ -\frac{1}{2}
    \left\| s - \mcl{S}_{m,t}^{\mrm{lin}}(x^{(j)}) \right\|^2_{\Gamma_{m,t}}
    \right\}_{j\in[n]}.
\end{align*}
Consequently,
\begin{align*}
    \mbb{E}_{\hat{p}_0}\left[
    X_{0,m} \mid \mcl{S}_{m,t}^{\mrm{lin}}(Y_t)=s, t
    \right] = \sum_{i=1}^n \opn{Pr}_{\hat{p}_0} \left[ J=i \mid \mcl{S}_{m,t}^{\mrm{lin}}(Y_t)=s, t\right] x_{m}^{(i)} = g_{m,t}^{\mrm{rbf}}(s).
\end{align*}
Because the aggregate is a measurable function of $Y_{t,\mcl{B}_{m,t}}$, the tower property gives the first equality below.
The assumed sigma-field inclusion makes the inner conditional mean measurable with respect to the aggregate, giving the second equality:
\begin{align*}
    \mbb{E}_{\hat{p}_0}\left[X_{0,m} \mid \mcl{S}_{m,t}^{\mrm{lin}}(Y_t), t\right] &=
    \mbb{E}_{\hat{p}_0}\left[
    \mbb{E}_{\hat{p}_0}\left[X_{0,m} \mid Y_{t, \mcl{B}_{m,t}}, t\right]
    \mid \mcl{S}_{m,t}^{\mrm{lin}}(Y_t), t\right]\\&=
    \mbb{E}_{\hat{p}_0}\left[X_{0,m} \mid Y_{t, \mcl{B}_{m,t}}, t\right].
\end{align*}
Together with $\hat{D}_{m,t}^{\mrm{pld}}=g_{m,t}^{\mrm{rbf}}\circ\mcl{S}_{m,t}^{\mrm{lin}}$, this proves the theorem.

To make the inclusion condition concrete, restrict $S_{m,t}$ to its context columns, denoted by $[S_{m,t}]_{\mcl{B}_{m,t}}$.
Locality implies $S_{m,t}y=[S_{m,t}]_{\mcl{B}_{m,t}}y_{\mcl{B}_{m,t}}$.
Bayes' rule for the full context gives
\begin{align*}
    \opn{Pr}_{\hat{p}_0} \left[ J=i \mid Y_{t,\mcl{B}_{m,t}}=y_{\mcl{B}_{m,t}}, t\right] =
    \opn{sm}_i\left\{ -\frac{1}{2\sigma_t^2}
    \left\| y_{\mcl{B}_{m,t}}-x^{(j)}_{\mcl{B}_{m,t}} \right\|^2_2
    \right\}_{j\in[n]}\\ =
    \opn{sm}_i\left\{ \frac{2y_{\mcl{B}_{m,t}}^\top\left(x^{(j)}_{\mcl{B}_{m,t}}-x^{(1)}_{\mcl{B}_{m,t}}\right)-\|x^{(j)}_{\mcl{B}_{m,t}}\|_2^2+\|x^{(1)}_{\mcl{B}_{m,t}}\|_2^2}{2\sigma_t^2}
    \right\}_{j\in[n]}.
\end{align*}
The second equality subtracts the log-likelihood of sample $1$ from every logit, which leaves the softmax unchanged.
Thus, the posterior weights depend on $y_{\mcl{B}_{m,t}}$ only through its inner products with the training-context differences.
Suppose that
\begin{align}\label{app-eqn:V-cond}
    V_{m,t}:=\opn{span}\left\{
        x^{(j)}_{\mcl{B}_{m,t}}-x^{(1)}_{\mcl{B}_{m,t}}:j\in[n]
    \right\}\subseteq\opn{row}\!\left([S_{m,t}]_{\mcl{B}_{m,t}}\right).
\end{align}
For each $j$, there is then a vector $a_j\in\mbb{R}^{d_h}$ such that $x^{(j)}_{\mcl{B}_{m,t}}-x^{(1)}_{\mcl{B}_{m,t}}=[S_{m,t}]_{\mcl{B}_{m,t}}^\top a_j$.
Hence $y_{\mcl{B}_{m,t}}^\top(x^{(j)}_{\mcl{B}_{m,t}}-x^{(1)}_{\mcl{B}_{m,t}})=a_j^\top [S_{m,t}]_{\mcl{B}_{m,t}}y_{\mcl{B}_{m,t}}=a_j^\top S_{m,t}y$.
Every posterior weight is therefore a continuous function of $S_{m,t}y$, as is their weighted sum $\mbb{E}_{\hat p_0}[X_{0,m}\mid Y_{t,\mcl{B}_{m,t}}=y_{\mcl{B}_{m,t}},t]$.
This proves the required sigma-field inclusion: the local conditional mean can be recovered from the aggregate even when the full context cannot.

This construction is strictly compressive when $0<v_{m,t}:=\dim V_{m,t}<|\mcl{B}_{m,t}|d_p$: choose the $v_{m,t}$ rows of $[S_{m,t}]_{\mcl{B}_{m,t}}$ to be an orthonormal basis of $V_{m,t}$, so that $d_h=v_{m,t}<|\mcl{B}_{m,t}|d_p$.
All clean training contexts have the same projection onto $V_{m,t}^\perp$.
Under isotropic Gaussian corruption, the discarded coordinates consequently contribute the same likelihood factor for every sample index, which cancels from the posterior weights.
For example, if $|\mcl{B}_{m,t}|d_p=2$ and $x_{\mcl{B}_{m,t}}^{(j)}=(u_j,c)$ with fixed $c$ and varying $u_j$, then $[S_{m,t}]_{\mcl{B}_{m,t}}=(1,0)$ retains only the first noisy coordinate and still determines all posterior weights.
If $v_{m,t}=0$, all training contexts coincide and the local conditional mean is constant.
In general $v_{m,t}\leq\min(n-1,|\mcl{B}_{m,t}|d_p)$, but $v_{m,t}$ can equal $|\mcl{B}_{m,t}|d_p$; in that case this row-space condition requires retaining the full context dimension and provides no strict compression.
We note that the condition \eqref{app-eqn:V-cond} is sufficient, not necessary for the theorem: it preserves the entire posterior over sample indices, whereas the theorem requires only preservation of the target-patch conditional mean.
\end{proof}

\section{From the aggregator--predictor construction to implementation}
\label{app:attention-aggregator}

This section connects the aggregator--predictor construction in Secs.~\ref{sec:dit-decomposition} to the implemented architectures.
We first expand a single attention branch applied to the noisy patches $Y_t$, then explain how aggregation and patch prediction extend to residual stacks, and finally give the full predictor parameterizations.
All patches, tokens, embeddings, biases, and aggregates are column vectors, consistent with Appendix~\ref{app:notation}; projection matrices act from the left.
The width $d_h$ denotes the full token and aggregate dimension, and the number of attention heads $\tilde{H}$ divides $d_h$.

\begin{figure}[H]
    \centering
    \includegraphics[width=\textwidth]{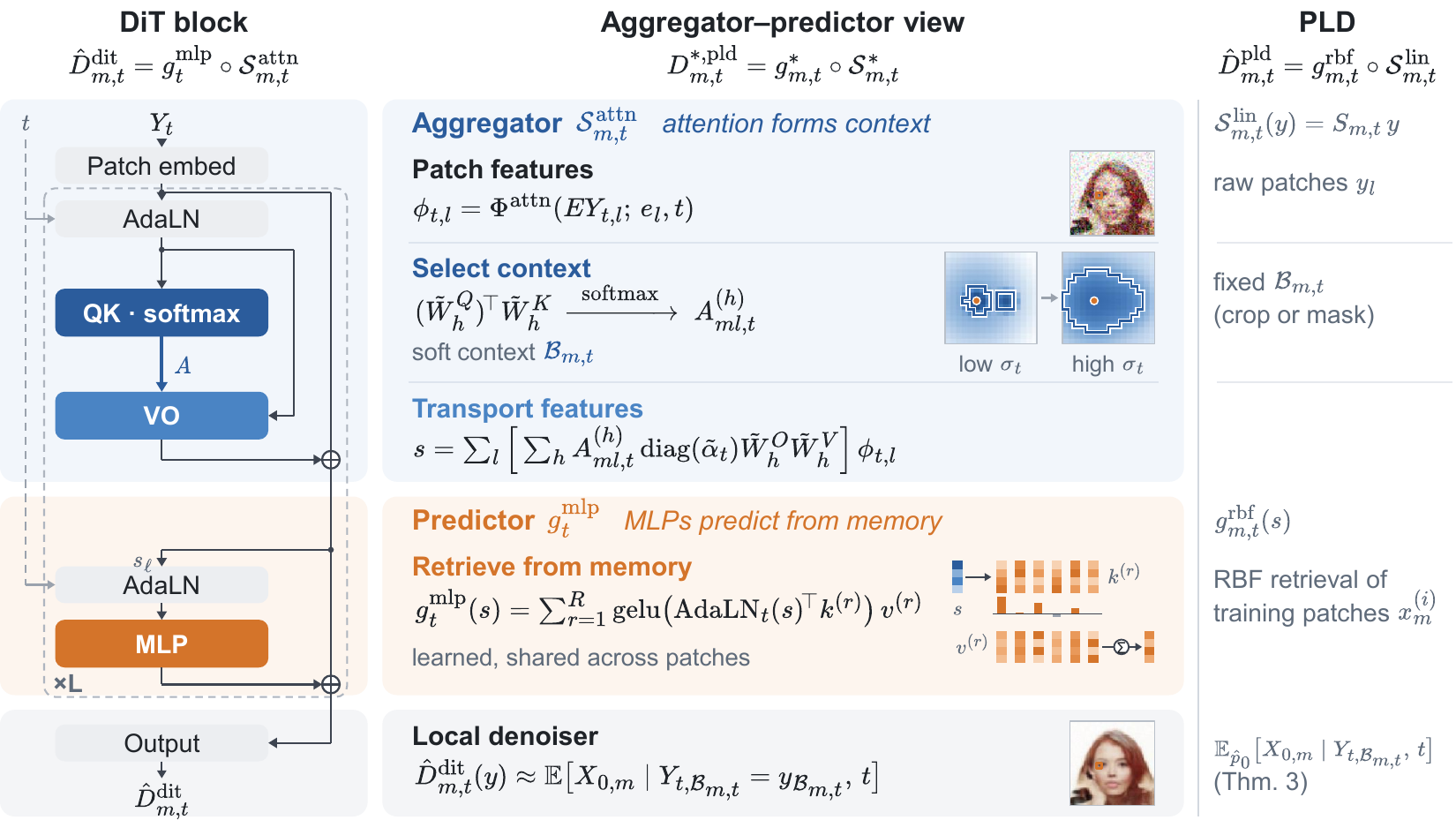}
    \caption{\textbf{Component map for the aggregator--predictor view.}
    At the branch level, AdaLN-conditioned QK supplies spatial weights, VO transforms the weighted features, and the MLP admits a learned-memory interpretation for tokenwise prediction.
    The patchwise linear denoiser (PLD) instead uses a prescribed context and retrieves training patches.}
    \label{fig:dit-component-map}
\end{figure}

\subsection{Patch, position, and time conditioning}
\label{app:attention-aggregator-1}

The initial token at patch $l$ is $EY_{t,l}+e_l$, where $E\in\mbb{R}^{d_h\times d_p}$ and $e_l\in\mbb{R}^{d_h}$.
A shared patch-projection bias, if present, can be included in each $e_l$.
If the network receives $X_t=a_tY_t$ or another known scalar preconditioning of $Y_t$, include that time-dependent scalar before the patch projection; the same derivation applies.

Let $\tau(t)\in\mbb{R}^{d_h}$ denote a fixed sinusoidal time embedding.
The time-embedding and the following AdaLN parameters can be written as
\begin{align}\label{app-eqn:attention-time}
 e_t^{\mrm{time}}
 &=W_2^{\mrm{time}}\opn{SiLU}\!\left(W_1^{\mrm{time}}\tau(t)+b_1^{\mrm{time}}\right)+b_2^{\mrm{time}},\\
 \begin{bmatrix}
 \tilde{\beta}_t^\top & \tilde{\gamma}_t^\top & \tilde{\alpha}_t^\top &
 \beta_t^\top & \gamma_t^\top & \alpha_t^\top
 \end{bmatrix}^{\!\top}
 &=W^{\mrm{ada}}e_t^{\mrm{time}}+b^{\mrm{ada}}.
 \label{app-eqn:attention-adaln}
\end{align}
Here $W_1^{\mrm{time}}\in\mbb{R}^{4d_h\times d_h}$, $W_2^{\mrm{time}}\in\mbb{R}^{d_h\times4d_h}$, and their biases have dimensions $4d_h$ and $d_h$, respectively.
Also $W^{\mrm{ada}}\in\mbb{R}^{6d_h\times d_h}$ and $b^{\mrm{ada}}\in\mbb{R}^{6d_h}$, so every shift $\beta$, scale adjustment $\gamma$, and residual gate $\alpha$ is a column in $\mbb{R}^{d_h}$.
The tilded rows condition spatial attention; the untilded rows condition the predictor branch.
Layer indices are suppressed in this single-layer expansion; in a stack, the AdaLN map and both sets of conditioning parameters are layer-specific.

With these notations, the feature entering attention can be written as
\begin{align}\label{app-eqn:attention-feature}
 \phi_{t,l}(Y_t)
 :=\Phi^{\mrm{attn}}(EY_{t,l};e_l,t)
 =\bigl(1+\tilde{\gamma}_t\bigr)\odot
 \opn{LN}(EY_{t,l}+e_l)+\tilde{\beta}_t.
\end{align}
Layer normalization acts over the $d_h$ channels of this token, with a positive numerical stabilizer and no learned affine parameters; $\odot$ denotes elementwise multiplication.
Consequently, at fixed $t$, this feature depends only on patch $l$ and its position.
The gate $\tilde{\alpha}_t$ is applied after attention output mixing.

\subsection{Spatial weights and shared channel operators}
\label{app:attention-aggregator-2}

For head $h$, the Q, K, and V matrices have shape $(d_h/\tilde{H})\times d_h$, and the output projection matrix $\tilde{W}_h^O$ has shape $d_h\times(d_h/\tilde{H})$.
Including the QK biases $\tilde{b}_h^Q,\tilde{b}_h^K\in\mbb{R}^{d_h/\tilde{H}}$, we define the scalar spatial weights
\begin{align}\label{app-eqn:attention-weights}
 A_{ml,t}^{(h)}(Y_t)
 =\opn{sm}_{l}\left\{
 \frac{\langle \tilde{W}_h^Q\phi_{t,m}(Y_t)+\tilde{b}_h^Q,
                   \tilde{W}_h^K\phi_{t,j}(Y_t)+\tilde{b}_h^K\rangle}{\sqrt{d_h/\tilde{H}}}
 \right\}_{j\in[M]}.
\end{align}
These weights are positive and sum to one over $l$.
Without QK biases, the numerator is $\phi_{t,m}(Y_t)^\top\allowbreak(\tilde{W}_h^Q)^\top \allowbreak\tilde{W}_h^K\phi_{t,l}(Y_t)$.
The matrix $(\tilde{W}_h^Q)^\top \tilde{W}_h^K\in\mbb{R}^{d_h\times d_h}$ therefore controls compatibility between query and context features.

Rearranging the finite head and patch sums gives
\begin{equation}\label{app-eqn:attention-rearrangement}
    \begin{split}
    \mcl{S}_{m,t}^{\mrm{attn}}(Y_t)
    &=\opn{diag}(\tilde{\alpha}_t)
    \sum_{h=1}^{\tilde{H}} \tilde{W}_h^O\left(\sum_{l=1}^M
    A_{ml,t}^{(h)}(Y_t)\tilde{W}_h^V\phi_{t,l}(Y_t)\right)\\
    &=\sum_{l=1}^M
    \left[\sum_{h=1}^{\tilde{H}} A_{ml,t}^{(h)}(Y_t)
    \opn{diag}(\tilde{\alpha}_t)\tilde{W}_h^O\tilde{W}_h^V\right]\phi_{t,l}(Y_t).
    \end{split}
\end{equation}
Each bracketed matrix in the second line is $d_h\times d_h$, and each summand is a column in $\mbb{R}^{d_h}$.
Setting the QK biases to zero recovers \eqref{eqn:attention}; with biases, the same decomposition holds using the weights in \eqref{app-eqn:attention-weights}.
If included, the value biases $\tilde{b}_h^V\in\mbb{R}^{d_h/\tilde{H}}$ and output bias $\tilde{b}^O\in\mbb{R}^{d_h}$ add the column offset $\opn{diag}(\tilde{\alpha}_t)\bigl(\sum_h \tilde{W}_h^O\tilde{b}_h^V+\tilde{b}^O\bigr)$, because each attention row sums to one.
This offset depends on time but not on the noisy image or patch location, so it does not change the spatial selection mechanism.

As explained in Sec.~\ref{sec:attention-aggregator}, we can read off the roles of QK and VO from \eqref{app-eqn:attention-rearrangement}.
At fixed $t$, every $S_{ml}^{\mrm{attn}}(Y_t;t)$ lies in the linear subspace spanned by the time-conditioned VO operators, $\opn{span}\{\opn{diag}(\tilde{\alpha}_t)\tilde{W}_h^O\tilde{W}_h^V:h\in[\tilde{H}]\}$, whose dimension is at most $\min(\tilde{H},d_h^2)$.
Thus, each head supplies a patch-shared, time-conditioned channel operator $\opn{diag}(\tilde{\alpha}_t)\tilde{W}_h^O\tilde{W}_h^V$ and an \emph{independently normalized} spatial weighting.
Before output mixing, a head forms a convex combination of its projected patch features $\tilde{W}_h^V\phi_{t,l}(Y_t)$.
However, the full branch \eqref{app-eqn:attention-rearrangement} need not be a convex combination of the patch features, since the channel operators are unconstrained.
With the value vectors and channel operators held fixed, the attention matrix $A_h$ determines how the head distributes across the value vectors associated with different patches.
QK therefore directly controls which patches receive more or less exposure in the constructed context, even though the downstream effect of an individual weight also depends on value magnitudes, channel operators, and residual interactions.

\subsection{Soft locality of Attention Aggregator}
\label{app:attention-aggregator-3}

Section~\ref{app:attention-aggregator-2} explained how, for fixed features, QK controls the relative logits and hence which patches receive appreciable weight.
Accordingly, low-scoring patches are suppressed smoothly by exponentiation and normalization, rather than removed by a literal threshold.
This provides an input-adaptive analogue of a local selector, but without explicit thresholding unmasked attention remains globally supported and need not solely concentrate on a local context set.

To relate this interpretation to our definition of locality, fix a nonempty context $\mcl{B}_{m,t}$ containing $m$ and mask every head to that context before softmax.
Define $\mcl{S}_{m,t}^{\mrm{attn},\mcl{B}}$ using the same features, channel operators, and biases, but with weights proportional to $A_{ml,t}^{(h)}(Y_t)$ over $l\in\mcl{B}_{m,t}$ and zero otherwise.
The renormalization cancels the original global softmax denominator.
Since each retained logit depends only on patches $m$ and $l$, this masked branch is exactly $\mcl{B}_{m,t}$-local.
The following bound records the expected squared error of this hard-threshold masking approximation.
The bound depends on discarded mass over the observation law, together with the feature and channel-operator magnitudes.
The notation $\lesssim_t$ hides a constant depending only on the fixed network parameters and time $t$, not on the observation or the context.

\begin{theorem}[Expected squared error of context masking]\label{app-thm:attention-local-error}
Fix $t$, a patch $m$, the attention parameters, and a nonempty context $\mcl{B}_{m,t}\subseteq[M]$ containing $m$.
Then $\mcl{S}_{m,t}^{\mrm{attn},\mcl{B}}$ is $\mcl{B}_{m,t}$-local, and
\begin{equation}\label{app-eqn:attention-local-error}
 \mbb{E}\left\|\mcl{S}_{m,t}^{\mrm{attn}}(Y_t)
 -\mcl{S}_{m,t}^{\mrm{attn},\mcl{B}}(Y_t)\right\|_2^2
 \lesssim_t\sum_{h=1}^{\tilde{H}}\mbb{E}\left[\left(
 \sum_{l\notin\mcl{B}_{m,t}}A_{ml,t}^{(h)}(Y_t)\right)^2\right].
\end{equation}
All expectations are over the observation law of $Y_t$ at the fixed time $t$.
\end{theorem}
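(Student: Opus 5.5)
The plan has two parts. First I establish locality, which is immediate. Then I bound the masking error head by head, and finally sum over heads.

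\emph{Locality.} For $l\in\mcl{B}_{m,t}$, the masked weight is
\[
\bar A^{(h)}_{ml}=\frac{\exp(z^{(h)}_{ml})}{\sum_{j\in\mcl{B}_{m,t}}\exp(z^{(h)}_{mj})},
\]
where $z^{(h)}_{ml}$ is the logit in \eqref{app-eqn:attention-weights}. The global denominator cancels, so $\bar A^{(h)}_{ml}$ depends only on $\phi_{t,m}$ and $\{\phi_{t,j}\}_{j\in\mcl{B}_{m,t}}$. By \eqref{app-eqn:attention-feature}, each $\phi_{t,j}$ is a function of $Y_{t,j}$ alone at fixed $t$. Since $m\in\mcl{B}_{m,t}$, the masked branch $\sum_{l\in\mcl{B}_{m,t}}\sum_h\bar A^{(h)}_{ml}\opn{diag}(\tilde\alpha_t)\tilde W_h^O\tilde W_h^V\phi_{t,l}$, plus the constant bias offset, depends only on $y_{\mcl{B}_{m,t}}$. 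Measurability is clear because the map is continuous.

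\emph{Per-head error.} Fix a head $h$ and write $\delta_h:=\sum_{l\notin\mcl{B}_{m,t}}A^{(h)}_{ml}$. On $\mcl{B}_{m,t}$ we have $\bar A^{(h)}_{ml}=A^{(h)}_{ml}/(1-\delta_h)$, and $\bar A^{(h)}_{ml}=0$ off it. Hence
\[
\sum_l\bigl|A^{(h)}_{ml}-\bar A^{(h)}_{ml}\bigr|=\delta_h+\sum_{l\in\mcl{B}}A^{(h)}_{ml}\Bigl(\tfrac{1}{1-\delta_h}-1\Bigr)=2\delta_h.
\]
Both weight vectors are probability vectors, and the bias offsets coincide, so they cancel. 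The difference for head $h$ is therefore $\opn{diag}(\tilde\alpha_t)\tilde W_h^O\tilde W_h^V\sum_l(A-\bar A)_{ml}\phi_{t,l}$. Its norm is at most $2\delta_h\,\|\opn{diag}(\tilde\alpha_t)\tilde W_h^O\tilde W_h^V\|_2\max_l\|\phi_{t,l}\|_2$.

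\emph{Uniform feature bound.} This is the key point that makes the constant observation-independent. Layer normalization without affine parameters outputs a vector of norm at most $\sqrt{d_h}$: it is centered, divided by $\sqrt{\mathrm{var}+\epsilon}$, and its squared norm is $d_h\,\mathrm{var}/(\mathrm{var}+\epsilon)\le d_h$. Therefore $\|\phi_{t,l}\|_2\le\|1+\tilde\gamma_t\|_\infty\sqrt{d_h}+\|\tilde\beta_t\|_2=:K_t$ for every $y$, independently of the observation and the context.

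\emph{Summing over heads.} Apply Cauchy--Schwarz to the sum over heads:
\[
\Bigl\|\sum_h(\cdot)\Bigr\|_2^2\le\tilde H\sum_h 4\delta_h^2C_h^2K_t^2,
\]
where $C_h$ is the operator norm above. Taking expectations under the law of $Y_t$ gives \eqref{app-eqn:attention-local-error} with constant $4\tilde H K_t^2\max_hC_h^2$, which depends only on the parameters and $t$.

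The main obstacle is the uniform bound on $\|\phi_{t,l}\|_2$ in the third step. Without it, an expectation of a product of feature norms and discarded mass would remain. Layer normalization is exactly what removes this. A minor edge case is $\delta_h=1$, which cannot occur because softmax weights are strictly positive and $m\in\mcl{B}_{m,t}$.
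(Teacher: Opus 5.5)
Your proposal is correct and follows essentially the same route as the paper: a uniform layer-norm bound on $\|\phi_{t,l}\|_2$, a per-head error of at most $2\delta_h$ times that bound after the bias offsets cancel, Cauchy--Schwarz over heads, and locality because the renormalized softmax cancels the global denominator. The only differences are cosmetic: you obtain the $2\delta_h$ factor from the $\ell_1$ distance between the full and renormalized weight vectors, whereas the paper writes the difference as $\delta_h(\mu_h^{\mathrm{out}}-\mu_h^{\mathrm{in}})$. Your head-sum constant $\tilde H\max_h C_h^2$ is also slightly looser than the paper's $\sum_h C_h^2$, which does not matter under $\lesssim_t$.
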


\begin{proof}
Before applying the time-conditioned scale and shift, the layer-normalized feature has norm at most $\sqrt{d_h}$, so \eqref{app-eqn:attention-feature} gives
\[
 \max_{l\in[M]}\|\phi_{t,l}(Y_t)\|_2
 \leq\sqrt{d_h}\|1+\tilde{\gamma}_t\|_\infty+\|\tilde{\beta}_t\|_2.
\]
Fix an observation $Y_t$.
For each head, let $\mu_h^{\mrm{in}}$ and $\mu_h^{\mrm{out}}$ be the attention-weighted averages of $\phi_{t,l}(Y_t)$ over the retained and discarded patches, respectively, each normalized by its total attention mass.
The retained mass is positive because the context is nonempty and softmax weights are positive.
If the discarded set is empty, take $\mu_h^{\mrm{out}}=0$, in which case the approximation is exact.
Note that both averages satisfy the same norm bound as the features.
Also, the masked head average is $\mu_h^{\mrm{in}}$, since the retained weights are renormalized to sum to one.
Thus the difference between the full and masked head averages is
\[
\begin{aligned}
 &\sum_{l=1}^M A_{ml,t}^{(h)}(Y_t)\phi_{t,l}(Y_t)-\mu_h^{\mrm{in}}\\
 &\quad=\left(1-\sum_{l\notin\mcl{B}_{m,t}}A_{ml,t}^{(h)}(Y_t)\right)\mu_h^{\mrm{in}}
 +\left(\sum_{l\notin\mcl{B}_{m,t}}A_{ml,t}^{(h)}(Y_t)\right)\mu_h^{\mrm{out}}-\mu_h^{\mrm{in}}\\
 &\quad=\left(\sum_{l\notin\mcl{B}_{m,t}}A_{ml,t}^{(h)}(Y_t)\right)
 (\mu_h^{\mrm{out}}-\mu_h^{\mrm{in}}).
\end{aligned}
\]
The value- and output-bias offsets are the same in both branches, since both the full and masked attention rows sum to one, so these offsets cancel.
The triangle inequality and the spectral-norm bound therefore give
\[
\begin{split}
 &\left\|\mcl{S}_{m,t}^{\mrm{attn}}(Y_t)
 -\mcl{S}_{m,t}^{\mrm{attn},\mcl{B}}(Y_t)\right\|_2\\
 &\quad\leq\sum_{h=1}^{\tilde{H}}
 \left\|\opn{diag}(\tilde{\alpha}_t)\tilde{W}_h^O\tilde{W}_h^V\right\|_2
 \left(\sum_{l\notin\mcl{B}_{m,t}}A_{ml,t}^{(h)}(Y_t)\right)
 \|\mu_h^{\mrm{out}}-\mu_h^{\mrm{in}}\|_2\\
 &\quad\leq 2\left(\sqrt{d_h}\|1+\tilde{\gamma}_t\|_\infty+\|\tilde{\beta}_t\|_2\right)
 \sum_{h=1}^{\tilde{H}}
 \left\|\opn{diag}(\tilde{\alpha}_t)\tilde{W}_h^O\tilde{W}_h^V\right\|_2
 \sum_{l\notin\mcl{B}_{m,t}}A_{ml,t}^{(h)}(Y_t).
\end{split}
\]
Squaring and taking expectations gives
\[
\begin{split}
 &\mbb{E}\left\|\mcl{S}_{m,t}^{\mrm{attn}}(Y_t)
 -\mcl{S}_{m,t}^{\mrm{attn},\mcl{B}}(Y_t)\right\|_2^2\\
 &\quad\leq 4\left(\sqrt{d_h}\|1+\tilde{\gamma}_t\|_\infty+\|\tilde{\beta}_t\|_2\right)^2
 \mbb{E}\left[\left(\sum_{h=1}^{\tilde{H}}
 \left\|\opn{diag}(\tilde{\alpha}_t)\tilde{W}_h^O\tilde{W}_h^V\right\|_2
 \sum_{l\notin\mcl{B}_{m,t}}A_{ml,t}^{(h)}(Y_t)\right)^2\right].
\end{split}
\]
Applying Cauchy--Schwarz to the head sum yields \eqref{app-eqn:attention-local-error}, with implicit constant
\[
 4\left(\sqrt{d_h}\|1+\tilde{\gamma}_t\|_\infty+\|\tilde{\beta}_t\|_2\right)^2
 \sum_{h=1}^{\tilde{H}}\left\|\opn{diag}(\tilde{\alpha}_t)\tilde{W}_h^O\tilde{W}_h^V\right\|_2^2.
\]
Finally, after masking, the softmax denominator contains only retained logits.
Each such logit depends only on patches $m$ and $l\in\mcl{B}_{m,t}$, and each retained feature depends only on its own patch.
Thus the masked branch depends only on $Y_{t,\mcl{B}_{m,t}}$, establishing locality.
\end{proof}

\subsection{General predictor parameterizations}
\label{app:predictor-implementation}
We now write the full formulation of the shared-memory patch predictors at layer $\ell\in [L]$, where $L$ denotes the number of layers, expanding the AdaLN conditioning and restore the affine biases, output gates, separate output projections, and multiple heads, suppressed in Sec.~\ref{sec:patch-predictors}.
We let $s$ denote the input to a patch predictor at layer $\ell$.
Its time-conditioned version is $\opn{AdaLN}_{\ell,t}(s)=(1+\gamma_{\ell,t})\odot\opn{LN}_{\ell}(s)+\beta_{\ell,t}$.
The AdaLN scale and shift, together with the channelwise output gate $\alpha_{\ell,t}$, are supplied by the untilded rows in \eqref{app-eqn:attention-adaln}.

For compact notation, suppressing the layer index and biases, define the retrieval functions
\begin{align*}
    \kappa_t^{\mrm{sm}}(s,k)&:=\exp\!\left(\opn{AdaLN}_t(s)^\top k\right),
    &\kappa_t^{\mrm{gelu}}(s,k)&:=\opn{gelu}\!\left(\opn{AdaLN}_t(s)^\top k\right).
\end{align*}
Softmax retrieval normalizes $\kappa_t^{\mrm{sm}}$ over memory entries, whereas GELU retrieval uses $\kappa_t^{\mrm{gelu}}$ directly.
The full expressions below make the headwise projections, scaling, biases, and output gates explicit.

Throughout the predictors, we fix the same token dimension $d_h$.
The predictors are specified by the following design choices:
\begin{center}
    \begin{tabular}{ll}
    \toprule
    Notation & Design choice \\
    \midrule
    $H$ & Number of predictor heads \\
    $R$ & Number of learnable memory entries \\
    $d_h'$ & Width of learnable memory \\
    \bottomrule
    \end{tabular}
\end{center}
The ordinary MLP illustrates these choices.
It has a single predictor head $H=1$, and $R=4d_h$ layer-specific keys and values $\{k_{\ell}^{(r)},v_{\ell}^{(r)}\in\mbb{R}^{d_h'}\}_{r=1}^{R}$ of width $d_h'=d_h$:
\begin{equation}\label{app-eqn:predictor-mlp}
    g_{\ell,t}^{\mrm{mlp}}(s)
    =\alpha_{\ell,t}\odot\left[
    \sum_{r=1}^{R}\opn{gelu}\!\left(\opn{AdaLN}_{\ell,t}(s)^\top k_{\ell}^{(r)}+b_{\ell}^{(r)}\right)v_{\ell}^{(r)}
    +b_{\ell}^{O}\right].
\end{equation}
These keys and values are shared across patches within a layer but are learned independently at each layer.
If $H>1$, as with multi-head attention, each predictor head projects queries, keys, and values to width $d_h/H$, assuming $d_h$ is divisible by $H$.

Table~\ref{tab:predictor-parameters} compares the memory banks and layerwise parameter counts for each predictor across all $L$ layers.
The spatial-attention backbone and AdaLN parameters are common to all four models and are excluded.

\begin{table}[!htbp]
\centering
\begin{tabular}{cccc}
\toprule
Predictor & Memory bank & Bank parameters & Layerwise parameters \\
\midrule
LPLD $g_{\ell,m,t}^{\mrm{lpld}}$ & $\{z^{(r)}\}_{r=1}^{R}$ & $RMd_h'$ & $3d_h^2+d_hd_h'+4d_h$ \\
SCA $g_{\ell,t}^{\mrm{sca}}$ & $\{k^{(r)},v^{(r)}\}_{r=1}^{R}$ & $2Rd_h'$ & $2d_h(d_h+d_h')+5d_h+H$ \\
SMLP $g_{\ell,t}^{\mrm{smlp}}$ & $\{k^{(r)},v^{(r)}\}_{r=1}^{R}$ & $2Rd_h'$ & $2d_h(d_h+d_h')+Hd_h'+2d_h$ \\
MLP $g_{\ell,t}^{\mrm{dit}}$ & $\{k_\ell^{(r)},v_\ell^{(r)}\}_{\ell,r}$ & $2LRd_h$ & $0$ \\
\bottomrule
\end{tabular}
\caption{Predictor parameter comparison. The layerwise column includes Q/K/V/output projections, their biases, and SCA's headwise gain parameters at each layer. For LPLD, a learned compatibility map adds $d_hd_h'$ parameters once; this cost is zero when $A=I$.}
\label{tab:predictor-parameters}
\end{table}

\subsubsection{Learned PLD (LPLD)}
At each layer, LPLD uses cross-attention to retrieve from a memory bank $\{z^{(r)}\in\mbb{R}^{Md_h'}\}_{r=1}^{R}$ shared across layers.
Each entry $z^{(r)}=[z_1^{(r)},\ldots,z_M^{(r)}]$ contains $M$ memory patches $z_m^{(r)}\in\mbb{R}^{d_h'}$, whose width $d_h'$ may differ from the token width $d_h$.
The query is projected from the conditioned layer input $\opn{AdaLN}_{\ell,t}(s)$, while the value is projected directly from the raw memory patch $z_m^{(r)}$.
To form the key, a layer-shared compatibility map $A\in\mbb{R}^{d_h\times d_h'}$ first maps each memory patch to token width $d_h$.
We then add positional embeddings, apply the same attention aggregator (including attention AdaLN) used for the input tokens, and apply the same predictor AdaLN used to form the query.
We write $\mcl{S}_{\ell,m,t}^{\mrm{attn}}(Az^{(r)})$ for this memory attention context.
With $H$ predictor heads, the full predictor is
\begin{align}\label{app-eqn:predictor-softmax}
    q_{\ell,h,t}(s)
    &=W_{\ell,h}^{Q}\opn{AdaLN}_{\ell,t}(s)+b_{\ell,h}^{Q},\notag\\
    k_{\ell,h,m,t}^{(r)}
    &=W_{\ell,h}^{K}\opn{AdaLN}_{\ell,t}\!\left(
    \mcl{S}_{\ell,m,t}^{\mrm{attn}}(Az^{(r)})\right)+b_{\ell,h}^{K},\notag\\
    v_{\ell,h,m}^{(r)}
    &=W_{\ell,h}^{V}z_m^{(r)}+b_{\ell,h}^{V},\notag\\
    g_{\ell,m,t}^{\mrm{lpld}}(s)
    &=\alpha_{\ell,t}\odot\left[
    \sum_{h=1}^{H}W_{\ell,h}^{O}
    \left(\sum_{r=1}^{R}
    \opn{sm}_r\left\{
    \frac{q_{\ell,h,t}(s)^\top k_{\ell,h,m,t}^{(j)}}{\sqrt{d_h/H}}
    \right\}_{j\in[R]}
    v_{\ell,h,m}^{(r)}\right) + b_{\ell}^{O}\right].
\end{align}
Here $A$ acts patchwise, $W_{\ell,h}^{Q},W_{\ell,h}^{K}\in\mbb{R}^{(d_h/H)\times d_h}$, and $W_{\ell,h}^{V}\in\mbb{R}^{(d_h/H)\times d_h'}$.
When $d_h'=d_h$, we may take $A=I$ without adapter parameters; otherwise, $A$ is learned.
The shared spatial bank costs $RMd_h'$ parameters once, plus $d_hd_h'$ if the compatibility map $A$ is learned.

\subsubsection{Shared cross-attention (SCA)}
SCA uses untied key and value banks $\{k^{(r)},v^{(r)}\in\mbb{R}^{d_h'}\}_{r=1}^{R}$ shared across all layers and patch positions.
With $H$ predictor heads, the full predictor is
\begin{align}\label{app-eqn:predictor-sca}
 q_{\ell,h,t}(s)&=W_{\ell,h}^{Q}\opn{AdaLN}_{\ell,t}(s)+b_{\ell,h}^{Q},\notag\\
 k_{\ell,h}^{(r)}&=W_{\ell,h}^{K}k^{(r)}+b_{\ell,h}^{K},\notag\\
 v_{\ell,h}^{(r)}&=W_{\ell,h}^{V}v^{(r)}+b_{\ell,h}^{V},\notag\\
 g_{\ell,t}^{\mrm{sca}}(s)
 &=\alpha_{\ell,t}\odot\left[
 \sum_{h=1}^{H}W_{\ell,h}^{O}
 \left(\eta_{\ell,h,t}(s)\sum_{r=1}^{R}
 \opn{sm}_r\left\{
 \frac{q_{\ell,h,t}(s)^\top k_{\ell,h}^{(j)}}{\sqrt{d_h/H}}
 \right\}_{j\in[R]}v_{\ell,h}^{(r)}\right)
 +b_{\ell}^{O}\right].
\end{align}
Here $W_{\ell,h}^{Q}\in\mbb{R}^{(d_h/H)\times d_h}$ and $W_{\ell,h}^{K},W_{\ell,h}^{V}\in\mbb{R}^{(d_h/H)\times d_h'}$.
The positive headwise gain is
\[
\eta_{\ell,h,t}(s)=\opn{softplus}(a_{\ell,h}^{\top}q_{\ell,h,t}(s)+b_{\ell,h}^{\eta}),
\]
with learned $a_{\ell,h}\in\mbb{R}^{d_h/H}$ and scalar $b_{\ell,h}^{\eta}$.
The shared libraries cost $2Rd_h'$ parameters once, while their projections and the query/output maps remain layer-specific.

\subsubsection{Shared MLP (SMLP)}
SMLP uses the same shared-library structure and $H$ predictor heads as SCA but replaces normalized softmax retrieval with unnormalized GELU retrieval.
With $H$ predictor heads, the full predictor is
\begin{align}\label{app-eqn:predictor-gelu}
 q_{\ell,h,t}(s)&=W_{\ell,h}^{Q}\opn{AdaLN}_{\ell,t}(s)+b_{\ell,h}^{Q},\notag\\
 k_{\ell,h}^{(r)}&=W_{\ell,h}^{K}k^{(r)},\notag\\
 v_{\ell,h}^{(r)}&=W_{\ell,h}^{V}v^{(r)},\notag\\
 g_{\ell,t}^{\mrm{smlp}}(s)
 &=\alpha_{\ell,t}\odot\left[
 \sum_{h=1}^{H}W_{\ell,h}^{O}
 \left(\sum_{r=1}^{R}\opn{gelu}\!\left\{
 \frac{q_{\ell,h,t}(s)^\top k_{\ell,h}^{(r)}}{\sqrt{d_h/H}}
 +(b_{\ell,h}^{K})^\top k^{(r)}\right\}v_{\ell,h}^{(r)}\right)
 +b_{\ell}^{O}\right].
\end{align}
K and V projections are bias-free; the Q and O projections have ordinary affine biases.
$(b_{\ell,h}^{K})^\top$ is an additional learned row of the key projection that generates the GELU bias from the shared key prototype.
Note that all predictor heads are evaluated for every token: there is no expert router, softmax normalization, or separate retrieval gain.
Both the single-head and multihead forms retain the MLP's unnormalized GELU expansion while sharing the underlying representations across depth.
The shared libraries cost $2Rd_h'$ parameters once.
Including the layer-specific weight matrices but excluding biases and AdaLN, the cost is
$2Rd_h'+2Ld_h(d_h+d_h')$, compared with $2LRd_h$ for independent width-$R$ MLPs.

\subsection{From a single layer to a residual stack}
\label{app:residual-stack}
The construction in Sec.~\ref{sec:dit-decomposition} isolates aggregation and prediction in a single layer without residual connections.
In this section, we explain its extension to a \emph{residual stack}, a multilayer architecture with residual connections.

Let $h_\ell$ denote the hidden token sequence entering layer $\ell$ at diffusion time $t$.
We absorb normalization, AdaLN, and output gates into the attention branch $\mcl{S}_{\ell,t}^{\mrm{attn}}$ and the tokenwise predictor $g_{\ell,t}$ (an MLP in DiT).
The two residual additions give
\begin{align}\label{app-eqn:residual-stack}
 s_\ell&=h_\ell+\mcl{S}_{\ell,t}^{\mrm{attn}}(h_\ell),\notag\\
 h_{\ell+1}&=s_\ell+g_{\ell,t}(s_\ell)
 =h_\ell+\underbrace{\mcl{S}_{\ell,t}^{\mrm{attn}}(h_\ell)
 +g_{\ell,t}\!\left(h_\ell+\mcl{S}_{\ell,t}^{\mrm{attn}}(h_\ell)\right)}_{F_{\ell,t}(h_\ell)}.
\end{align}
The aggregator interpretation also applies to $s_\ell$: it combines the token's existing state with attention-selected context, so the identity path retains a contribution from the token itself.
The predictor then maps this contextualized state to a residual feature update.

We emphasize that \eqref{app-eqn:residual-stack} motivates interpreting the residual stack as \emph{iterative denoising in feature space}: successive aggregation and prediction updates refine the internal residual stream from which the final clean-image estimate is read out.
This view is consistent with interpretations of residual networks as unrolled iterative estimation and feature refinement \citep{greff2017highway,jastrzebski2018residual}.
This interpretation also motivates consulting a shared memory across depth, providing a reusable bank of learned representations at each refinement step.

However, the clean factorization in a single layer does not necessarily extend to a residual stack.
Patch projection and positional embedding are applied once at network entry; later layers operate on already contextualized tokens.
Although the shared-channel-operator decomposition of attention still holds layerwise, a token can depend on many original patches, and VO or predictor updates can alter later QK routing.
The division of labor between QK and VO is therefore less clear-cut than in a single layer.
Similarly, the MLP in one layer can affect attention in subsequent layers, and attention can in turn affect subsequent MLP updates.

\begingroup
\raggedbottom
\setcounter{topnumber}{4}
\setcounter{bottomnumber}{4}
\setcounter{totalnumber}{6}
\renewcommand{\topfraction}{0.95}
\renewcommand{\bottomfraction}{0.95}
\renewcommand{\textfraction}{0.05}
\renewcommand{\floatpagefraction}{0.65}
\setlength{\floatsep}{6pt plus 2pt minus 2pt}
\setlength{\textfloatsep}{8pt plus 2pt minus 2pt}
\setlength{\intextsep}{8pt plus 2pt minus 2pt}
\makeatletter
\setlength{\@fptop}{0pt}
\setlength{\@fpsep}{6pt plus 2pt minus 2pt}
\makeatother
\section{Experimental details}
\label{app:experimental-details}

\paragraph{Datasets.}
Table~\ref{tab:datasets} summarizes the seven datasets used in our real-image experiments.
All real images are represented as $64\times64$ RGB arrays with pixel values scaled to $[-1,1]$; training is unconditional and does not use class, identity, or attribute labels.

\begin{table}[!htbp]
    \centering
    \small
    \setlength{\tabcolsep}{5pt}
    \begin{tabular}{@{}p{0.37\textwidth}p{0.33\textwidth}rr@{}}
        \toprule
        Dataset & Content & Training & Evaluation \\
        \midrule
        \raggedright CelebA~\citep{liu2015faceattributes} & Aligned celebrity photographs & 162,770 & 19,962 \\
        \addlinespace[3pt]
        \raggedright FFHQ~\citep{karras2019stylebased} & Photographic faces & 60,000 & 5,000 \\
        \addlinespace[3pt]
        \raggedright AAHQ~\citep{liu2021blendgan} & Artistic portraits & 15,366 & 1,921 \\
        \addlinespace[3pt]
        \raggedright AFHQ-v2~\citep{choi2020starganv2,karras2021aliasfree} & Cat, dog, and wild-animal faces & 14,335 & 1,467 \\
        \addlinespace[3pt]
        \raggedright iCartoonFace~\citep{zheng2020cartoon} & Cartoon-face recognition crops & 388,710 & 22,496 \\
        \addlinespace[3pt]
        \raggedright STL-10~\citep{coates2011analysis} & Natural-object images & 104,500 & 8,000 \\
        \addlinespace[3pt]
        \raggedright ImageNet64~\citep{chrabaszcz2017downsampled} & Downsampled natural images & 1,281,167 & 50,000 \\
        \bottomrule
    \end{tabular}
    \caption{\textbf{Datasets used throughout the paper.}
    Counts refer to prepared images after filtering.
    Evaluation uses the held-out splits described below; the ImageNet64 evaluation set is its official validation split.}
    \label{tab:datasets}
\end{table}

We retain CelebA's official train/test splits and center-crop its images to $178\times178$ before bilinear resizing.
For FFHQ, we retain the official 60,000-image training partition and divide the remaining 10,000 images equally into validation and test sets.
For AAHQ, crops from the same source artwork remain in the same split.
AFHQ-v2 pools all three animal categories, retains the official evaluation split, and removes exact pixel duplicates.
iCartoonFace uses its official train/test splits after filtering invalid crops and exact pixel duplicates; identities may overlap across splits.
STL-10 training combines 100,000 unlabeled images with 4,500 labeled training images, reserving the remaining 500 labeled images for validation and retaining the official test split.
The synthetic GMRF experiment instead uses the analytically specified prior in Appendix~\ref{app:exp1-1} and does not train a DiT.

\paragraph{Common training protocol.}
\label{app:common-training}
All real-image DiTs divide each image into $M=256$ nonoverlapping $4\times4$ patches, giving a $16\times16$ patch grid with $d_p=3\cdot4^2=48$ coordinates per patch.
Following the direct-clean-image formulation in Sec.~\ref{sec:setup}, we use the linear Gaussian interpolation
\begin{align}
    X_t=tX_0+(1-t)\varepsilon,\qquad
    \varepsilon\sim\N(0,I),\qquad
    \opn{logit}(t)\sim\N(-0.8,0.8^2).
    \label{eq:shared-predictor-path}
\end{align}
Thus $a_t=t$, $b_t=1-t$, and the normalized observation $Y_t=X_t/t$ has noise level $\sigma_t=(1-t)/t$.
Let $f_\theta(X_t,t)$ denote the model's direct clean-image prediction.
The implementation's VP velocity objective uses $c(t)=\max(1-t,0.05)$ and minimizes
\begin{align}
    \mathcal L(\theta)
    &=\frac{1}{Md_p}\mbb E\left\|
        \frac{f_\theta(X_t,t)-X_t}{c(t)}
        -\frac{X_0-X_t}{c(t)}\right\|_2^2 \nonumber \\
    &=\frac{1}{Md_p}\mbb E\left[
        \frac{\|f_\theta(X_t,t)-X_0\|_2^2}{c(t)^2}\right].
    \label{eq:shared-predictor-loss}
\end{align}
The denominator floor is applied to both predicted and target velocities.
Away from the floored region, the target velocity is $X_0-\varepsilon$.
The floor changes the time weighting but retains the conditional mean as the pointwise population optimum.
Unless specified otherwise below, we use AdamW with a constant learning rate of $2\times10^{-4}$, $(\beta_1,\beta_2)=(0.9,0.999)$, zero weight decay, gradient clipping at norm 1, and global batch size 32.
We maintain an exponential moving average (EMA) of the weights with decay 0.999 and use EMA weights for all reported measurements and generated samples.
The experiment-specific settings below give the model sizes, training durations, seeds, precision, and any departures from these defaults.

\subsection{Experiment 1-1: Exact denoising context under synthetic GMRF priors}
\label{app:exp1-1}
We conduct experiments to illustrate the concept of local context and examine the prediction of Cor.~\ref{crl:bounded-range}: within a specified noise interval, the radius certifying an upper bound on locality error, which we call the \emph{certified radius}, is nondecreasing as the effective noise level increases.

\subsubsection{Synthetic Gaussian MRF prior.}
We consider the Gaussian MRF image prior introduced in Thm.~\ref{thm:gmrf-locality}, for a single-channel image consisting of $16\times16$ scalar patches of dimension $d_p=1$.
We draw a graph $G_0$ by sampling $384$ distinct unordered pairs uniformly without replacement from the $\binom{256}{2}$ possible pairs, and independently draw each retained edge's connection strength as $\omega_{ij}\sim\operatorname{Uniform}(0.5,1.5)$.
We choose $384$ edges to give an average of three neighbors per patch: each undirected edge contributes to the degree of both endpoints, so the mean degree is $2\times384/256=3$.
Then, let $A_{ij}=A_{ji}=\omega_{ij}$ for each sampled edge $\{i,j\}$ and zero otherwise.
We define the prior by
\begin{align}
    X_0\sim\mathcal N(0,Q^{-1}),\qquad
    Q=0.2 I+\operatorname{diag}(A\mathbf{1})-A.
\end{align}
Here $\mathbf{1}$ is the all-ones vector, so $A\mathbf{1}$ contains the row sums of $A$. We use the weighted graph Laplacian $\operatorname{diag}(A\mathbf{1})-A$ so that the prior penalizes differences between connected patches, with larger edge weights encouraging stronger agreement.
The positive diagonal shift ensures that $Q$ is positive definite even if the graph is disconnected.
We evaluate 41 logarithmically spaced noise levels from $10^{-2}$ to $10^2$, together with zero and the corollary's endpoint $\sigma_{\mrm{abs}}$, giving 43 noise levels in total.

\subsubsection{Exact and certified radii for absolute locality error}
Since we know the population prior and its graph, we can compute the global population denoiser and the exact local denoiser for each graph-radius context $B_r(m)=\{l:d_{G_0}(m,l)\le r\}$.
Here $d_{G_0}$ is the unweighted shortest-path distance; vertices in different components have infinite distance and zero cross-component influence.
This allows us to compare the minimum radius achieving a given locality error with the radius certifying its upper bound in Cor.~\ref{crl:bounded-range}.
Specifically, observe $Y_\sigma=X_0+\sigma\varepsilon$ with independent $\varepsilon\sim\mathcal N(0,I)$, and write $J_\sigma=(I+\sigma^2Q)^{-1}$ for the population denoiser's Jacobian.
The global denoiser and the local denoiser on $B_r(m)$ are
\begin{align}
    D^\ast_{m,\sigma}(y)&=(J_\sigma y)_m,\\
    D^{\ast,r}_{m,\sigma}(y)&=(Q^{-1})_{m,B_r(m)}
    \bigl[(Q^{-1})_{B_r(m),B_r(m)}+\sigma^2I\bigr]^{-1}y_{B_r(m)}.
\end{align}
Gaussian conditioning gives the locality error in closed form:
\begin{align}
    &\mathbb E\bigl|D^\ast_{m,\sigma}(Y_\sigma)-D^{\ast,r}_{m,\sigma}(Y_\sigma)\bigr|^2\nonumber\\
    &\quad=(J_\sigma Q^{-1})_{mm}
    -(Q^{-1})_{m,B_r(m)}
    \bigl[(Q^{-1})_{B_r(m),B_r(m)}+\sigma^2I\bigr]^{-1}
    (Q^{-1})_{B_r(m),m}.
    \label{eq:exp1-1-exact-error}
\end{align}
Based on this expression, we compute the \emph{exact radius} for an absolute tolerance $\epsilon>0$ as
\begin{align}
    r_{\mrm{ext}}(m,\sigma;\epsilon)
    :=\min\left\{r\in\mathbb Z_{\ge0}:
    \mathbb E\bigl|D^\ast_{m,\sigma}(Y_\sigma)-D^{\ast,r}_{m,\sigma}(Y_\sigma)\bigr|^2\le\epsilon\right\}.
\end{align}
Corollary~\ref{crl:bounded-range} also gives the certified radius.
For this calculation we use $q_-=0.2$ and $q_+=0.2+2\max_i\sum_j A_{ij}\approx21.420774$.
The former is the exact minimum eigenvalue of $Q$, while the latter is a conservative upper bound on its maximum eigenvalue; the theorem's bound remains valid with this spectral enclosure.
Using $\chi_t$ and $\rho_t$ as defined in Appendix~\ref{app:gmrf-locality}, evaluated at $\sigma_t=\sigma$, the \emph{certified radius} is
\begin{align}
    r_{\mrm{crt}}(\sigma;\epsilon)
    :=\min\left\{r\in\mathbb Z_{\ge0}:
    \frac{(1-1/\sqrt{\chi_t})^2}{q_-(1+\sigma^2q_-)}\rho_t^{2r}\le\epsilon\right\}.
\end{align}
At zero noise, both radii are zero.
The certified radius is common for all target patches $m$.
We use tolerances $\epsilon=10^{-3}$ in the main figure.

\subsubsection{Sensitivity radius as a measure of statistical dependence}
Along with these exact and certified radii, we define a sensitivity radius based on the input--output Jacobian, which measures how perturbing each observed patch affects the predicted target patch.
For this Gaussian prior, $(J_\sigma)_{ml}=\partial D^\ast_{m,\sigma}(y)/\partial y_l$, with $J_\sigma=(I+\sigma^2Q)^{-1}$.
We square its entries to measure sensitivity energy and normalize over input patches as $(J_\sigma)_{ml}^2/[\sum_j(J_\sigma)_{mj}^2]$.
This normalized derivative energy is the fraction of the target $m$ patch's total squared sensitivity attributable to input patch $l$.
It measures relative influence, so a common reduction in the magnitude of all derivatives does not by itself change the normalized map.
Using this normalization, we define the \emph{$q$-sensitivity radius}:
\begin{align}
    r_{\mrm{sns}}(m,\sigma;q)
    :=\min\left\{r\in\mathbb Z_{\ge0}:
    \sum_{l\in B_r(m)}(J_\sigma)_{ml}^2 \bigg/ \left[\sum_j(J_\sigma)_{mj}^2\right]
    \ge q\right\}.
\end{align}
The self-patch is included, and we evaluate $q\in\{0.90,0.95\}$.
Unlike the exact and certified radii, this radius concerns relative derivative energy rather than an absolute locality-error tolerance, so the three radii need not agree.
For any prior with finite second moments, the population denoiser satisfies the posterior-covariance identity \citep{dytso2021general}
\begin{align}\label{eq:denoiser-covariance}
    \frac{\partial D_{m,t}^\ast(y)}{\partial y_l}
    =\sigma_t^{-2}\opn{Cov}(X_{0,m},X_{0,l}\mid Y_t=y,t),
    \qquad \sigma_t>0.
\end{align}
Thus, its Jacobian reflects the dependence between clean patches that remains after observing the noisy image.
For the Gaussian prior considered here, this Jacobian is independent of $y$.

\subsubsection{Interpretation of Fig.~\ref{fig:exp1-1-synthetic}}

\paragraph{Fig.~\ref{fig:exp1-1-synthetic}(a) Local context need not be spatially contiguous.}
The first panel shows the radius-3 local contexts of the two selected patches, containing 12 and 50 nodes, respectively, including the targets.
Because these sets are defined by graph distance, they need not be square or spatially contiguous: a patch that is distant on the image grid can still be nearby in $G_0$.
Each ball is nevertheless connected in the graph.
Thus, locality is relative to the dependence structure of the prior, rather than necessarily to proximity on the image grid.

\paragraph{Fig.~\ref{fig:exp1-1-synthetic}(b) The certified radius is nondecreasing over a bounded noise interval.}
The second panel compares the exact $r_{\mrm{ext}}$ and certified radii $r_{\mrm{crt}}$ at the absolute error tolerance $\epsilon=10^{-3}$.
At low noise, the target patch's own observation is accurate, so little additional context is needed.
As noise increases, observations of correlated patches become more useful, and both radii are nondecreasing over the specified interval in this realization.
$r_{\mrm{crt}}$ is conservative: its maximum over the sweep is 16 hops, compared with 4 hops for each exact radius.
The dotted line marks $\sigma_{\mrm{abs}}\approx1.128962$, the endpoint specified in Cor.~\ref{crl:bounded-range}.
The corollary guarantees monotonicity of the certified radius on $[0,\sigma_{\mrm{abs}}]$; the exact-radius behavior is an empirical observation in this example.
Beyond this interval, the radii eventually decrease as the observations provide progressively less information about the clean signal.
Indeed, $J_\sigma\sim\sigma^{-2}Q^{-1}$ as $\sigma\to\infty$, so both the global and local posterior means approach the zero prior mean and their absolute difference shrinks.
Note that this does not mean that reconstruction improves: locality error measures the discrepancy between the two denoisers, not their error relative to the clean image.

\paragraph{Fig.~\ref{fig:exp1-1-synthetic}(c) Relative sensitivity reveals how the denoiser distributes influence across patches.}
The third panel shows that the sensitivity radius $r_{\mrm{sns}}$ can remain broad even when the absolute locality error becomes small.
As noise increases from low levels, a larger graph neighborhood is needed to capture the same fraction of the total derivative energy, consistent with the initial growth of $r_{\mrm{ext}}$ and $r_{\mrm{crt}}$.
At extreme noise, however, the absolute response to every observation weakens, while its relative contribution approaches a constant:
\begin{align}
    \frac{(J_\sigma)_{ml}^2}{\sum_j (J_\sigma)_{mj}^2}\longrightarrow
    \frac{((Q^{-1})_{ml})^2}{\sum_j((Q^{-1})_{mj})^2}
    \qquad\text{as }\sigma\to\infty.
\end{align}
At extreme noise, the posterior covariance approaches the prior covariance $Q^{-1}$, while the factor $\sigma^{-2}$ in the Jacobian makes its absolute entries vanish.
This common scale cancels under normalization, leaving a pattern determined by the prior's covariances, which can extend to distant patches.
These results show why a broad sensitivity radius can coexist with a small absolute locality error: the radius describes the distribution of influence, while the error also depends on its absolute strength.
Note that in this synthetic setting, we do not claim that the sensitivity radius must increase monotonically with noise, or that observations outside the corresponding neighborhood can be removed without affecting denoising accuracy.

This interpretation provides a link to denoisers trained on real-image datasets: their input--output Jacobians also describe responses to infinitesimal perturbations, so the sensitivity radius $r_{\mrm{sns}}$ can be defined in the same way to characterize local functional context even when the population prior and exact local conditional means are unknown.
Furthermore, due to \eqref{eq:denoiser-covariance}, to the extent that learned denoisers approximate the population denoisers, their Jacobian-based sensitivity provides a measure of statistical dependence relevant to denoising.

\subsection{Experiment 1-2: Denoising context in DiTs trained on real-image datasets}\label{app:exp1-2}

We ask whether the denoising-relevant statistical dependence in image datasets exhibits a similar pattern to the exponentially local GMRF model studied in Appendix~\ref{app:exp1-1}.
We use trained DiTs as estimators of the population denoiser and its Jacobian; through the posterior-covariance identity \eqref{eq:denoiser-covariance}, their sensitivity fields provide an empirical view of this dependence.
We use a DiT with approximately 20M parameters, comprising $L=10$ transformer layers, token dimension $d_h=320$, $\tilde{H}=10$ attention heads, and MLP hidden width 1280.
We denote its clean-patch prediction by $D_{m,t}^{\mrm{dit}}$ and use the common patch representation and training protocol at the start of Appendix~\ref{app:experimental-details}.
Following the normalized-observation convention in Sec.~\ref{sec:setup}, $D_{m,t}^{\mrm{dit}}(y)$ denotes the prediction obtained by feeding $ty$ and $t$ to the trained network; thus, all input derivatives below are with respect to $y$.

\paragraph{Experiment-specific training.}
We train models on CelebA, iCartoonFace, ImageNet64, and STL-10 using the datasets and splits described above.
We analyze checkpoints at epochs 400, 400, 800, and 800, respectively.
The non-ImageNet runs use batch size 32 and FP32 on single GPUs, mostly RTX 2080 Ti GPUs; the ImageNet checkpoint uses batch size 256 and BF16 on an H200.

\subsubsection{Sensitivity context set based on input-output Jacobian}
Our primary measurement is the Jacobian of output patch $m$ with respect to input patch $l$, extending the scalar-patch sensitivity in Appendix~\ref{app:exp1-1}.
For real-image priors, the population derivative $\partial D_{m,t}^\ast(y)/\partial y_l$ is unavailable, so we estimate it using the trained denoiser's Jacobian,
\begin{align}
    (J_t(y))_{ml}:=\frac{\partial D_{m,t}^{\mrm{dit}}(y)}{\partial y_l}
    \in\mbb{R}^{d_p\times d_p}.
\end{align}
Since each patch contains multiple spatial and color coordinates, we measure its sensitivity by the squared Frobenius norm of this block, which sums the squared derivatives over all input--output coordinate pairs.
The squared Frobenius norm extends the squared scalar Jacobian entries used in Appendix~\ref{app:exp1-1} to vector-valued patches.
This also follows the use of input-gradient sensitivity to study denoiser locality in \citet[Sec.~3.1]{niedoba2025towards}.

For each dataset, we evaluate 1,024 held-out images at 12 noise levels from $\sigma_t=0.02$ to $49$.
In practice, we estimate the squared Frobenius norm using vector--Jacobian products with four independent Rademacher output probes per target patch, following Hutchinson-type Monte Carlo estimation \citep{avron2011randomized}.
We divide by $d_p$ to obtain sensitivity per output coordinate, average over held-out clean images and Gaussian corruption noise, and then normalize over input patches to define the \emph{normalized sensitivity weight}:
\begin{align}
    w_{ml}(t):=
    \frac{d_p^{-1}\mbb{E}_{X_0,\varepsilon}\|(J_t(Y_t))_{ml}\|_F^2}
    {\sum_{j=1}^M d_p^{-1}\mbb{E}_{X_0,\varepsilon}\|(J_t(Y_t))_{mj}\|_F^2}.
    \label{eq:exp1-2-sensitivity-energy}
\end{align}
Thus, $w_{ml}(t)$ is the fraction of patch $m$'s expected squared sensitivity attributable to input patch $l$, with $\sum_l w_{ml}(t)=1$ whenever the total sensitivity is positive.

For real images, the prior's dependence graph is unknown so we cannot compute the exact radius of a local context as in the synthetic case.
Instead, we describe local context in terms of spatial distance and patch count.
Specifically, we use the Chebyshev grid distance, a patch hop, defined as $d_{\mrm{grid}}(m,l)=\max\{|i_m-i_l|,|j_m-j_l|\}$, where $i_m,j_m$ denote the row and column indices of patch $m$, and define $B_r(m)=\{l:d_{\mrm{grid}}(m,l)\le r\}$.
These neighborhoods are square regions clipped to the image boundary, with radius measured in patch hops.
Using the normalized derivative energy in \eqref{eq:exp1-2-sensitivity-energy}, we define the \emph{sensitivity radius} by
\begin{align}
    r_{\mrm{sns}}(m,t;q)
    &:=\min\left\{r\in\mathbb Z_{\ge0}:\sum_{l\in B_r(m)}w_{ml}(t)\ge q\right\}.
\end{align}
The radius is the smallest centered Chebyshev ball containing at least a fraction $q$ of the normalized derivative energy.
This ball measures the spatial extent of the context, but the functionally relevant local context set need not be ball-shaped.
To allow arbitrary shapes, we also define a context set containing the fewest patches that together account for at least a fraction $q$ of the normalized derivative energy and the \emph{sensitivity count} as
\begin{align}
    \mcl{B}_{m,t}(q)
    \in\operatorname*{arg\,min}_{B\subseteq[M]}
    \left\{|B|:\sum_{l\in B}w_{ml}(t)\ge q\right\},\qquad 
    K_q(m,t):=|\mcl{B}_{m,t}(q)|.
    \label{eq:exp1-2-context-set}
\end{align}
We construct this set by adding patches in descending order of $w_{ml}(t)$ until their cumulative energy reaches $q$, breaking ties by patch index.
The selected patches may be scattered or disconnected; the target patch is eligible for selection.
For the population curves, we compute each target's radius and count from the image-averaged energy in \eqref{eq:exp1-2-sensitivity-energy}, then average these quantities over all 256 target patches.
For the displayed sensitivity heatmaps, we apply the same construction to the normalized derivative energy of the individual noisy image.

Together, these quantities distinguish spatially broad context from sparse influence carried by a few distant patches.
The posterior-covariance identity \eqref{eq:denoiser-covariance} connects these measurements to statistical dependence remaining after observing the noisy image.
Under the Jacobian approximation described above, these sensitivity-based contexts estimate the patches carrying most of the expected squared posterior cross-covariance with the target.
They thus characterize the dataset's denoising-relevant dependence through a learned predictor.

\subsubsection{Attention context set based on attention mass}
Additionally, we average the normalized attention matrices computed from the residual-stream features of $D_{m,t}^{\mrm{dit}}$ over heads and layers, and use the resulting attention mass in place of $w_{ml}(t)$.
Specifically, let $A_{ml,t}^{(\ell,h)}(y)$ denote the softmax attention weight from query patch $m$ to source patch $l$ in head $h$ of layer $\ell$, evaluated when the network receives $ty$ and time $t$.
These are the spatial weights in \eqref{app-eqn:attention-weights}, computed from each layer's features after layer normalization and AdaLN modulation, as described in Appendix~\ref{app:residual-stack}.
Averaging over the same held-out images and corruption noise gives the \emph{mean attention weight}:
\begin{align}
    w_{ml}^{\mrm{attn}}(t)
    :=\mbb{E}_{X_0,\varepsilon}\left[
    \frac{1}{L\tilde{H}}\sum_{\ell=1}^{L}\sum_{h=1}^{\tilde{H}}
    A_{ml,t}^{(\ell,h)}(Y_t)\right].
    \label{eq:exp1-2-attention-mass}
\end{align}
Each attention row sums to one over source patches, so $\sum_l w_{ml}^{\mrm{attn}}(t)=1$ without further normalization.
We average the softmax weights themselves, without squaring them or weighting them by residual gates or value/output projections; this measures how attention distributes mass across token positions along the residual stream.
Note that both $w_{ml}^{\mrm{attn}}(t)$ and $w_{ml}(t)$ assign normalized weights to input patches $l$ for a given target patch $m$, but they measure different quantities: the former describes how attention distributes mass within the network, whereas the latter measures the relative squared sensitivity of the final denoised prediction to each input patch. 
In this sense, although both weights lie in $[0,1]$ and sum to one, equal values do not imply equal influence.

Using the same neighborhoods and coverage threshold as for the sensitivity radius, we define the \emph{attention radius} by
\begin{align}
    r_{\mrm{attn}}(m,t;q)
    &:=\min\left\{r\in\mathbb Z_{\ge0}:
    \sum_{l\in B_r(m)}w_{ml}^{\mrm{attn}}(t)\ge q\right\}.
\end{align}
Mirroring \eqref{eq:exp1-2-context-set}, we additionally define the attention context set and \emph{attention count} as
\begin{align}
    \mcl{B}_{m,t}^{\mrm{attn}}(q)
    \in\operatorname*{arg\,min}_{B\subseteq[M]}
    \left\{|B|:\sum_{l\in B}w_{ml}^{\mrm{attn}}(t)\ge q\right\},\qquad
    K_q^{\mrm{attn}}(m,t):=|\mcl{B}_{m,t}^{\mrm{attn}}(q)|.
    \label{eq:exp1-2-attention-context-set}
\end{align}
The radius measures the smallest centered square capturing a fraction $q$ of the mean attention weight, while the count measures the fewest patches carrying that fraction, allowing arbitrary spatial arrangements.
We obtain the attention context set by selecting patches in descending order of $w_{ml}^{\mrm{attn}}(t)$, with the same tie-breaking rule as in \eqref{eq:exp1-2-context-set}.
These quantities are computed after averaging the attention maps over heads, layers, and images; the population curves then average the resulting radii or counts over all 256 target patches.
For an individual-image heatmap, we average only over heads and layers before selecting its context set.

\subsubsection{Interpretation of Fig.~\ref{fig:exp1-2-real}}

\paragraph{Estimated dependence fields resemble the exponentially local GMRF setting.}
Figure~\ref{fig:exp1-2-real} reveals a pattern of denoising-relevant statistical dependence similar to that of the synthetic GMRF prior.
Sensitivity is concentrated at lower noise, while its radius generally increases as noise grows, echoing the broadening of relative sensitivity in Fig.~\ref{fig:exp1-1-synthetic}.
Through the posterior-covariance interpretation in \eqref{eq:denoiser-covariance}, this suggests that predicting a patch relies on a broader set of statistical dependencies when the observation is noisier.
Figures~\ref{fig:exp1-2-metrics} and~\ref{fig:exp1-2-extra-maps} corroborate this overall broadening across additional datasets and at both 90\% and 95\% coverage curves.
The analytic GMRF setting thus provides a useful model for interpreting the estimated dependence fields of real-image datasets, beyond its role as a tractable synthetic example.

\begin{figure}[!htbp]
    \centering
    \includegraphics[width=\textwidth]{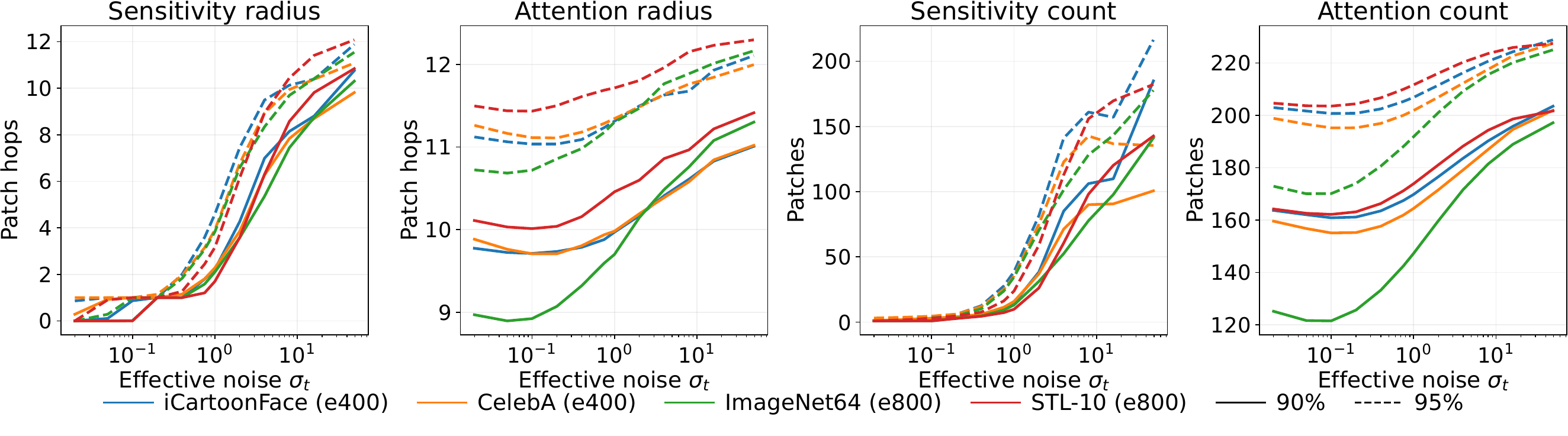}
    \caption{Sensitivity and attention radii and patch counts on four datasets. Solid/dashed curves show 90\%/95\% coverage, computed from population maps over 1,024 held-out images and averaged over all 256 target patches.}
    \label{fig:exp1-2-metrics}
\end{figure}

\paragraph{The context set need not be spatially contiguous.}
The estimated dependence fields in the heatmaps show that patches carrying most of the sensitivity can form irregular or disconnected regions rather than a centered square.
At lower noise, influence is concentrated near the target and selected nearby structures; at higher noise, the displayed context sets spread over a larger part of the image.
The radius and patch count therefore describe complementary aspects of context: its spatial extent and the number of patches carrying a specified fraction of influence.

\begin{figure}[!htbp]
    \centering
    \includegraphics[width=\textwidth]{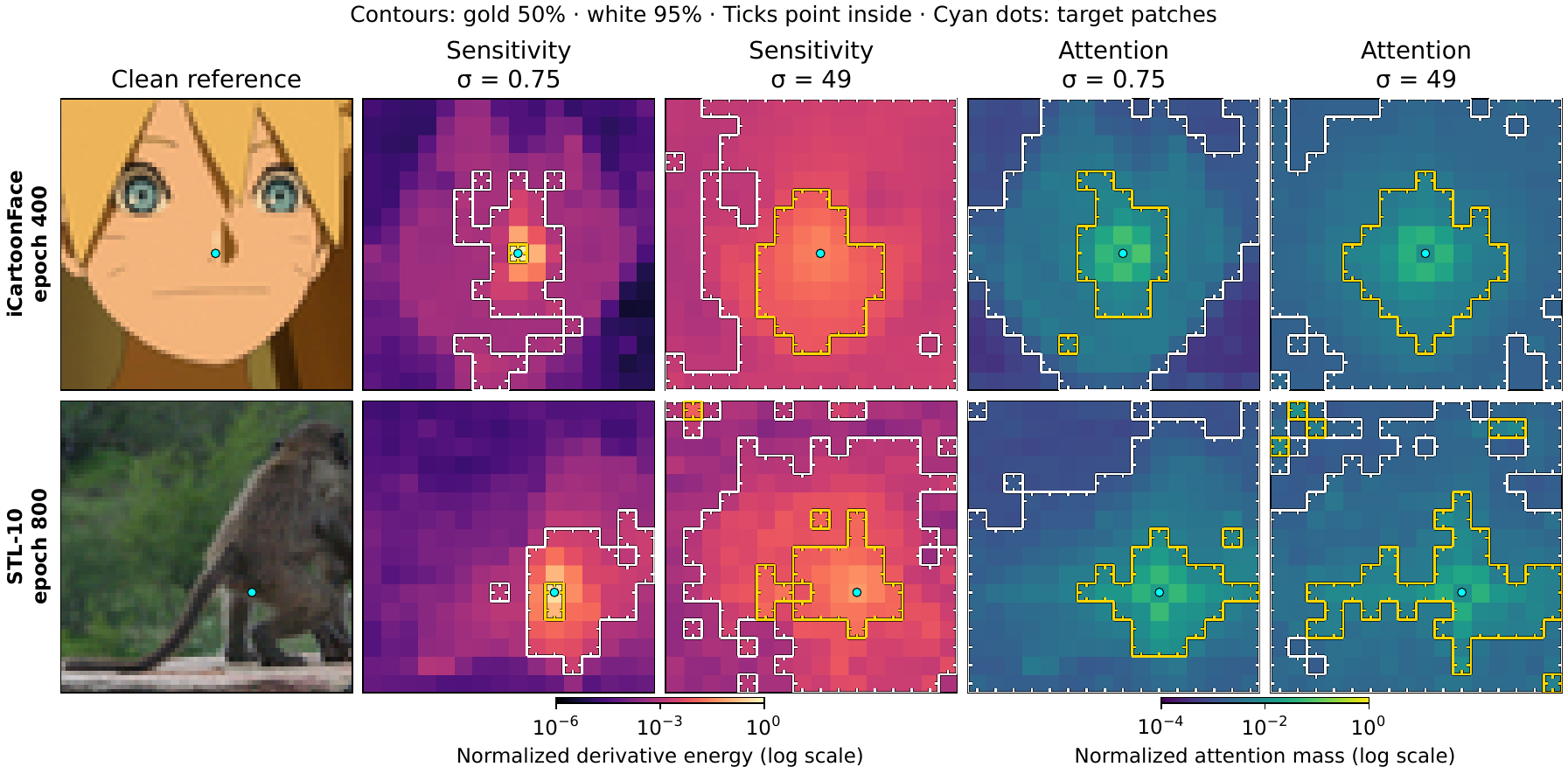}
    \caption{Additional context maps for iCartoonFace and STL-10 at $\sigma_t=0.75,49$, complementing the CelebA and ImageNet examples in Fig.~\ref{fig:exp1-2-real}. Colors are logarithmic; gold/white contours mark 50\%/95\% coverage, ticks point inward, and cyan dots mark target patches.}
    \label{fig:exp1-2-extra-maps}
\end{figure}

\paragraph{Attention exhibits a related pattern of context formation.}
Head- and layer-averaged attention also tends to spread over a broader region at higher noise, although its mass is already more diffuse than sensitivity at low noise.
Attention weights and input--output Jacobians are different quantities: the former are softmax-normalized logits between intermediate tokens, whereas the latter measure how the final denoised prediction responds to the original input.
There is therefore no requirement that their spatial patterns agree.
Their observed agreement in overall broadening is consistent with attention adapting its aggregation to the dataset's denoising-relevant dependence structure and contributing to the formation of noise-dependent local context, as suggested by the aggregator interpretation in Sec.~\ref{sec:attention-aggregator} and Appendix~\ref{app:residual-stack}.
The additional radius/count curves and heatmaps in Figs.~\ref{fig:exp1-2-metrics} and~\ref{fig:exp1-2-extra-maps} support this interpretation.
Note that the comparison concerns a shared qualitative trend rather than equality of context sets: 95\% attention mass and 95\% squared derivative energy describe different distributions.

\subsection{Experiment 2: Replacing MLPs with shared-memory predictors}
\label{app:exp2}


\subsubsection{Motivation}

Our aggregator--predictor framework decomposes the local conditional mean into a local aggregator and a patch predictor (Def.~\ref{dfn:agg-pre}).
Sections~\ref{sec:attention-aggregator} and~\ref{sec:patch-predictors} associate these operations with attention and the DiT MLP, respectively.
This interpretation motivates replacing the MLP with more interpretable predictors based on retrieval from a reusable bank of learned representations, as introduced in Sec.~\ref{sec:patch-predictors}, with the aim of attaining comparable denoising and generative performance at a similar trainable-parameter budget.
If the proposed alternatives were to fail to do so, this would indicate that their representation or training omits useful properties of an ordinary MLP for denoising.
Conversely, a successful replacement provides evidence that the framework supports a useful, explicitly structured alternative.
Additionally, this comparison lets us examine the effects of three design choices on empirical performance: patch-specific versus patch-shared storage, softmax cross-attention versus unnormalized GELU retrieval, and a base memory shared across layers versus independent layerwise memory.
We take a baseline DiT and replace only its patch predictor with an alternative predictor, leaving everything else fixed; the following section provides the details.

\subsubsection{Settings}

\paragraph{Dataset and common backbone.}
We use the CelebA splits, preprocessing, and common training protocol described at the start of Appendix~\ref{app:experimental-details}, without horizontal-flip augmentation.
All models use token width $d_h=256$, depth $L=8$, and $\tilde H=8$ spatial-attention heads.
The patch embedding, spatial-attention architecture, AdaLN conditioning, residual structure, and image-output head are common across families.
Each model is trained jointly from initialization; the common backbone specification does not mean that attention weights are frozen or shared between runs.

\paragraph{Predictor configurations and training.}
We use the implementations in Appendix~\ref{app:predictor-implementation}, with the residual composition in Appendix~\ref{app:residual-stack}.
Table~\ref{tab:shared-predictor-grid} lists all 21 configurations.
For each shared-memory family, we compare $H=1$ and $H=8$ predictor heads at the same stored width, with projected width $d_h/H$ per head.
Thus, the total projected query, key, and value width stays at 256 rather than growing eightfold with the number of heads.
We vary stored width $d_h'$ and choose MLP hidden widths to obtain three approximate parameter budgets around 8.6--8.9M, 10.2M, and 13.0--13.4M parameters.
The table reports exact counts, including the common backbone and all predictor parameters; matching is approximate, rather than exact equality or equal computation.

For training, we use the common objective in Eq.~\eqref{eq:shared-predictor-loss} and train all configurations to 400 epochs, with 5,086 optimizer updates per recorded epoch and 2,034,400 updates in total.
The global batch size is 32, and the default learning rate is constant except for the LPLD recovery intervention below.
All runs use training seed 0.

\begin{table}[!htbp]
    \centering
    \small
    \setlength{\tabcolsep}{8pt}
    \begin{tabular}{lrrrr}
        \toprule
        Predictor & $R$ & $d_h'$ & Parameters, $H=1$ & Parameters, $H=8$ \\
        \midrule
        LPLD & 32 & 128 & 8,935,472 & 8,935,472 \\
        LPLD & 32 & 256 & 10,213,424 & 10,213,424 \\
        LPLD & 32 & 512 & 12,965,936 & 12,965,936 \\
        \midrule
        SCA & 1024 & 256 & 8,642,616 & 8,642,672 \\
        SCA & 1024 & 512 & 10,215,480 & 10,215,536 \\
        SCA & 1024 & 1024 & 13,361,208 & 13,361,264 \\
        \midrule
        SMLP & 1024 & 256 & 8,638,512 & 8,652,848 \\
        SMLP & 1024 & 512 & 10,213,424 & 10,242,096 \\
        SMLP & 1024 & 1024 & 13,363,248 & 13,420,592 \\
        \midrule
        MLP & 640 & 256 & 8,639,536 & --- \\
        MLP & 1024 & 256 & 10,215,472 & --- \\
        MLP & 1792 & 256 & 13,367,344 & --- \\
        \bottomrule
    \end{tabular}
    \caption{\textbf{Complete patch-predictor grid.}
    Each numeric entry in the last two columns specifies one trained configuration.
    All models use $d_h=256$, $L=8$, and $\tilde H=8$; predictor head width is 256 for $H=1$ and 32 for $H=8$.
    $R$ counts image-shaped memories for LPLD, shared prototype pairs for SCA/SMLP, and hidden units for MLP.}
    \label{tab:shared-predictor-grid}
\end{table}

\paragraph{LPLD computation and noise groups.}
LPLD's key path applies spatial attention to each of the $R$ full memory grids at every layer and noise level.
This is substantially more expensive than projecting a patch-shared prototype bank: distinct sampled times require distinct conditioned memory computations.
We therefore draw four independent times per global batch and share each time among eight images, retaining independent Gaussian noise for each image.
This allows the memory-key computation to be reused within a time group, while leaving per-image marginal objective in Eq.~\eqref{eq:shared-predictor-loss} unchanged.

During training, the original $H=8,d_h'=256$ LPLD run developed severe generation artifacts and degraded denoising during training.
We exclude that run from the displayed comparisons and use a separate recovery that resumes its epoch-225 checkpoint, including optimizer and EMA state, at learning rate $10^{-4}$.
The recovery reaches cumulative epoch 400 after 175 additional epochs and has FID 29.197 at that endpoint.
Only completed post-restart measurements are drawn for its trajectory; they are not spliced onto the original curve.

\paragraph{Evaluation.}
All measurements use EMA weights and the same 19,962 held-out test images.
We evaluate the velocity objective above and the raw, unclipped clean-image MSE across 32 logarithmically spaced normalized noise levels $\sigma\in[0.05,4]$.
The denoising score is the trapezoidal integral of this MSE with respect to $\log\sigma$, divided by $\log(4/0.05)$, so it is an average error over the specified noise range.
At each noise level we use $t=1/(1+\sigma)$ to map the normalized observation to the model's input convention.
FID-50k compares 50,000 generated samples with the common real-test feature statistics.
Generation uses 50 uniform time intervals from noise to data, with Heun updates and an Euler update on the last interval, and the same denominator floor of 0.05.
We evaluate the standard training trajectories every 25 epochs.
Figure~\ref{fig:shared-predictor-efficiency} uses epoch 400 for every displayed metric and model in the current snapshot, rather than selecting each model's best intermediate FID.

\subsubsection{Results and interpretation}

Figure~\ref{fig:shared-predictor-trajectories} shows that all reported families learn substantial denoising improvements, but their attained errors and generation trajectories differ.
Held-out losses generally decrease smoothly, whereas FID is less monotone.
The following comparisons use the common epoch-400 snapshot in Fig.~\ref{fig:shared-predictor-efficiency}.

\begin{figure}[!htbp]
    \centering
    \includegraphics[width=\textwidth]{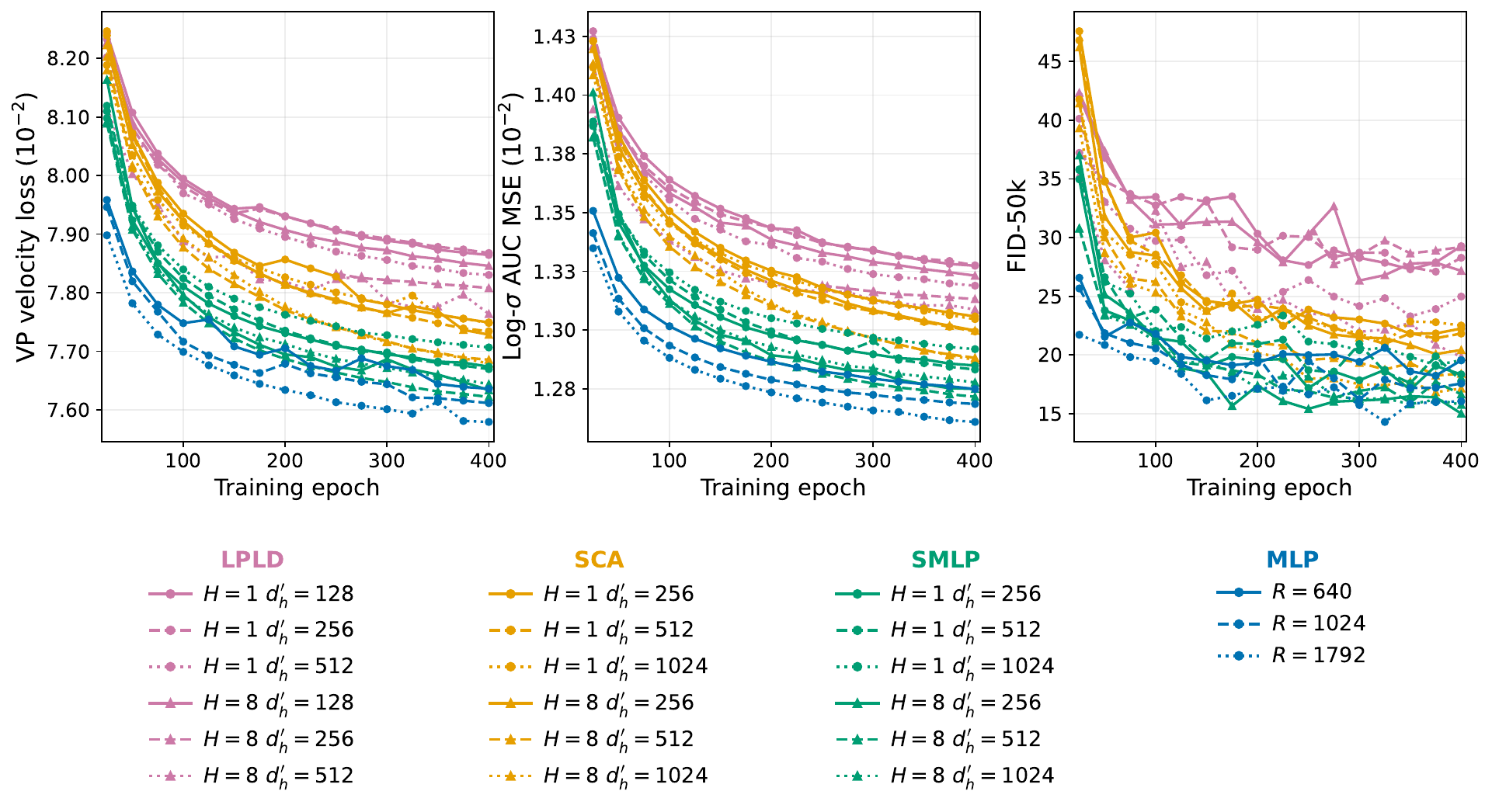}
    \caption{\textbf{Training trajectories for the patch-predictor comparison.}
    Held-out velocity loss, denoising log-$\sigma$ AUC MSE, and FID-50k on CelebA64.
    Colors identify predictor families; circles and triangles indicate $H=1$ and $H=8$.
    Solid, dashed, and dotted curves denote increasing $d_h'$ within each shared-memory family, or increasing hidden width $R$ for MLP.}
    \label{fig:shared-predictor-trajectories}
\end{figure}

\paragraph{(1) Different patches can retrieve from a shared patch memory.}
The patch-shared SCA and SMLP banks are more competitive than LPLD's position-specific image-shaped memory in the corresponding parameter-budget groups.
At the largest budget with $H=8$, LPLD has denoising AUC 0.013087 and FID 19.848, compared with 0.012873 and 17.145 for SCA, and 0.012777 and 15.767 for SMLP.
Thus, allocating a separate stored value vector to every patch position is not necessary for competitive denoising in this setting.
Sharing prototypes across positions permits a much larger bank---1,024 entries rather than 32---at a similar total parameter count.
Overall, the weaker LPLD results show that the most literal learned-image retrieval construction is not automatically the easiest or most effective mechanism for patch predictor.

\paragraph{(2) GELU retrieval is more effective here than softmax retrieval.}
At every tested $(H,d_h')$ pair, SMLP has lower held-out velocity loss, denoising AUC, and FID than SCA.
For example, at $H=8,d_h'=512$, SCA and SMLP have FIDs 18.413 and 16.652, respectively, with denoising AUCs 0.012882 and 0.012716.
Although SCA already includes the positive, query-dependent headwise gain $\eta_{\ell,h,t}(s)$ specified in Appendix~\ref{app:predictor-implementation}, its within-head softmax coefficients remain nonnegative and coupled by normalization.
In contrast, GELU allows unnormalized, potentially signed coefficients, providing a more flexible way to combine the shared values.
The observed advantage is consistent with this flexibility being useful during training.

\paragraph{(3) Multiple predictor heads are usually beneficial.}
Holding the stored width fixed, $H=8$ improves both held-out denoising measures over $H=1$ at every tested width in all three shared-memory families.
For SCA and SMLP, it also improves FID at every width.
For example, at $d_h'=1024$, increasing $H$ from one to eight reduces FID from 22.508 to 17.145 for SCA and from 19.556 to 15.767 for SMLP.
LPLD FID improves from 29.269 to 27.167 at $d_h'=128$ and from 24.976 to 19.848 at $d_h'=512$.
This suggests that multiple projections of the shared bank can be useful while keeping the total projected width fixed.

\paragraph{(4) The base memory can be shared across layers.}
The central comparison is between SMLP, whose base key--value bank is reused at every layer, and the ordinary MLP, whose vectors are independently stored in each layer.
For $H=8$ SMLP at $d_h'\in\{256,512,1024\}$, epoch-400 FIDs are 15.017, 16.652, and 15.767, compared with 19.550, 17.574, and 16.060 for the respective MLP controls.
At the smallest budget, their denoising AUCs are 0.0127499 and 0.0127496, and their velocity losses are 0.076336 and 0.076360.
At the middle and largest budgets, SMLP has approximately 0.24\% and 1.33\% higher denoising AUC, respectively, despite its lower FID.
Thus the shared-memory predictor is competitive in both denoising and generation, with a small denoising tradeoff at the larger budgets.
We emphasize that the result is a constructive demonstration that independently stored memory at every layer is not required for this level of performance.

\paragraph{Scope.}
We close by noting the scope of this experiment. 
It is a controlled study of unconditional face generation at $64\times64$ resolution, with modest model sizes and one training seed per configuration. 
These limits constrain how broadly the findings of our experiment can be applied.
We therefore do not claim that all DiT MLPs should share memory. 
Instead, the results support considering an explicit shared memory architecture when a common representation bank would be useful for inspection or future intervention. 
Our experiment shows that this architecture is feasible at least in the setting studied here.

\subsection{Experiment 3: Component compatibility in cross-dataset adaptation}
\label{app:exp3}

\paragraph{Motivation and design.}
The aggregator--predictor framework separates forming a patch's local context from predicting its clean value using learned representations.
Section~\ref{sec:attention-aggregator} associates the first operation with attention and Sec.~\ref{sec:patch-predictors} associates the second with the MLP.
Experiment 1 finds that DiT sensitivity and attention maps exhibit related noise-dependent broadening on real images; Experiment 2 shows that explicit retrieval from a shared memory supports competitive denoising and generation quality in a controlled CelebA64 study.
We now ask whether component compatibility across image domains is consistent with the proposed attention and MLP roles.
This question also matters when target data or adaptation compute are limited: fine-tuning methods must decide which pretrained parameters to update, and the best choice can depend on the source and target distributions \citep{kumari2023customdiffusion, choi2024autoft}.

The attention decomposition in \eqref{eqn:attention} gives a more specific hypothesis.
Within an isolated branch, QK controls spatial weights and VO transforms and transports features; the MLP admits a key--value representation for tokenwise updates.
This motivates the hypothesis that source QK may remain compatible with adapted VO and MLP parameters when source and target have similar spatial context, with larger shifts associated with reduced compatibility.
Source-QK compatibility may also be directional: a broadly trained source may support contextual relations needed by a narrower target, whereas the reverse transfer need not hold.
We probe these hypotheses by adapting QK, VO, and MLP jointly to each target, then replacing one or more adapted groups with their source weights without further optimization.
The eight resulting hybrids show how much each subset of jointly learned changes matters for reproducing the jointly adapted model.

\paragraph{Training and endpoint selection.}
We use the 20M-class DiT of Appendix~\ref{app:exp1-2} on $64\times64$ RGB images: $L=10$, $d_h=320$, $\tilde H=10$, and MLP hidden width 1280.
The twelve directions are CelebA$\to$AAHQ/FFHQ/AFHQ-v2/STL-10, STL-10$\to$AAHQ/FFHQ/AFHQ-v2/CelebA, and ImageNet64$\to$AAHQ/FFHQ/AFHQ-v2/CelebA.
Source training uses seed 0; adaptation starts from EMA checkpoints at source epochs 400, 800, and 800, respectively, and is repeated with two adaptation seeds.
Only the attention QK and VO projections along with MLP block weights are updated jointly.
Patch, position, and time embeddings, AdaLN, the output head, and other parameters stay fixed.
Freezing AdaLN keeps its learned time conditioning while holding the denoising schedule constant; it assumes that this aspect of time dependence transfers across datasets.

Training uses the common VP objective, global batch 32, AdamW with learning rate $2\times10^{-4}$, and a fresh optimizer for adaptation.
For source and target training-set sizes $N_{\mathrm{src}},N_{\mathrm{tgt}}$, one reference epoch is $s_{\mathrm{ref}}=\min(\lfloor N_{\mathrm{src}}/32\rfloor,\lfloor N_{\mathrm{tgt}}/32\rfloor)$ updates; adaptation stops by 200 reference epochs, with checkpoints every 25.
For each direction, we select the checkpoint minimizing the joint model's two-seed mean FID-50k on the monitored target split, with earlier epochs breaking ties, and use that same reference epoch for both seeds (Table~\ref{tab:exp3-endpoints}).
Each eight-state hybrid cube uses the selected checkpoints, EMA weights, and the same 1,024 Gaussian initial latents per state, sampled in FP32 with 50-step VP Heun.
Note that since the three architectural components are adapted jointly, these eight-state swaps measure compatibility among jointly adapted components; they do not measure what independently fine-tuned restricted models could learn.

\paragraph{Scope.}
Component transplantation diagnoses compatibility among parameter groups obtained by joint adaptation; it neither determines their performance when optimized in isolation nor uniquely assigns computation to components in a multilayer residual network.
Separate restricted-training comparisons are reported in Appendix~\ref{app:exp3-trajectories}.
Spatial MMD, EA, and SA are descriptive proxies for dataset geometry and endpoint agreement, not direct measurements of denoising context.
The twelve transfer directions share datasets and pretrained sources, and the component contrasts share cube terms; their correlations are descriptive rather than independent-sample estimates of a population relationship.

\begin{table}[!htbp]
\centering
\small
\begin{tabular}{lrr@{\qquad}lrr}
\toprule
Transfer & Epoch & FID & Transfer & Epoch & FID \\
\midrule
CelebA$\to$AAHQ & 75 & 40.767 & STL-10$\to$AAHQ & 150 & 40.592 \\
CelebA$\to$FFHQ & 100 & 25.252 & STL-10$\to$FFHQ & 100 & 28.231 \\
CelebA$\to$AFHQ-v2 & 200 & 44.821 & STL-10$\to$AFHQ-v2 & 200 & 49.119 \\
CelebA$\to$STL-10 & 200 & 68.050 & STL-10$\to$CelebA & 200 & 17.262 \\
ImageNet64$\to$AAHQ & 150 & 34.493 & ImageNet64$\to$FFHQ & 100 & 22.640 \\
ImageNet64$\to$AFHQ-v2 & 200 & 38.840 & ImageNet64$\to$CelebA & 100 & 14.620 \\
\bottomrule
\end{tabular}
\caption{Selected joint-adaptation endpoints. Epochs use the smaller source/target training set as reference; FID is the two-seed mean FID-50k used for checkpoint selection. STL-10 and ImageNet64 source checkpoints are both from epoch 800.}
\label{tab:exp3-endpoints}
\end{table}

\subsubsection{Spatial MMD for dataset similarity}\label{app:exp3-spatial-mmd}
To describe spatial similarity between unpaired real-image datasets, we compare distributions of whole-image patch self-similarity descriptors using maximum mean discrepancy (MMD).
For an individual clean image $x_0$, let $u_m(x_0)$ be its final DINOv2 ViT-B/14 feature at patch position $m$ after square resize to $224\times224$, yielding a $16\times16$ patch grid \citep{oquab2023dinov2}.
For each feature channel, subtract its mean over the 256 patch positions and divide by its spatial sample standard deviation plus $10^{-6}$; then normalize each patch vector to unit length, obtaining $\widehat u_m(x_0)$.
For patch positions $m,l\in[256]$, we form $K_{x_0}(m,l)=\widehat u_m(x_0)^\top\widehat u_l(x_0)$ and let $v(x_0)\in\mbb{R}^{32640}$ collect the upper off-diagonal entries of $K_{x_0}$ in a fixed raster order.
Thus each descriptor retains \emph{where} pairs of patches have similar features.
Patch self-similarity has been used to describe image structure \citep{tumanyan2022splicing}, and spatially normalized patch features have been studied in diffusion representation alignment \citep{singh2025spatialalignment}; applying MMD to these descriptors for cross-dataset comparison is our construction.

For descriptors $v,v'$, we use the Gaussian kernel $\kappa^{\mrm{sp}}(v,v')=\exp[-\|v-v'\|_2^2/(2\cdot32640\,\tau^2)]$.
We fix $\tau^2$ to the median of $\|v-v'\|_2^2/32640$ over all distinct pairs in a balanced calibration bank of 128 held-out images from each of six datasets, separately for normalized and raw patch descriptors.
Let $A=\{a_i\}_{i=1}^{n_A}$ and $B=\{b_j\}_{j=1}^{n_B}$ be independent evaluation banks of real-image descriptors. For distinct datasets, $n_A=n_B=1000$; for a diagonal cell, the banks are disjoint halves of one dataset and $n_A=n_B=500$. Conditional on the calibrated kernel bandwidth, the following is an unbiased estimator of the population squared MMD between the corresponding distributions of $v(X_0)$ \citep{gretton2012kernel}:
\begin{align}
\widehat{\mathrm{MMD}}_{\mathrm{sp}}^2(A,B)
&=\frac{\sum_{i\ne j}\kappa^{\mrm{sp}}(a_i,a_j)}{n_A(n_A-1)}
+\frac{\sum_{i\ne j}\kappa^{\mrm{sp}}(b_i,b_j)}{n_B(n_B-1)}
-\frac{2\sum_{i,j}\kappa^{\mrm{sp}}(a_i,b_j)}{n_A n_B}.
\label{eq:exp3-spatial-mmd}
\end{align}
Smaller values indicate more similar distributions of these fixed-grid descriptors, not necessarily more similar physical geometry or semantic content.
The estimate may be slightly negative for finite banks.
Figure~\ref{fig:exp3-dataset-geometry} shows the spatially normalized version, with each diagonal comparing two disjoint halves of the same dataset.

\begin{figure}[!htbp]
\centering
\includegraphics[width=0.8\textwidth]{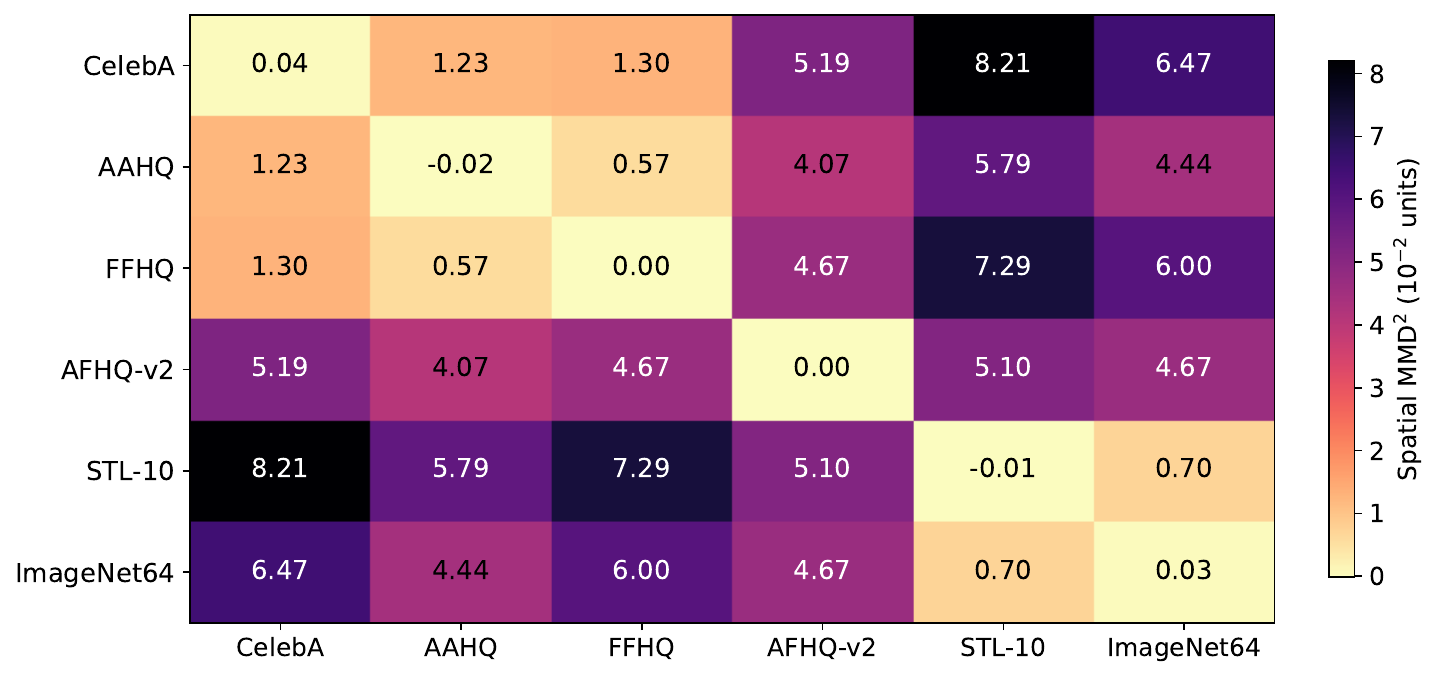}
\caption{\textbf{Spatial self-similarity distance between real datasets.} Unbiased Gaussian-kernel MMD$^2$ of spatially normalized, fixed-grid DINOv2 patch descriptors. Cell and colorbar values are in units of $10^{-2}$. Off-diagonal cells compare 1,000 unpaired images from each dataset; diagonal cells compare disjoint 500-image halves. Lower means more similar descriptor distributions.}
\label{fig:exp3-dataset-geometry}
\end{figure}

\subsubsection{Endpoint agreement and component contrasts}\label{app:exp3-endpoint-agreement}
For each transfer direction and adaptation seed, write $\mathbf b=(b_Q,b_V,b_M)\in\{0,1\}^3$ for a hybrid state in QK/VO/MLP order: 0 restores source weights and 1 retains jointly adapted weights. A bit string such as $011$ abbreviates $\mathbf b=(0,1,1)$.
Let $x_0^{(\mathbf b)}(\xi_i)$ be the clean image generated by hybrid $\mathbf b$ from the $i$th shared Gaussian sampling initialization $\xi_i$; $x_0^{(000)}(\xi_i)$ is the source output and $x_0^{(111)}(\xi_i)$ is the joint-adaptation output.
Visual comparisons use the same latents but can be ambiguous, so we also measure their agreement with the paired joint endpoint.
Let $f_{\mathrm{g}}(x_0)$ be the unit-normalized final DINOv2 class token and let $v(x_0)$ be the spatial descriptor above.
For $N=1024$ matched latents, define
\begin{align}
\mcl{E}_{\mathbf b}^{\mathrm{g}}&=\frac1N\sum_{i=1}^N\|f_{\mathrm{g}}(x_0^{(\mathbf b)}(\xi_i))-f_{\mathrm{g}}(x_0^{(111)}(\xi_i))\|_2^2,
&\mathrm{EA}_{\mathbf b}&=1-\frac{\mcl{E}_{\mathbf b}^{\mathrm{g}}}{\mcl{E}_{000}^{\mathrm{g}}},\\
\mcl{E}_{\mathbf b}^{\mathrm{sp}}&=\frac1N\sum_{i=1}^N\frac{\|v(x_0^{(\mathbf b)}(\xi_i))-v(x_0^{(111)}(\xi_i))\|_2^2}{32640},
&\mathrm{SA}_{\mathbf b}&=1-\frac{\mcl{E}_{\mathbf b}^{\mathrm{sp}}}{\mcl{E}_{000}^{\mathrm{sp}}}.
\label{eq:exp3-endpoint-agreement}
\end{align}
We compute EA only when $\mcl{E}_{000}^{\mathrm{g}}\ge0.01$ and SA only when $\mcl{E}_{000}^{\mathrm{sp}}\ge10^{-6}$.
Both scores set the source to 0 and the joint endpoint to 1; negative values are retained if a hybrid is farther from the joint endpoint than the source.
EA describes global feature agreement and SA describes agreement of patch self-similarity at fixed image coordinates.
SA and the dataset MMD use the same spatial descriptor but answer different questions: SA compares matched generated images, while MMD compares distributions of unpaired real images.

For either score $\mathrm{Agr}_{\mathbf b}\in\{\mathrm{EA}_{\mathbf b},\mathrm{SA}_{\mathbf b}\}$, finite differences across cube states quantify component dependence.
The penalty for reverting one adapted group from the joint model is $R_Q=\mathrm{Agr}_{111}-\mathrm{Agr}_{011}$, $R_V=\mathrm{Agr}_{111}-\mathrm{Agr}_{101}$, or $R_M=\mathrm{Agr}_{111}-\mathrm{Agr}_{110}$; we also use $R_{\min}=\min(R_Q,R_V,R_M)$ to ask whether all three reversions are costly.
The gain beyond adding the three isolated changes is
\begin{align*}
\mathrm{JNA}(\mathrm{Agr})&=\mathrm{Agr}_{111}-\mathrm{Agr}_{100}-\mathrm{Agr}_{010}-\mathrm{Agr}_{001}+2\mathrm{Agr}_{000},\\
\mathrm{TWI}(\mathrm{Agr})&=\mathrm{Agr}_{111}-\mathrm{Agr}_{110}-\mathrm{Agr}_{101}-\mathrm{Agr}_{011}
+\mathrm{Agr}_{100}+\mathrm{Agr}_{010}+\mathrm{Agr}_{001}-\mathrm{Agr}_{000}.
\end{align*}
Here JNA measures joint non-additivity relative to the three isolated gains, while TWI isolates the three-way interaction after subtracting pairwise interactions.
Positive $R_{\min}$ means each single reversion lowers endpoint agreement; positive JNA means the joint gain exceeds the sum of isolated gains.
All contrasts are computed within each seed's cube before averaging seeds.
Figure~\ref{fig:exp3-cube-agreement} gives the two agreement scores for every hybrid state and transfer direction; interpretation of their component patterns follows in the next section.

\begin{figure}[!htbp]
\centering
\includegraphics[width=\textwidth]{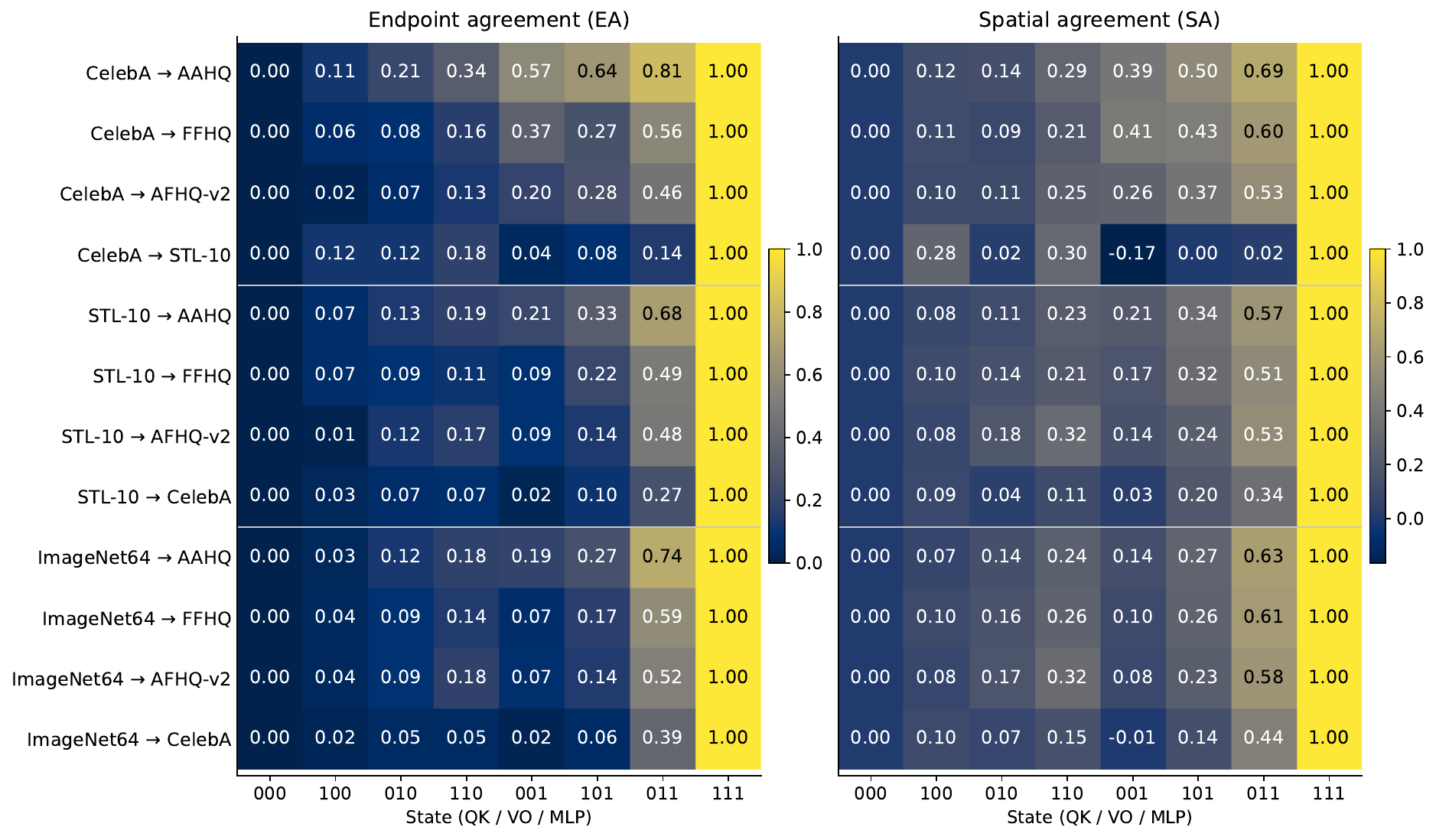}
\caption{\textbf{Agreement of all hybrid states with the jointly adapted endpoint.} Global-feature endpoint agreement EA (left) and relative spatial endpoint agreement SA (right), averaged over two adaptation seeds. Each row is a transfer direction, and columns give QK/VO/MLP bits (0: source, 1: jointly adapted). The same 1,024 noise latents are used for all eight states of each cube. Each score is 0 for the source and 1 for the joint endpoint; negative values are possible.}
\label{fig:exp3-cube-agreement}
\end{figure}

\subsubsection{Results and interpretation}
\label{app:exp3-results}

\paragraph{(1) Higher spatial similarity is associated with QK compatibility; larger shifts accompany coadaptation.}
Figure~\ref{fig:transfer-quantitative}(a) shows that spatial MMD$^2$ correlates with the QK reversion difference $\mathrm{Agr}_{111}-\mathrm{Agr}_{011}$ ($\rho_{\mathrm{EA}}=0.739$, $\rho_{\mathrm{SA}}=0.844$): source QK remains more compatible with adapted VO+MLP when dataset geometry is similar.
The selected CelebA$\to$AAHQ/FFHQ images retain recognizable faces in state 011, whereas CelebA$\to$STL-10 has little recognizable structure until joint adaptation (111; Figs.~\ref{fig:transfer-flagship} and~\ref{fig:exp3-extra-celeba64}).
Across all twelve nested state differences, this QK difference ranks first by the weaker of its two absolute correlations (Table~\ref{tab:exp3-nested-mmd}).
Figure~\ref{fig:exp3-coordination} extends the pattern to joint non-additivity, the minimum single-component reversion penalty, and three-way interaction: all increase with spatial distance ($\rho_{\mathrm{EA}}/\rho_{\mathrm{SA}}=0.634/0.655$, $0.739/0.844$, and $0.641/0.774$, respectively).
These observations are consistent with stronger component coadaptation across larger spatial shifts, subject to the dependence limitations noted above.

\begin{table}[!htbp]
\centering
\small
\setlength{\tabcolsep}{3pt}
\renewcommand{\arraystretch}{1.05}
\caption{Spatial distance and nested state differences. Spearman correlations with dataset spatial MMD$^2$ use 12 two-seed direction means for each score. Each row changes one component with its partners fixed. Rows are sorted by decreasing $\min(|\rho_{\mathrm{EA}}|,|\rho_{\mathrm{SA}}|)$. }
\label{tab:exp3-nested-mmd}
\begin{tabular*}{\linewidth}{@{\extracolsep{\fill}}llrr@{}}
\toprule
Interpretation & State difference & $\rho_{\mathrm{EA}}$ & $\rho_{\mathrm{SA}}$ \\
\midrule
\textbf{QK reversion penalty}$^\dagger$ & $\mathrm{Agr}_{111}-\mathrm{Agr}_{011}$ & \textbf{0.739} & \textbf{0.844} \\
MLP adaptation cost (VO adapted)$^\dagger$ & $\mathrm{Agr}_{010}-\mathrm{Agr}_{011}$ & 0.718 & 0.809 \\
VO adaptation cost (QK adapted) & $\mathrm{Agr}_{100}-\mathrm{Agr}_{110}$ & 0.771 & 0.683 \\
MLP-only adaptation cost & $\mathrm{Agr}_{000}-\mathrm{Agr}_{001}$ & 0.746 & 0.673 \\
MLP adaptation cost (QK adapted) & $\mathrm{Agr}_{100}-\mathrm{Agr}_{101}$ & 0.644 & 0.673 \\
VO reversion penalty & $\mathrm{Agr}_{111}-\mathrm{Agr}_{101}$ & 0.641 & 0.673 \\
VO-only adaptation cost & $\mathrm{Agr}_{000}-\mathrm{Agr}_{010}$ & 0.455 & 0.424 \\
QK adaptation cost (VO adapted) & $\mathrm{Agr}_{010}-\mathrm{Agr}_{110}$ & 0.676 & 0.378 \\
MLP reversion penalty & $\mathrm{Agr}_{111}-\mathrm{Agr}_{110}$ & 0.602 & 0.375 \\
QK adaptation gain (MLP adapted) & $\mathrm{Agr}_{101}-\mathrm{Agr}_{001}$ & 0.266 & 0.676 \\
QK-only adaptation gain & $\mathrm{Agr}_{100}-\mathrm{Agr}_{000}$ & 0.081 & 0.179 \\
VO adaptation cost (MLP adapted) & $\mathrm{Agr}_{001}-\mathrm{Agr}_{011}$ & 0.168 & 0.018 \\
\bottomrule
\end{tabular*}
\end{table}

\begin{figure}[!htbp]
    \centering
    \includegraphics[width=\textwidth]{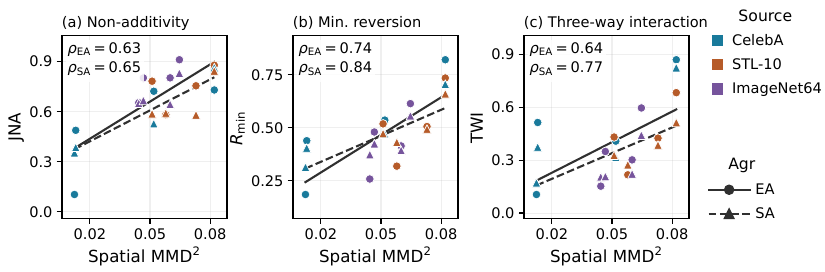}
    \caption{\textbf{Spatial distance and functional coordination.}
    Spatial MMD$^2$ versus (a) joint non-additivity, (b) minimum reversion penalty, and (c) three-way interaction, defined in Appendix~\ref{app:exp3-endpoint-agreement}.
    Points average two adaptation seeds; colors, markers, and lines follow Fig.~\ref{fig:transfer-quantitative}.}
    \label{fig:exp3-coordination}
\end{figure}

\paragraph{(2) VO dependence increases with functional coadaptation.}
In Fig.~\ref{fig:transfer-quantitative}(b), joint non-additivity correlates strongly with the VO reversion difference $\mathrm{Agr}_{111}-\mathrm{Agr}_{101}$ ($\rho_{\mathrm{EA}}=0.832$, $\rho_{\mathrm{SA}}=0.930$).
Table~\ref{tab:exp3-vo-nonadditivity} shows that this association exceeds those for reverting QK or MLP in both scores.
The generated images illustrate the same dependence: reverting VO in state 101 leaves fragmented texture for CelebA$\to$STL-10 and distorted outputs for STL-10/ImageNet64$\to$CelebA, compared with the coherent joint endpoints in 111 (Figs.~\ref{fig:transfer-flagship}, \ref{fig:transfer-stl10}, and~\ref{fig:transfer-imagenet64}).

\paragraph{(3) Broad-domain source QK retains partial compatibility on face targets.}
Figure~\ref{fig:transfer-quantitative}(c) shows partial source-QK agreement across AAHQ, FFHQ, and AFHQ for all three sources ($\mathrm{EA}_{011}=0.465$--$0.815$, $\mathrm{SA}_{011}=0.507$--$0.686$).
For STL-10$\to$CelebA, retained agreement $(\mathrm{EA}_{011},\mathrm{SA}_{011})=(0.267,0.340)$ exceeds the reciprocal CelebA$\to$STL-10 values $(0.143,0.021)$; ImageNet64$\to$CelebA also retains partial agreement $(0.387,0.444)$.
The selected natural-source images show varying face or animal structure with source QK in 011, while 111 generally yields clearer endpoints; some STL-10$\to$CelebA examples remain weak in 011 (Fig.~\ref{fig:transfer-stl10}).

\begin{table}[!htbp]
\centering
\small
\setlength{\tabcolsep}{3pt}
\renewcommand{\arraystretch}{1.05}
\caption{Component dependence and joint non-additivity. Spearman correlations between $\mathrm{JNA}_{\mathrm{Agr}}$ and each displayed difference use 12 two-seed direction means, separately for $\mathrm{Agr}=\mathrm{EA},\mathrm{SA}$. Note that since all quantities share cube terms, the differences between correlations are descriptive.}
\label{tab:exp3-vo-nonadditivity}
\begin{tabular*}{\linewidth}{@{\extracolsep{\fill}}llrr@{}}
\toprule
Reverted component & State difference & $\rho_{\mathrm{EA}}$ & $\rho_{\mathrm{SA}}$ \\
\midrule
VO & $\mathrm{Agr}_{111}-\mathrm{Agr}_{101}$ & \textbf{0.832} & \textbf{0.930} \\
QK & $\mathrm{Agr}_{111}-\mathrm{Agr}_{011}$ & 0.601 & 0.573 \\
MLP & $\mathrm{Agr}_{111}-\mathrm{Agr}_{110}$ & 0.664 & 0.063 \\
\bottomrule
\end{tabular*}
\end{table}

\subsubsection{Additional generated-image cubes}
\label{app:exp3-generated-cubes}
Figures~\ref{fig:exp3-extra-celeba64}--\ref{fig:transfer-imagenet64} extend Fig.~\ref{fig:transfer-flagship} with more latents and all eight component states per target.
They use the same layout and selected endpoints (Table~\ref{tab:exp3-endpoints}), with EMA weights, FP32 sampling, and 50-step VP Heun.

\begin{figure}[!htbp]
    \centering
    \includegraphics[width=\textwidth]{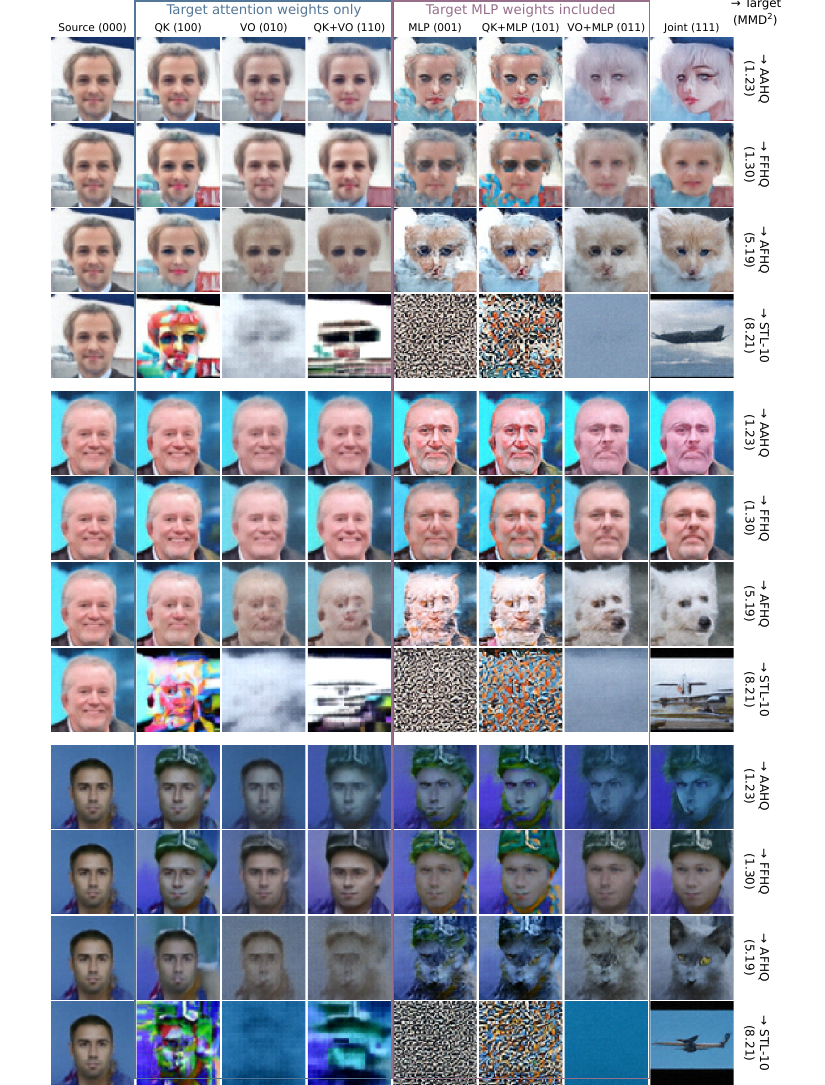}
    \caption{CelebA source: three additional latents across four targets.}
    \label{fig:exp3-extra-celeba64}
\end{figure}

\begin{figure}[p]
    \centering
    \includegraphics[width=\textwidth]{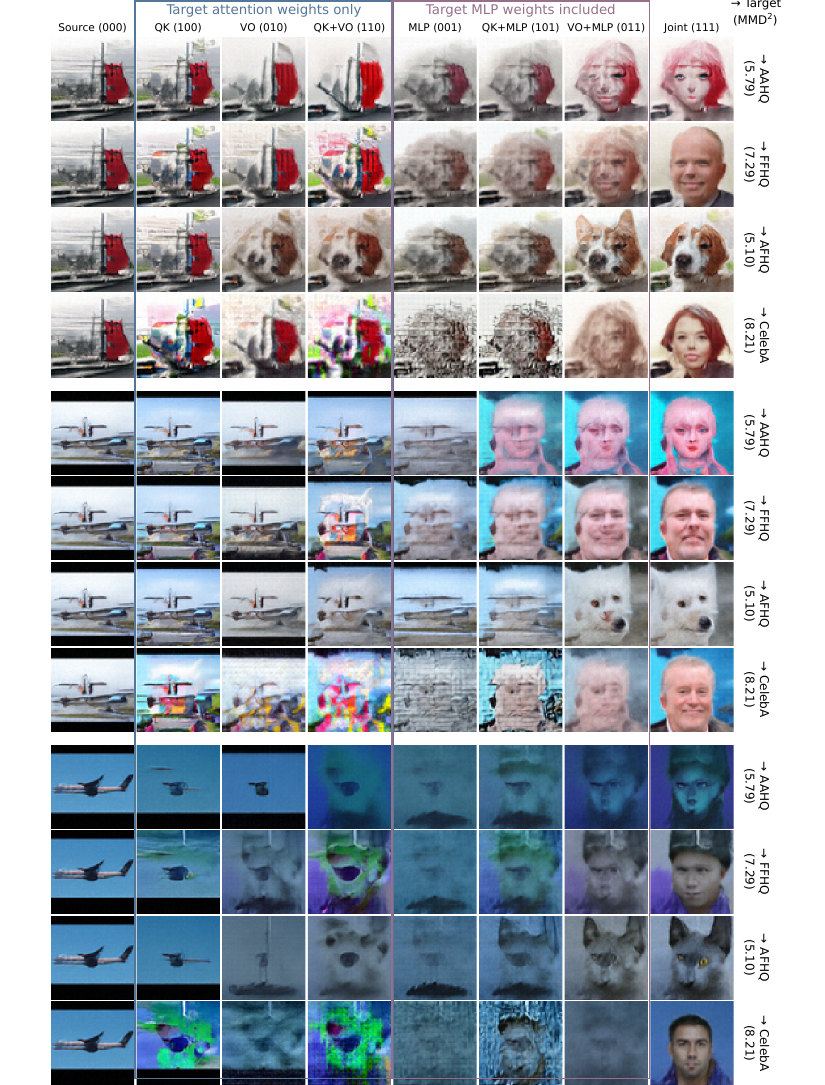}
    \caption{STL-10 source: three latents across four targets.}
    \label{fig:transfer-stl10}
\end{figure}

\begin{figure}[p]
    \centering
    \includegraphics[width=\textwidth]{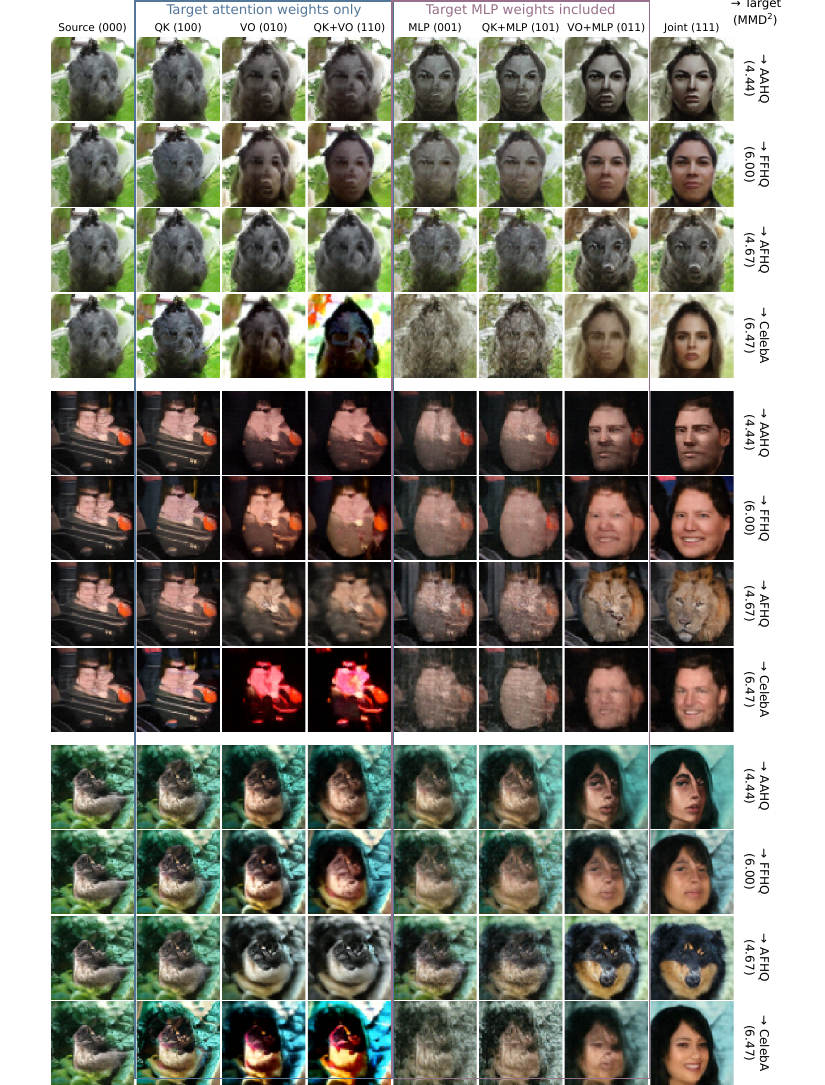}
    \caption{ImageNet64 source: three latents across four targets.}
    \label{fig:transfer-imagenet64}
\end{figure}

\clearpage
\subsubsection{Training trajectories and restricted fine-tuning}
\label{app:exp3-trajectories}
We compare joint QK/VO/MLP fine-tuning (111) with training all parameters from scratch and with independent fine-tuning restricted to QK or QK+VO, using the same target-domain update budget (Figs.~\ref{fig:exp3-trajectories-celeba64}--\ref{fig:exp3-trajectories-imagenet64}).
Joint fine-tuning is comparable to or better than scratch training in held-out denoising and usually in FID-50k; STL-10$\to$CelebA is an exception for FID at the selected joint endpoint.
QK-only and QK+VO-only fine-tuning generally lag further behind on larger domain shifts under the tested training protocol and update budget.
Each target has one row, with denoising AUC, VP velocity loss, and FID-50k in columns; lower is better.
The two loss axes use $10^{-2}$ units, as in Fig.~\ref{fig:shared-predictor-trajectories}.
Solid and dashed curves denote adaptation seeds 0 and 1, and dotted scratch curves use seed 0.
Vertical lines mark the selected joint endpoints; source pretraining is excluded from the target-domain update count.

\begin{figure}[!htbp]
    \centering
    \includegraphics[width=\textwidth]{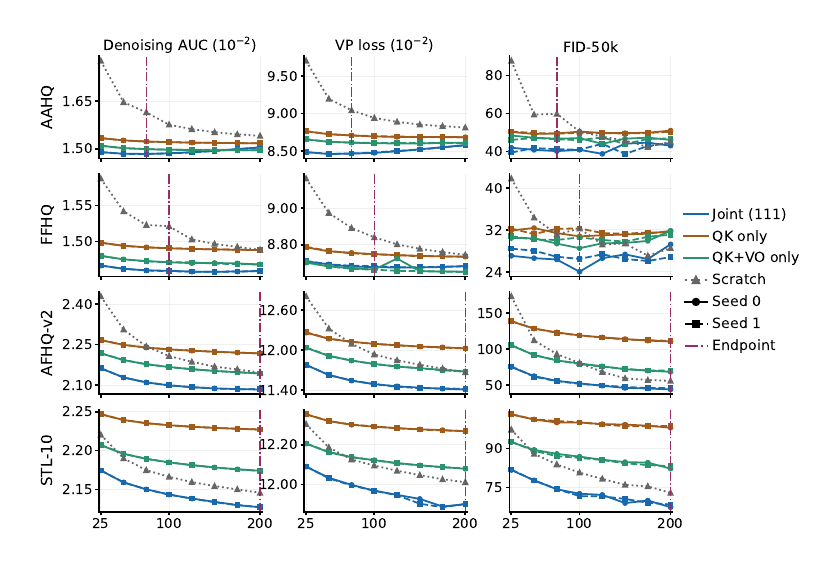}
    \caption{\textbf{CelebA-source adaptation trajectories.} Rows are AAHQ, FFHQ, AFHQ-v2, and STL-10 targets.}
    \label{fig:exp3-trajectories-celeba64}
\end{figure}

\begin{figure}[!htbp]
    \centering
    \includegraphics[width=\textwidth]{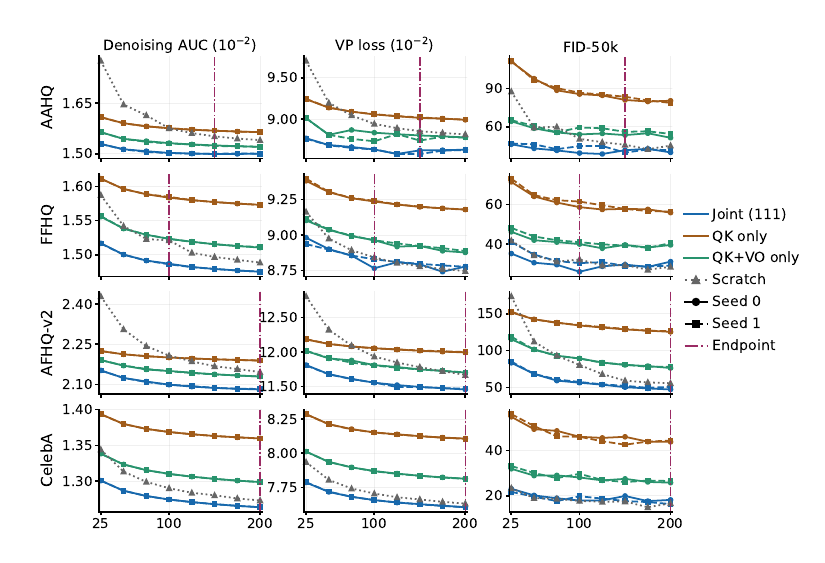}
    \caption{\textbf{STL-10-source adaptation trajectories.} The source checkpoint is from epoch 800.}
    \label{fig:exp3-trajectories-stl10}
\end{figure}

\begin{figure}[!htbp]
    \centering
    \includegraphics[width=\textwidth]{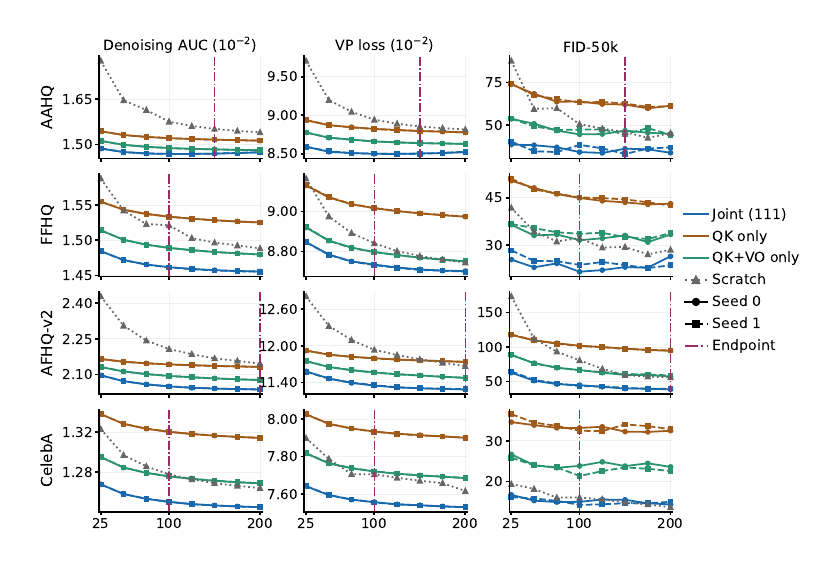}
    \caption{\textbf{ImageNet64-source adaptation trajectories.} The source checkpoint is from epoch 800; rows are AAHQ, FFHQ, AFHQ-v2, and CelebA targets.}
    \label{fig:exp3-trajectories-imagenet64}
\end{figure}

\clearpage
\endgroup
\end{document}